\setlist[enumerate]{leftmargin=1em}
\newcommand{\mfk}{\mathfrak}
\newcommand{\QC}{\mathcal{Q}}
\newcommand{\YC}{\mathcal{Y}}
\newcommand{\XC}{\mathcal{X}}
\newcommand{\SC}{\mathcal{S}}
\newcommand{\LC}{\mathcal{L}}
\newcommand{\RC}{\mathcal{R}}
\newcommand{\UC}{\mathcal{U}}
\newcommand{\VC}{\mathcal{V}}
\newcommand{\AC}{\mathcal{A}}
\newcommand{\BC}{\mathcal{B}}
\newcommand{\CC}{\mathcal{C}} 
\newcommand{\IC}{{\mathcal{I}}}
\newcommand{\HC}{{\mathcal{H}}}
\newcommand{\SCigma}{{\mathit{\Sigma}}}
\newcommand{\Am}{{\mathbf{A}}}
\newcommand{\Bm}{{\mathbf{B}}}
\newcommand{\Cm}{{\mathbf{C}}}
\newcommand{\sizeC}{\in\mathbb{C}}
\newcommand{\size}{\in\mathbb{R}}
\newcommand{\norm}[1]{\lVert#1\rVert}
\DeclareMathOperator{\fro}{\mathsf{F}}
\DeclareMathOperator{\op}{\mathsf{op}}
\DeclareMathOperator*{\tr}{tr}
\DeclareMathOperator*{\GL}{GL}
\DeclareMathOperator*{\argmin}{\arg\min}
\DeclareMathOperator*{\To}{T}
\DeclareMathOperator*{\ifft}{ifft}
\DeclareMathOperator*{\fft}{fft}
\DeclareMathOperator*{\rank}{rank}
\def\dist{\mathrm{dist}}
\def\distzero{\dist(\mathcal{L}_0,\mathcal{R}_0;\mathcal{L}_\star,\mathcal{R}_\star)}
\def\distk{\dist(\mathcal{L}_k,\mathcal{R}_k;\mathcal{L}_\star,\mathcal{R}_\star)}
\def\distsquarek{\dist^2(\mathcal{L}_k,\mathcal{R}_k;\mathcal{L}_\star,\mathcal{R}_\star)}
\def\distkplusone{\dist(\mathcal{L}_{k+1},\mathcal{R}_{k+1};\mathcal{L}_\star,\mathcal{R}_\star)}
\def\distsquarekplusone{\dist^2(\mathcal{L}_{k+1},\mathcal{R}_{k+1};\mathcal{L}_\star,\mathcal{R}_\star)}
\def\supp{\mathrm{supp}}
\newtheorem{theorem}{Theorem}
\newtheorem{lemma}{\textbf{Lemma}}
\newtheorem{assumption}{\textbf{Assumption}}
\newtheorem*{proof*}{\textbf{Proof}}
\newtheorem*{remark*}{\textbf{Remark}}
\newtheorem{definition}{\textbf{Definition}}
\newtheorem*{remark}{Remark}
\let\oldremark\remark
\renewcommand{\remark}{\oldremark\normalfont}
\begin{document}
\title{Learnable Scaled Gradient Descent for Guaranteed Robust Tensor PCA }
\author{Lanlan~Feng,
Ce~Zhu,~\IEEEmembership{Fellow,~IEEE,} %
Yipeng~Liu,~\IEEEmembership{Senior Member,~IEEE,}
Saiprasad Ravishankar,~\IEEEmembership{Senior Member,~IEEE,}
Longxiu~Huang
\thanks{This research is supported by the National Natural Science Foundation of China (NSFC) under Grant 62020106011, Grant 62171088, Grant W2412085, and Grant 62450131. Ce Zhu and Yipeng Liu are the corresponding authors.}
\thanks{Lanlan Feng (\href{mailto:lanfeng@std.uestc.edu.cn}{lanfeng@std.uestc.edu.cn}), Ce Zhu (\href{mailto:eczhu@uestc.edu.cn}{eczhu@uestc.edu.cn}) and Yipeng Liu (\href{mailto:yipengliu@uestc.edu.cn}{yipengliu@uestc.edu.cn}) are with the School of Information and Communication Engineering, University of Electronic Science and Technology of China (UESTC), Chengdu, 611731, China.}
\thanks{Saiprasad Ravishankar (\href{mailto:ravisha3@msu.edu}{ravisha3@msu.edu}) is with the Department of Computational Mathematics, Science and Engineering and the Department of Biomedical Engineering, Michigan State University (MSU), East Lansing, MI 48824, USA.}
\thanks{Longxiu Huang (\href{mailto:huangl3@msu.edu}{huangl3@msu.edu}) is with the Department of Computational Mathematics, Science and Engineering and the Department of Mathematics, MSU, East Lansing, MI 48824, USA.}
}

\markboth{ IEEE Journal, vol. xx, no. xx, Month Year}%
{Shell \MakeLowercase{\textit{et al.}}: Bare Demo of IEEEtran.cls for IEEE Journals}

\maketitle
\begin{abstract}
Robust tensor principal component analysis (RTPCA) aims to separate the low-rank and sparse components from multi-dimensional data, making it an essential technique in the signal processing and computer vision fields.  
Recently emerging tensor singular value decomposition (t-SVD) has gained considerable attention for its ability to better capture the low-rank structure of tensors compared to traditional matrix SVD.
 However, existing methods often rely on the computationally expensive tensor nuclear norm (TNN), which limits their scalability for
 real-world tensors.
To address this issue, we explore an efficient scaled gradient descent (SGD) approach within the t-SVD framework for the first time, and propose the RTPCA-SGD method.
Theoretically, we rigorously establish the recovery guarantees of RTPCA-SGD under mild assumptions, demonstrating that with appropriate parameter selection, it achieves linear convergence to the true low-rank tensor at a constant rate, independent of the condition number.
To enhance its practical applicability, we further propose a learnable self-supervised deep unfolding model, which enables effective parameter learning.
Numerical experiments on both synthetic and real-world datasets demonstrate the superior performance of the proposed methods while maintaining competitive computational efficiency, especially consuming less time than RTPCA-TNN.

\end{abstract}

\begin{IEEEkeywords}
Tensor singular value decomposition, tensor principal component analysis, scaled gradient descent, recovery guarantees, deep unfolding.
\end{IEEEkeywords}

\section{Introduction}
\IEEEPARstart{R}{obust} principal component analysis (RPCA) \cite{candes2011robust} is a foundational technique for separating a low-rank matrix and a sparse matrix from two-dimensional data, making it highly effective across a variety of applications, including surveillance\cite{sobral2017matrix}, anomaly detection\cite{shyu2003novel}, and more\cite{bouwmans2018applications}.
However, with the advancements in technology and the improvements in data acquisition, real-world data has become increasingly complex, often exhibiting high-dimensional characteristics.
Conventional matrix-based RPCA approaches often struggle to maintain the natural multidimensional relationships present in such data, leading to potential information loss.
\begin{figure}
     \centering
     \includegraphics[width=0.9\linewidth]{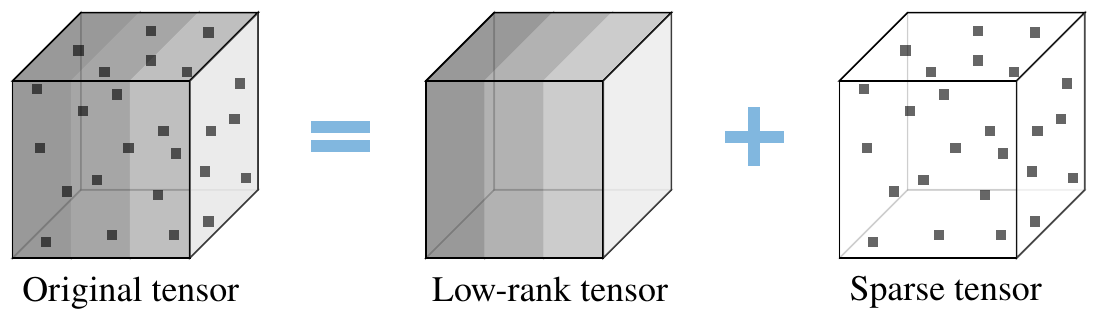}
     \vspace{-0.5em}
     \caption{Illustration of RTPCA model.}
     \label{fig: RTPCA}
     \vspace{-0.8em}
 \end{figure}
 
 Tensors\cite{kolda2009tensor}, as higher-order extensions of vectors and matrices, offer powerful data structures that capture interactions across multiple dimensions, making them essential in many fields such as data science, machine learning, and signal processing \cite{sidiropoulos2017tensor}. 
 Their multidimensional framework reveals unique properties that can be harnessed for a wide range of analytical and processing tasks such as tensor completion, tensor principal component analysis, and tensor regression \cite{liu2022tensor}. Robust tensor principal component analysis (RTPCA)\cite{lu2016tensor,zhang2014novel,lu2019tensor} extends RPCA to the tensor version, aiming to recover the low-rank tensor $\mathcal{X}_\star$ and sparse tensor $\mathcal{S}_\star$ from the observed data $\mathcal{Y}$, as shown in Fig.~\ref{fig: RTPCA}. Mathematically, it is formulated as 
 \begin{equation}
     \mathcal{Y} = \mathcal{X}_\star +\mathcal{S}_\star.
     \label{eq: rtpca0}
 \end{equation}
  
The performance of RTPCA varies significantly depending on the chosen tensor decomposition method. Unlike matrices, which typically rely on singular value decomposition (SVD), there are a variety of decomposition techniques for tensors, including canonical polyadic (CP) decomposition~\cite{harshman1970foundations,de2008tensor}, Tucker decomposition~\cite{tucker1966some,mu2014square}, tensor singular value decomposition (t-SVD)\cite{lu2016tensor, zhang2014novel,lu2019tensor}, and tensor networks (e.g., tensor tree, tensor train, tensor ring)\cite{ballani2014tree, oseledets2011tensor, zhao2016tensor}.
Among these, t-SVD has demonstrated particular effectiveness in many image processing applications due to its ability to capture low-rank structures in the Fourier domain for tensors~\cite{hao2013facial}. This paper focuses on t-SVD, which introduces a specialized tensor multiplication operation known as the t-product (detailed in Definition \ref{def: tproduct}). 
Building upon the t-product, t-SVD (detailed in Definition \ref{def: t-SVD}) can factorize an order-3 tensor into two orthogonal tensors and an \textit{f}-diagonal tensor. 
Accordingly, its tensor rank called tubal rank has been defined (detailed in  Definition \ref{def: tubal rank}), and the tensor nuclear norm (TNN) (detailed in  Definition \ref{def: TNN}) has been developed for low-tubal-rank approximation.
Under the t-SVD framework, Lu \textit{et al.} \cite{lu2016tensor,lu2019tensor} explored the corresponding  RTPCA model and provided a theoretical guarantee for exact recovery. The RTPCA model can be formulated as
    \begin{equation}
		\min\limits_{\mathcal{X}, \mathcal{S}} ||\mathcal{X}||_* + \lambda||\mathcal{S}||_{1},\ \ 
		\text{s.t. } \mathcal{Y} = \mathcal{X} + \mathcal{S}.
		\label{RTPCA_TNN}
	\end{equation}
     where $\|.\|_*$ denotes the TNN and $\|.\|_1$  is  the $\ell_1$ norm.
While some researchers have investigated different sparse constraints to tackle various forms of sparsity across diverse applications \cite{Zhou_2_1norm, zhang2014novel,sun2019lateral}, mainstream research has primarily concentrated on enhancing low-rank constraints\cite{kernfeld2015tensor, liu2017fourth, xu2019fast, song2020robust,jiang2023dictionary,hu2016twist,liu2018improved,zheng2019mixed,jiang2020multi,mu2020weighted,kong2018t,gao2020enhanced,wu2022low,wang2021generalized,zheng2020tensor,qin2022low, feng2023multiplex,liu2024revisiting}.
However, the aforementioned methods rely on TNN and require computationally intensive full t-SVD computation in each iteration, resulting in considerable overhead, particularly when applied to large-scale datasets.
To tackle this challenge, decomposition-based approaches have been studied for tensor completion \cite{zhou2017tensor,du2021unifying,liu2024low} and RTPCA \cite{wang2020faster}, factorizing low-rank tensors into smaller components connected by the t-product. However, these methods often lack theoretical recovery guarantees or are limited to symmetric positive semi-definite (PSD) tensors.

Recently, scaled gradient descent (ScaledGD or SGD) \cite{tong2021accelerating, dong2023fast} has emerged for its advantage of low per-iteration
cost and strong theoretical guarantee for exact recovery. Motivated by this, this paper develops an efficient ScaledGD method within the t-SVD framework for the RTPCA problem with guaranteed recovery, termed RTPCA-SGD. Specifically, it first factorizes the low-rank component into two smaller factors connected by the t-product. These two factors are updated by the gradient descent algorithm, and  a scaling factor is incorporated into it to eliminate the dependence of convergence rate on the condition number $\kappa$.
Moreover, inspired by the success of the deep-unfolding technology for the RPCA/RTPCA methods \cite{cai2021learned, dong2023deep}, we further propose a learnable self-supervised deep unfolding model termed RTPCA-LSGD to facilitate effective parameter learning.  
The key contributions of this paper are summarized as follows:
\begin{itemize}
    \item To the best of our knowledge, this is the first work to explore the ScaledGD approach within the t-SVD framework. The proposed RTPCA-SGD method is computationally more efficient compared to RTPCA-TNN, as it eliminates the need for full t-SVD computation in each iteration. The complexity of our method per iteration is $O(I_1 I_2 I_3 \log \left(I_3\right) + \lceil \frac{I_3+1}{2} \rceil I_1I_2R)$, which is significantly lower than that of RTPCA-TNN when $R \ll \min(I_1, I_2)$.  Notably, the proposed method is designed to handle general asymmetric cases, and can be seamlessly applied to symmetric PSD tensors without loss of generality.
    % not limited to symmetric PSD tensors 
    \item Theoretically, we establish rigorous exact recovery guarantees for RTPCA-SGD, as detailed in Theorem~\ref{thm:main_theorem}. Compared to the matrix-based RPCA setting \cite{tong2021accelerating,cai2021learned} and the Tucker-based approach \cite{dong2023fast}, our analysis relies on weaker assumptions for both the low-rank and sparse components. With appropriate parameter selection, the proposed algorithm is shown to converge linearly to the true low-rank tensor at a constant rate, unaffected by the condition number.
    \item To further enhance the applicability and performance of our method in practical settings, we introduce a scalable, self-supervised deep unfolding model called RTPCA-LSGD. This model effectively learns the parameters of the proposed RTPCA-SGD algorithm without requiring ground truth data. 
    \item Numerical experiments validate the effectiveness of our methods on both synthetic and real-world datasets. Notably, RTPCA-SGD outperforms state-of-the-art methods in recovery accuracy while maintaining comparable computational efficiency, especially consuming less time than RTPCA-TNN. Furthermore, RTPCA-LSGD, leveraging deep unfolding for learnable parameter optimization, delivers additional performance enhancements.
\end{itemize}

The rest of this paper is organized as follows: Section \ref{sec:related works} reviews the related works. Section \ref{sec: Notations and Preliminaries} introduce some basic notations and preliminaries. In Section \ref{sec: methods}, we present the proposed RTPCA-SGD and RTPCA-LSGD methods in detail. Section \ref{sec: Theoretical Results} outlines the theoretical results for the proposed methods. In section \ref{section_experiment}, we conduct some experiments to verify our results in theory and apply the proposed methods to video denoising and background initialization tasks.
 Finally, we give the conclusions and discuss future directions in Section \ref{sec: conclusions}.

\section{Related works}\label{sec:related works}
Tensor decompositions can be classified into several categories, including CP decomposition~\cite{harshman1970foundations,de2008tensor}, Tucker decomposition~\cite{tucker1966some,mu2014square}, t-SVD~\cite{lu2016tensor, zhang2014novel,lu2019tensor}, and tensor networks (e.g., tensor tree, tensor train, tensor ring)\cite{ballani2014tree, oseledets2011tensor, zhao2016tensor}.
CP decomposition represents a multi-dimensional tensor as the sum of rank-one tensors, with the minimal number of such tensors defining the CP rank. However, accurately estimating the CP rank is computationally challenging and considered NP-hard~\cite{de2008tensor, zhao2015bayesian}.
 Tucker decomposition factorizes a tensor into a core tensor along with a set of factor matrices for each mode, with the ranks of these factor matrices organized into a vector known as the Tucker rank. 
 The extreme imbalance in dimensions (rows and columns) for its unfolding matrices will often result in poor performance \cite{mu2014square}. 
This paper mainly focuses on the t-SVD, which has proven to be highly effective in various image processing applications. Its strength lies in capturing low-rank structures within the Fourier domain for tensors.
The t-SVD framework factorizes an order-3 tensor into two orthogonal tensors and an \textit{f}-diagonal tensor, connected by the t-product. 
This operation involves circular convolution between two tubal fibers, simplifying to a dot product in the Fourier domain~\cite{kilmer2011factorization, zhang2014novel, lu2016tensor}. Based on the t-product,  t-SVD can be efficiently computed by a combination of fast Fourier transform (FFT) along the 3rd mode and multiple matrix SVDs along the 1st and 2nd modes, Leveraging the conjugate symmetry of the Fourier transform, the number of required matrix SVDs is effectively halved, significantly reducing computational complexity~\cite{lu2019tensor, feng2020robust}.
Under the t-SVD framework, theoretical guarantee for the exact recovery has been provided for the TNN-based RTPCA \cite{lu2019tensor} problem and tensor completion problem \cite{zhang2016exact}.
The low-rank constraint has been widely extended by incorporating alternative invertible transforms~\cite{kernfeld2015tensor, liu2017fourth, xu2019fast, song2020robust,jiang2023dictionary}, alleviating rotation sensitivity \cite{hu2016twist,liu2018improved,zheng2019mixed}, developing non-convex versions \cite{jiang2020multi,mu2020weighted,kong2018t,gao2020enhanced,wu2022low,wang2021generalized}, enabling high-order generalizations\cite{zheng2020tensor,qin2022low, feng2023multiplex,liu2024revisiting}, fusing low-rankness and smoothness \cite{wang2023guaranteed}, etc.  
However, these mainstream methods are all based on TNN and suffer from heavy computational overhead due to the t-SVD computation required in each iteration, making them inefficient, particularly for large-scale datasets.

To mitigate this challenge, decomposition-based approaches have gained considerable attention by factorizing low-rank tensors into smaller components linked by t-product.
Zhou \textit{et al.} \cite{zhou2017tensor} introduced tensor bi-factor factorization for tensor completion and employed the alternating minimization algorithm for solution, effectively bypassing the computational burden of SVDs.  Building on this, Du \textit{et al.} \cite{du2021unifying} integrated tensor bi-factor factorization with TNN regularization into a unified framework for tensor completion. These two methods have proved that the proposed algorithm converges to a Karush-Kuhn-Tucker (KKT) point.
For RTPCA, Wang \textit{et al.} \cite{wang2020faster}  proposed triple 
factorization, and established the sub-optimality of the proposed non-convex augmented Lagrangian method. However, these methods lack theoretical recovery guarantees.
Liu  \textit{et al.} \cite{liu2024low} extended the Burer-Monteiro method to the tensor version, followed by solving tensor completion problem through factorized gradient descent (FGD). Theoretical guarantees have been provided for the recovery performance of the FGD method,  demonstrating a linear convergence rate. 
However, this approach is limited to symmetric positive PSD tensors, and the convergence rate depends linearly on the condition number $\kappa$ of the low-rank component.

Recently, ScaledGD or SGD was introduced for the matrix RPCA problem \cite{tong2021accelerating}, incorporating a scaling factor in gradient descent steps to eliminate the convergence rate's dependence on the condition number $\kappa$, while maintaining the low per-iteration cost of gradient descent. Notably, it is designed for general matrices rather than being limited to symmetric PSD matrices. 
Subsequently, Dong \textit{et al.}\cite{dong2023fast} extended ScaledGD to the RTPCA problem under the Tucker decomposition and provided the recovery guarantee theory. RPCA-SGD and Tucker-SGD face practical challenges due to their sensitivity to hyperparameter settings, such as learning rates and thresholds.
To tackle this issue, the deep unfolding technique has been applied to RPCA problem in diverse areas such as ultrasound imaging \cite{solomon2019deep}, background subtraction \cite{van2021deep,joukovsky2023interpretable}, infrared small target detection\cite{wu2024rpcanet}.
It transforms iterative algorithms into deep neural network architectures, enabling parameter learning through backpropagation and enhancing the original algorithm's performance.
Cai \textit{et al.} \cite{cai2021learned} designed a supervised deep unfolded architecture for RPCA-SGD to learn a set of parameters, which can be extended to accommodate an infinite number of RPCA iterations. 
Herrera \textit{et al.} \cite{herrera2020denise} proposed an unsupervised model only for PSD
low-rank matrices. Dong \textit{et al.} \cite{dong2023deep} then considered the self-supervised model for Tucker-SGD, which expunges the need for ground truth labels.

\section{Notation and Preliminaries}
\label{sec: Notations and Preliminaries}
Firstly, some basic notations about this work are given briefly.  A scalar, a vector, a matrix, and a tensor are denoted by  $a$, $\mathbf{a}$, $\mathbf{A}$, and $\mathcal{A}$, respectively.   $\mathbb{R}$ represents the field of real numbers while $\mathbb{C}$ is the field of complex numbers. 

 For a matrix $ \mathbf{A}$, its spectral norm is defined as $\lVert \mathbf{A} \rVert_2 = \text{max}_{i}\mathit{\sigma}_{i}$, where $\mathit{\sigma}_{i}$ are the singular values of $ \mathbf{A}$.
 $\norm{\mathbf{A}}_{2,\infty}$ represents the largest $\ell_2$ norm of the rows. The rank is defined as the number of the nonzero singular values of $ \mathbf{A} $, and is denoted as $\operatorname{rank}(\mathbf{A})$.

For an order-3 tensor $\mathcal{A} \size^{I_1 \times I_2 \times I_3}$, $\mathcal{A}(i_1,:,:)$, $\mathcal{A}(:,i_2,:)$ and $\mathcal{A}(:,:,i_3)$ represent the ${i_1}$-th horizontal, $ i_2 $-th lateral and $ i_3 $-th frontal slices, respectively. The $ i_3 $-th frontal slice can be abbreviated as $\mathbf{A}^{(i_3)}$. Moreover, $\mathcal{A}(:, i_2, i_3)$ , $\mathcal{A}(i_1, :, i_3 )$  and $\mathcal{A}(i_1, i_2, : )$ denotes the row fiber, column and tubal fiber of $\mathcal{A}$, repectively. 
Additionally, its $(i_1,i_2,i_3)$-th element is denoted as $a_{i_1,i_2, i_3}$ or $[\mathcal{A}]_{i_1,i_2,i_3}$.
We use $\widehat{\mathcal{A}} = \fft (\mathcal{A}, [], 3)  \size^{I_1 \times I_2 \times I_3} $ to represent the fast Fourier transform (FFT) along the 3-rd mode of $\mathcal{A}$. In the same way, $\mathcal{A} = \ifft (\widehat{\mathcal{A}}, [], 3)$ stands for the inverse operation.
$ \operatorname{conj}(\mathcal{A})$ is defined as the complex conjugate of $\mathcal{A}$, which takes the complex conjugate of each element. 
We denote 
$\lceil t \rceil$ as the nearest integer greater than or equal to $t$.
Moreover, $a \lor b$ represents the larger value of $a$ and $b$.

Given a tensor $\mathcal{A} \size^{I_1 \times I_2 \times I_3}$, its block circulant matrix is
$$
\operatorname{bcirc}(\mathcal{A})=\left[\begin{array}{cccc}
\Am^{(1)} & \Am^{\left(I_3\right)} & \cdots & \Am^{(2)} \\
\Am^{(2)} & \Am^{(1)} & \cdots & \Am^{(3)} \\
\vdots & \vdots & \ddots & \vdots \\
\Am^{\left(I_3\right)} & \Am^{\left(I_3-1\right)} & \cdots & \Am^{(1)}
\end{array}\right]\in \mathbb{R}^{I_1I_3 \times I_2I_3}.
$$
Moreover, we define the block diagonal matrix $\widehat{\Am}$ as 
$$
\widehat{\Am}=\operatorname{bdiag}(\widehat{\mathcal{A}})=\left[\begin{array}{llll}\widehat{\Am}^{(1)} & & & \\ & \widehat{\Am}^{(2)} & & \\ & & \ddots & \\ & & & \widehat{\Am}^{\left(I_3\right)}\end{array}\right] \in \mathbb{C}^{I_1I_3 \times I_2I_3}.
$$

\begin{definition}\label{def: tproduct}[\textbf{T-product}]
Given two tensors $\AC \size^{I_1 \times I_2 \times I_3}$ and $\BC \size^{I_2 \times J \times I_3}$, the t-product between them is defined as 
\begin{equation}\label{equ: t-product}
   \CC = \AC * \BC \size^{I_1 \times J \times I_3}, 
\end{equation}
which can be computed by 
\begin{equation}
    \CC(i,j,:) = \sum_{k=1}^{I_2} \AC(i,k,:) \bullet \BC(k,j,:),
\end{equation}
where $\bullet$ denotes the circular convolution between two tubal fiber vectors. 
\end{definition}
It has been proven in detail in \cite{lu2019tensor, feng2020robust} that this t-product operator can be calculated in the Fourier domain as 
\begin{equation}
   \widehat{\Cm}^{(i_3)} =\widehat{\Am}^{(i_3)}\widehat{\Bm}^{(i_3)},
\end{equation}
In addition, according to the conjugate symmetry property of the Fourier transform, 
we  only need to compute $\lceil \frac{I_3+1}{2}\rceil$ matrix multiplications for t-product \cite{lu2019tensor} as follows
\begin{equation} \label{fft}
     \widehat{\mathbf{C}}^{(i_3)}=
        \begin{cases}
\widehat{\mathbf{A}}^{(i_3)}\widehat{\mathbf{B}}^{(i_3)}, \ & i_3=1,\cdots, \lceil \frac{I_3+1}{2}\rceil,\\
        \operatorname{conj} (\widehat{\mathbf{C}}^{(I_3-i_3+2)}), \ & i_3=\lceil \frac{I_3+1}{2}\rceil+1,\cdots, I_3.
        \end{cases}
\end{equation}

\begin{definition}[\textbf{Tensor Frobenius norm}] \cite{lu2019tensor}
Given an order-$3$ tensor  $\mathcal{A} \sizeC^{I_1 \times I_2 \times I_N}$, its Frobenius norm is defined as $\lVert \mathcal{A} \rVert_\mathrm{F} =\sqrt{\sum_{i_1,i_2,i_3} |a_{i_1,i_2, i_3}|^2 }$. We have the following property
$\norm{\AC}_{\fro} = \frac{1}{\sqrt{I_3}}\norm{\widehat{\AC}}_{\fro} = \frac{1}{\sqrt{I_3}}\norm{\widehat{\Am}}_{\fro}$.
\end{definition}

\begin{definition}[\textbf{Tensor $\ell_{2,\infty}$ norm}] \label{def: l2inf}The tensor $\ell_{2,\infty}$ norm of an order-$3$ tensor $\mathcal{A}\size^{I_1 \times I_2 \times I_3}$ is defined as $\lVert \mathcal{A} \rVert_\mathrm{2,\infty} = \operatorname{max}_{i_1}\sqrt{\sum_{i_2 = 1}^{I_2}\sum_{i_3 = 1}^{I_3}a_{i_1,i_2, i_3}^{2}}$. And we can have $\norm{\AC}_\mathrm{2,\infty} = \frac{1}{\sqrt{I_3}}\norm{\widehat{\AC}}_\mathrm{2,\infty} = \norm{\operatorname{bcirc}(\mathcal{A})}_\mathrm{2,\infty}$.
\end{definition}

\begin{definition}[\textbf{Tensor $\ell_{1,\infty}$ norm}]\label{def: l1inf} The tensor $\ell_{1,\infty}$ norm of an order-$3$ tensor $\mathcal{A}\size^{I_1 \times I_2 \times I_3}$ is defined as $\lVert \mathcal{A} \rVert_\mathrm{1,\infty} = \operatorname{max}_{i_1}\sum_{i_2 = 1}^{I_2}\sum_{i_3 = 1}^{I_3}|a_{i_1,i_2, i_3}|$. 
\end{definition}

\begin{definition}[\textbf{Tensor infinity norm}] \label{def: inf}The tensor infinity norm of an order-$3$ tensor $\mathcal{A}\size^{I_1 \times I_2 \times I_3}$ is defined as $\lVert \mathcal{A} \rVert_\mathrm{\infty} = \operatorname{max}_{i_1,i_2,i_3}|a_{i_1,i_2,i_3}|$. 
\end{definition}

\begin{definition}[\textbf{Conjugate transpose}] \cite{lu2016tensor}
The conjugate transpose $\AC^{\To} \sizeC^{I_2 \times I_1 \times I_3}$ of $\AC \sizeC^{I_1 \times I_2 \times I_3}$  is achieved by firstly conjugating transpose each frontal slice and then reversing the order of frontal slices from 2 to $I_3$.
\end{definition}

\begin{definition}[\textbf{Identity tensor}]\cite{kilmer2011factorization}    
An order-3 tensor $\IC \size ^{I \times I \times I_3}$  is called as an identity tensor if its first frontal slice is an identity matrix and the other ones are zeros. 
\end{definition}

\begin{definition}[\textbf{Inverse tensor}]\cite{kilmer2011factorization}    
For an order-3 tensor $\AC \size ^{I \times I \times I_3}$, $\mathcal{A}^{-1}\in \mathbb{R}^{I \times I \times I_3}$ is said to be the inverse tensor if it satisfies $\AC * \AC^{-1} = \IC,  \AC^{-1} * \AC = \IC$. 
\end{definition}

\begin{definition}[\textbf{Orthogonal tensor}]\cite{kilmer2011factorization}
   A tensor $\AC$ is called orthogonal when it satisfies
   \begin{equation}
       \AC^{\To} * \AC = \AC * \AC^{\To} =\IC,
   \end{equation}
   where $\IC$ is the identity tensor, whose first frontal slice is the identity matrix and the rest are zero.
\end{definition}

\begin{definition}[\textbf{F-diagonal tensor}]\cite{kilmer2011factorization} A tensor $\AC$ is called f-diagonal when each frontal slice $\AC^{(i_3)}, i_3 = 1, \cdots, I_3$ is a diagonal matrix.
   
\end{definition}  

\begin{definition}[\textbf{T-SVD}]\cite{kolda2009tensor} \label{def: t-SVD}
For a three-way tensor $\AC \size^{I_1 \times I_2 \times I_3}$, its tensor singular value decomposition (t-SVD) can be represented as 
\begin{equation}\label{equ: t-SVD}
   \AC = \UC*\mathit{\Sigma}*\VC^{\mathrm{T}}, 
\end{equation}
where $\UC \size^{I_1 \times I_1 \times I_3}$ and $\VC \size^{I_2 \times I_2 \times I_3}$ are orthogonal tensors while $\mathit{\Sigma} \size^{I_1 \times I_2 \times I_3}$ is a f-diagonal tensor. 
\end{definition}
Based on the t-product, this decomposition can be obtained by computing matrix SVDs in the Fourier domain. Then we can acquire
\begin{equation}
    \widehat{ \mathbf{A} }^{(i_3)} = \widehat{ \mathbf{U} }^{(i_3)} \widehat{ \mathbf{\Sigma} }^{(i_3)} \widehat{\mathbf{V}}^{(i_3)\mathrm{T}}, i_3 = 1, \cdots, I_3.
\end{equation}
Taking advantage of the property (\ref{fft}), we can obtain the detailed t-SVD calculation process in Algorithm \ref{Alg: T-SVD}.  The number of SVD to be computed is $ \lceil\frac{I_3+1}{2}\rceil$ for a tensor with size $I_{1}\times I_{2}\times I_{3}$.  Figure \ref{fig: t-svd} also gives a decomposition graph for better understanding.
Compared with matrix SVD, t-SVD can better excavate low-rank structures in multi-way tensors. 

\begin{algorithm}[!t]
    \caption{T-SVD for order-3 tensor}\label{Alg: T-SVD}
    \LinesNumbered % display line number
    \KwIn{$ \AC \size^{I_{1}\times I_{2}\times I_{3}} $.}
    $\widehat{\AC} = \fft (\AC, [], 3)$,\\
    \For{$i_3 = 1,\cdots, \lceil\frac{I_3+1}{2}\rceil$}
    {
    $[\widehat{ \mathbf{U} }^{(i_3)}, \widehat{ \mathbf{\Sigma} }^{(i_3)},  
    \widehat{\mathbf{V}}^{(i_3)}  ] $ = SVD$(\widehat{ \mathbf{A} }^{(i_3)})$;}
    \For{$i_3 = \lceil\frac{I_3+1}{2}\rceil +1, \cdots, I_3$}
{$\widehat{ \mathbf{U} }^{(i_3)} = \operatorname{conj} (\widehat{
    \mathbf{U}}^{(I_3-i_3+2)})$;\\
    $\widehat{ \mathbf{\Sigma} }^{(i_3)} = \widehat{ \mathbf{\Sigma} }^{(I_3-i_3+2)}$;\\
      $\widehat{ \mathbf{V} }^{(i_3)} = \operatorname{conj} (\widehat{ \mathbf{V} }^{(I_3-i_3+2)})$;
  }
  $\UC = \ifft(\widehat{\UC},[],3)$, $\mathit{\Sigma} = \ifft(\widehat{\mathit{\Sigma}},[],3)$,
  $\VC=\ifft(\widehat{\VC},[],3)$.\\
  \KwOut{$\UC, \mathit{\Sigma}, \VC$.}
 \end{algorithm}

 \begin{figure}
    \includegraphics[width=1\linewidth]{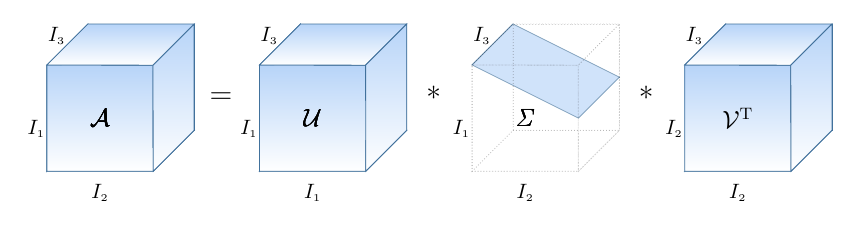}
    \vspace{-0.8cm} 
    \caption{Illustration of t-SVD framework.} \label{fig: t-svd}
    \vspace{-0.2cm}
\end{figure}

\begin{definition}[\textbf{Tensor tubal rank and tensor multi-rank}]\cite{zhang2016exact,lu2019tensor} \label{def: tubal rank}
Given a tensor $\AC \size^{I_1 \times I_2 \times I_3 }$ and its t-SVD $\AC = \UC * \mathit{\Sigma} * \VC^{\To}$, the tubal rank $\rank_t (\AC)$ is defined as the number of nonzero singular tube fibers of $\mathit{\Sigma}$, which is formulated as 
\begin{equation}
\begin{split}
    \operatorname{rank}_t (\AC) &= \# \{i, \mathit{\Sigma}(i,i,:) \neq \mathbf{0} \} \\
   & = \# \{i, \mathit{\Sigma}(i,i,1) \neq 0 \}.
    \end{split}
\end{equation}
In addition,  its multi-rank is defined as a vector $\rank_m (\AC) = \mathbf{R} = (R_1, R_2, \cdots, R_{I_3})$, whose $i_3$-th entry  satisfies $R_{i_3} = \rank(\widehat{\Am}^{(i_3)}), i_3 = 1, 2, \cdots, I_3$. Then we can denote $\norm{\mathbf{R}}_1$ as the sum of the multi-rank, and tubal rank as $\norm{\mathbf{R}}_{\mathrm{\infty}} $ or $R$. $\sigma_{\min}(\AC) = \min (\widehat{\mathit{\Sigma}})$ is defined as the minimum non-zero singular value of all frontal slices of $\widehat{\AC}$ in the Fourier domain, and we represent $\sigma_{\min}(\AC) = \sigma_{\min}(\widehat{\Am})$. Similarly, we define $\sigma_{\max}(\AC) = \sigma_{\max}(\widehat{\Am}) = \max (\widehat{\mathit{\Sigma}})$.
If the tubal rank of tensor $\AC$ is $R$, it will have the skinny t-SVD $\AC = \UC * \mathit{\Sigma} * \VC^{\To}$, which satisfies $\UC \size^{I_1 \times R \times I_3}, \mathit{\Sigma} \size^{R \times R \times I_3}, \VC \size^{I_2 \times R \times I_3}, \UC^ {\To} * \UC = \VC^{\To} * \VC = \IC$.
\end{definition}

\begin{definition}\label{def: TNN}[\textbf{ Tensor nuclear norm}]\cite{zhang2016exact,lu2019tensor} 
Meanwhile, given an order-3 tensor $\mathcal{A}$ with the tubal rank $R$, its tensor nuclear norm (TNN) is defined as 
\begin{equation}
 \begin{split}
    \norm{\AC}_*  
    & = \frac{1}{I_3}  \norm{\widehat{\AC}}_* = \frac{1}{I_3}  \sum_{i_3 = 1}^{I_3} \norm{\widehat{\Am}^{(i_3)}}_* \\
     & = \frac{1}{I_3}  \sum_{i_3 = 1}^{I_3} \sum_{r = 1}^{R} \widehat{\mathbf{\Sigma}}(r, r, i_3).
    \end{split}
\end{equation}
\end{definition}

\begin{definition}[\textbf{Tensor spectral norm}]\cite{lu2019tensor} For a tensor $\AC \size ^{I_1 \times I_2 \times I_3} $, its tensor spectral norm is denoted as $\norm{\AC}_2 = \norm{\operatorname{bcirc}(\mathcal{A})}_2 = \norm{\widehat{\mathbf{A}}}_2$. Note that TNN is the dual norm of the tensor spectral norm.
\end{definition}

\begin{definition}[\textbf{Condition number}]
For $\AC \size ^{I_1 \times I_2 \times I_3}$, its condition number is defined as 
$\kappa(\mathcal{A}) = \frac{\sigma_{\max}(\mathcal{A})}{\sigma_{\min}(\mathcal{A})}$.
\end{definition}

\begin{definition}[\textbf{Tensor operator norm}]
% A tubal-rank r 
Given a operator $\LC \size ^{I_1 \times I_2 \times I_3}$, its tensor operator norm is defined as $\|\mathcal{L}\|_{\op} = \sup_{\norm{\mathcal{A}}_{\fro} \leq1} \norm{\mathcal{L}(\mathcal{A})}_{\fro}
$. If $\mathcal{L}(\mathcal{A}) = \mathcal{L}*\mathcal{A}$, tensor operator norm is equivalent to tensor spectral norm.
\end{definition}

\begin{definition}[\textbf{Standard tensor basis}]\cite{zhang2016exact} Standard tensor basis $\mathring {\mathfrak{e}}_i$ is a tensor with size $I_1 \times 1 \times I_3$, whose $(i,1,1)$-th entry is 1 and the rest is 0. 
\end{definition}

\section{Methods}\label{sec: methods}
\subsection{Model}
Given the observed data  $\mathcal{Y} \in 
\mathbb{R}^{I_{1}\times I_{2} \times I_{3}}$, RTPCA aims to separate the intrinsic low-rank component  $\XC_{\star}\in 
\mathbb{R}^{I_{1}\times I_{2} \times I_{3}}$ and the underlying sparse component $\SC_{\star}\in 
\mathbb{R}^{I_{1}\times I_{2} \times I_{3}}$ from their mixture $\YC=\XC_{\star}+\SC_{\star}$.
When we denote the tensor  $\XC\in\mathbb{R}^{I_{1}\times I_{2} \times I_{3}}$ with the tubal rank $R$ in a factored form $\XC=\LC *\RC^{\top}$, where $\LC\in 
\mathbb{R}^{I_{1}\times R \times I_{3}},\RC\in 
\mathbb{R}^{I_{2}\times R \times I_{3}}$, we consider the following problem: 
\begin{equation}
\begin{aligned}
\min_{\LC,\RC,\SC}& \ \frac{1}{2}\ \left\Vert \LC * \RC^{\top}+\SC-\YC\right\Vert_{\fro}^{2}\\
\operatorname{s.t.} \ & \ \mathcal{S} \  \text{is an} \ \alpha-\text{sparse tensor}.
\end{aligned}
\end{equation}
Note that we employ two small factors here, which are different from the classic t-SVD form as shown in  Definition \ref{def: t-SVD}, \textit{i.e.}, $\XC = \UC*\mathit{\Sigma}*\VC^{\mathrm{T}}$.
In real-world scenarios, $\alpha$ is typically unknown and challenging to optimize. Then we formulate the following non-convex optimization model:
\begin{equation}
\begin{aligned}
\min_{\LC,\RC,\SC}& \ \frac{1}{2}\ \left\Vert \LC * \RC^{\top}+\SC-\YC\right\Vert_{\fro}^{2}\\
\operatorname{s.t.} \ & \ \operatorname{supp}\left(\mathcal{S}\right) \subseteq \operatorname{supp}\left(\mathcal{S}_{\star}\right),
\end{aligned}
\end{equation}
where $\operatorname{supp}(\cdot)$ denotes the support set, which is the set of indices where the elements of a tensor are non-zero.   

\subsection{Algorithm}
\begin{algorithm*}
    \caption{RTPCA-SGD under the t-SVD format}\label{Alg: LGRTPCA1}
    \KwIn{Observed tensor $\YC\size^{I_1 \times I_2 \times I_3}$, the underlying tubal rank $R$, step sizes $\eta_{k} = \eta$, the thresholding values $\zeta_{0}, \zeta_{1}$, the decay rate $\tau$, and the iteration number $K$.}
     \textbf{Initialization:} $\SC_{0} = \operatorname{T}_{\zeta_0}(\YC)$; $[\LC_{0},\RC_{0}]=\mathcal{D}_{R}(\YC - \SC_{0})$.\\
     \textbf{for} $k = 0, 1, \cdots, K-1$ \textbf{do}\\
     % Update S
     \quad $\SC_{k+1} = \operatorname{T}_{\zeta_{k+1}}(\YC - \LC_{k} * \RC_{k}^{\To})$, where $\zeta_{k+1} = \tau^k \zeta_1$;\\
     % Update L and R
     \quad$\mathcal{L}_{k+1} = \mathcal{L}_{k} - \eta_{k+1}(\mathcal{L}_{k} * \mathcal{R}_{k}^{\To} + \mathcal{S}_{k+1} - \mathcal{Y})*\mathcal{R}_k*(\mathcal{R}_k^{\To}*\mathcal{R}_k)^{-1}$;\\
     % Update R
     \quad$\mathcal{R}_{k+1} = \mathcal{R}_{k} - \eta_{k+1}(\mathcal{L}_{k} * \mathcal{R}_{k}^{\To} + \mathcal{S}_{k+1} - \mathcal{Y})^{\To}*\mathcal{L}_{k}*(\mathcal{L}_{k}^{\To}*\mathcal{L}_{k})^{-1}$;\\
     \textbf{end for}  \\
     \KwOut{Recovered low-rank tensor $\XC_K = \LC_K * \RC^{\To}_K$ and sparse tensor $\SC_K$.}
\end{algorithm*} 

To address the abovementioned problem, our RTPCA-SGD algorithm contains two parts: a) Spectral initialization and b) Scaled gradient updates. We first present the top-$R$ tensor approximation operator in Definition \ref{def: top-R} and the soft-thresholding operator in Definition \ref{def: soft_thres}. 
\begin{definition}[\textbf{Top-$R$ tensor approximation}]\label{def: top-R}Given a tensor $\mathcal{A}\in\mathbb{R}^{I_1\times I_2\times I_3}$, its truncated t-SVD with the parameter $R$ is represented as $\mathcal{A} = \mathcal{U}_{R}*\mathit{\Sigma}_{R}*\mathcal{V}_{R}^{\To}$, where $\mathcal{U}_{R}\in\mathbb{R}^{I_1 \times R \times I_3}$ and $\mathcal{V}_{R}\in\mathbb{R}^{I_2 \times R \times I_3}$  repectively denote  the top-$R$ left  and right singular tensor of $\mathcal{A}$, and $\mathit{\Sigma}_{R} \in\mathbb{R}^{R \times R \times I_3}$ is the top-$R$ f-diagonal tensor. Then
its top-$R$ tensor approximation is defined as 
\begin{equation}
     [\LC,\RC] = \mathcal{D}_{R}(\mathcal{A}),
\end{equation}
where $\mathcal{L} = \mathcal{U}_{R}*{\mathit{\Sigma}_{R}}^{\frac{1}{2}}$ and  $\mathcal{R} = \mathcal{V}_{R}*{\mathit{\Sigma}_{R}}^{\frac{1}{2}}$.
\end{definition}

\begin{definition}[\textbf{Soft thresholding operator}]\label{def: soft_thres} Given a tensor $\mathcal{A}\in\mathbb{R}^{I_1\times I_2\times I_3}$ and a thresholding value $\zeta$, its soft thresholding operator, denoted as $\operatorname{T}_{\zeta}(\mathcal{A})$, is defined such that each entry is computed by
\begin{equation}
\begin{aligned}
\left[\operatorname{T}_\zeta(\mathcal{A})\right]_{i_1, i_2, i_3} &=\operatorname{sign}\left(a_{i_1, i_2, i_3}\right) \cdot \max \left(0,\left|a_{i_1, i_2, i_3}\right|-\zeta\right)\\
\end{aligned}
\end{equation}
\end{definition}

\paragraph{\textbf{Spectral initialization}} \label{alg: Spectral initialization}
For the sparse tensor, we initialize it using the soft thresholding operator with parameter \(\zeta_0\), defined as \(\SC_{0} = \operatorname{T}_{\zeta_0}(\YC)\), to eliminate apparent outliers. For the low-rank component, we initialize its two factors using the top-\(R\) tensor approximation: \([\LC_{0}, \RC_{0}] = \mathcal{D}_{R}(\YC - \SC_{0})\). It is clear that the choice of the initial threshold parameter \(\zeta_0\) may influence the recovery quality. Such initialization form is designed to ensure recovery guarantees, as presented in Theorem~\ref{thm:initial} of Section~\ref{sec: Theoretical Results} later.

\paragraph{\textbf{Scaled gradient updates}} We iteratively update the sparse component via a soft-thresholding operator and update two smaller factors of the low-rank component via ScaledGD.
We now discuss the key details of our algorithm as follows:
\begin{enumerate}
    \item \textbf{Update the sparse tensor $\mathcal{S}$}: We choose a simple yet effective soft thresholding operator for updating the sparse component at the $k+1$-th iteration as follows:
    \begin{equation}
        \SC_{k+1} = \operatorname{T}_{\zeta_{k+1}}(\YC - \LC_{k} * \RC_{k}^{\To}).
    \end{equation}
Soft-thresholding has already been applied as a proximal operator of $\ell_1$ norm for tensors in some TNN-based RTPCA algorithms \cite{lu2016tensor,lu2019tensor,feng2020robust}.
This paper shows that with a set of appropriate thresholding values with $\zeta_{k+1} \geq\|\mathcal{X}_\star-\mathcal{X}_{k}\|_\infty$,  $\operatorname{T}_{\zeta_{k+1}}$ will be effectively a projection operator onto the support of $\mathcal{S}_\star$, which is later detailed 
in Lemma~\ref{lm:sparity} of Section~\ref{sec: Theoretical Results}.
\item \textbf{Update  the low-rank tensor $\mathcal{X}$}: To avoid  the computational cost of the full t-SVD  on $\mathcal{X}$, we factorize it into two smaller factors connected by the t-product, \textit{i.e.}, $\XC=\LC *\RC^{\top}$ with $\LC\in 
\mathbb{R}^{I_{1}\times R \times I_{3}}$ and $\RC\in 
\mathbb{R}^{I_{2}\times R \times I_{3}}$. Define the objective function as $f(\mathcal{L},\mathcal{R}):=\frac{1}{2}\left\Vert \LC * \RC^{\top}+\SC-\YC\right\Vert_{\fro}^{2}$, then the gradients with respect to $\mathcal{L}$ and $\mathcal{R}$ can be computed by
\begin{equation}
\begin{aligned}
       \nabla_{\mathcal{L}} f_k&=\left(\mathcal{L}_k* \mathcal{R}_k^{\top}+\mathcal{S}-\mathcal{Y}\right)* \mathcal{R}_k,\\
       \nabla_{\mathcal{R}} f_k&=\left(\mathcal{L}_k *\mathcal{R}_k^{\top}+\mathcal{S}-\mathcal{Y}\right)^{\top}*\mathcal{L}_k.
\end{aligned}
\end{equation}
It has been observed that the standard gradient descent method suffers from ill-conditioning issues  \cite{tong2021accelerating,cai2021learned,dong2023fast}. To overcome this, we introduce the scaled terms $(\mathcal{R}_k^{\To}*\mathcal{R}_k)^{-1}$ and $(\mathcal{L}_{k}^{\To}*\mathcal{L}_{k})^{-1}$
to update the tensor factors as follows:
\begin{equation}
\begin{aligned}
\label{eq: update Xfactor}
    \mathcal{L}_{k+1} &= \mathcal{L}_{k} - \eta_{k+1}\nabla_{\mathcal{L}} f_k*(\mathcal{R}_k^{\To}*\mathcal{R}_k)^{-1},\\
   \mathcal{R}_{k+1} &= \mathcal{R}_{k} - \eta_{k+1}\nabla_{\mathcal{R}} f_k*(\mathcal{L}_{k}^{\To}*\mathcal{L}_{k})^{-1}.
   \end{aligned}
\end{equation}
where $\eta_{k+1}$ is the step size at the $(k + 1)$-th iteration, which needs to be carefully selected to ensure good recovery.
\end{enumerate}
Finally, we summarize the proposed algorithm RTPCA-SGD in Algorithm~\ref{Alg: LGRTPCA1}.

\subsection{Parameter learning}
The exact recovery guarantee for Algorithm~\ref{Alg: LGRTPCA1} will be given in Theorem~\ref{thm:main_theorem} of Section~\ref{sec: Theoretical Results}, which indicates that four parameters need to be appropriately chosen, \textit{i.e.}, the thresholding values $\zeta_0$, $\zeta_1$, the decay rate $\tau$, and the gradient step size $\eta$.
In particular, $\zeta_{k+1} = \tau \zeta_k = \tau^k \zeta_1, k\geq 1$.
To address the challenge of parameter selection in practical applications, we propose a learnable self-supervised deep unfolding method called RTPCA-LSGD. This model efficiently learns the four parameters of the RTPCA-SGD algorithm, enhancing both applicability and performance.
Our approach regards each iteration of Algorithm~\ref{Alg: LGRTPCA1} as a neural network layer, where $\zeta_0$, $\zeta_1$, $\tau$, and $\eta$ are treated as learnable parameters. 
Softplus activations are employed to ensure $\zeta_{0}, \zeta_{1}, \eta > 0$, while the sigmoid function is used to enforce \(0 < \tau < 1\).
As illustrated in Fig.~\ref{fig: deep_unfolding}, the network consists of a feed-forward layer and a recurrent layer, which are respectively designed to model the spectral initialization and the ScaleGD iterative steps of Algorithm~\ref{Alg: LGRTPCA1}.
We employ a self-supervised learning loss (SLL) \cite{herrera2020denise, dong2023deep} for parameter optimization, encouraging the sparsity of the sparse tensor. The SLL is defined as:
 \begin{equation}
l(\LC,\RC) = \frac{1}{\|\mathcal{Y}\|_{\mathrm{F}}^2}\left\|\mathcal{Y}- \LC * \RC^{\To} \right\|_1.
\end{equation}
Here we only use the observed data $\YC$ and the updated terms of the low-rank component. This self-supervised approach eliminates the need for ground truth data, making it highly suitable for real-world applications.
\begin{figure} 
    \includegraphics[width=1\linewidth]{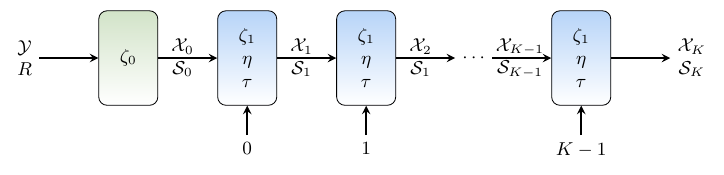}
    \caption{Network architecture. The observed tensor $\mathcal{Y}$ and the tubal rank $R$ serve as inputs to the first layer, performing the spectral initialization step described in Algorithm~\ref{Alg: LGRTPCA1}. The subsequent recurrent layers implement the iterative updates for $\XC_{k} = \LC_{k} * \RC_{k}^{\top}$ and $\mathcal{S}_{k}$, following Algorithm~\ref{Alg: LGRTPCA1}. These layers require the iteration number $k$, as it determines the \((k + 1)\)-th thresholding value parameter, defined as $\zeta_{k+1} = \tau^{k} \zeta_{1}$ for $k \geq 0$. }
\label{fig: deep_unfolding}
\end{figure}

\subsection{Computational complexity}
\label{sec: Computational complexity}
For the TNN-based RTPCA problem \cite{lu2019tensor}, as formulated in Eq.~\eqref{RTPCA_TNN}, the alternating direction method of multipliers (ADMM) is typically employed for its solution. The primary computational cost arises from the proximity operator of the TNN at each iteration. 
Specifically, the per-iteration computational complexity is
$O\left(I_1 I_2 I_3 \log \left(I_3\right) + \lceil \frac{I_3+1}{2} \rceil I_1 I_2 \min(I_1, I_2)\right)$.
In contrast, the proposed method outlined in Algorithm~\ref{Alg: LGRTPCA1} primarily incurs computational cost from updating the two small factor matrices of the low-rank component, as shown in Eq.~\eqref{eq: update Xfactor}. 
The per-iteration computational complexity is
$O(I_1 I_2 I_3 \log \left(I_3\right) + \lceil \frac{I_3+1}{2} \rceil I_1I_2R)$  when the tubal rank $R \ll \min(I_1, I_2)$, resulting in a much lower overall cost compared to RTPCA-TNN.

\section{Theoretical Results}\label{sec: Theoretical Results} 
In this section, we present the recovery guarantee for RTPCA-SGD and prove that with appropriate parameter selection, RTPCA-SGD is able to recover the low-rank component and sparse component under mild assumptions.
We first define two assumptions for the RTPCA problems:
\begin{assumption}[\textbf{Tensor $\mu$-incoherence conditions of low-rank tensor}] \label{as:incoherence} $\mathcal{X}_{\star} \in \mathbb{R}^{I_1 \times I_2 \times I_3}$ satisfies the tensor $\mu$-incoherence conditions and its skinny t-SVD with tubal-rank $R$ is represented as $\mathcal{X}_{\star} = \UC_{\star} * \mathit{\Sigma}_{\star} * \VC^{\To}_{\star}$, which means
\begin{equation}
\begin{aligned}\label{eq: tic}
\norm{\UC_{\star}}_\mathrm{2,\infty} =     \max_{i_1 = 1,2, \cdots, I_1} \norm{\UC_{\star}^{\To} * \mathring{\mathfrak{e}}_{i_1}}_{\fro}   &\leq \sqrt{\frac{\mu R}{I_1 }},\\
\norm{\VC_{\star}}_\mathrm{2,\infty} =     \max_{i_2 = 1,2, \cdots, I_2} \norm{\VC_{\star}^{\To}  * \mathring{\mathfrak{e}}_{i_2}}_{\fro}    &\leq \sqrt{\frac{\mu R}{I_2 }},
\end{aligned}
\end{equation}
for some constant $\mu \geq 1$. Here $\mathring{\mathfrak{e}}_i$ denotes the standard tensor basis.
\end{assumption}

In the Fourier domain, $\widehat{ \mathbf{X} }_{\star}^{(i_3)} = \widehat{ \mathbf{U} }_{\star}^{(i_3)} \widehat{ \mathbf{\Sigma} }_{\star}^{(i_3)} \widehat{\mathbf{V}}_{\star}^{(i_3)}, i_3 = 1, \cdots, I_3$. 
For each frontal slices $\widehat{\mathbf{X}}_{\star}^{(i_3)}, i_3 = 1, \cdots, I_3$, the matrix incoherence conditions are represented as
\begin{equation}
\begin{aligned}\label{eq: mic}
    \max_{i_1 = 1,2, \cdots, I_1} \norm{[\widehat{\mathbf{U}}_{\star}^{(i_3)}]^{\To} \mathbf{e}_{i_1}}_{\fro} &\leq \sqrt{\frac{\mu R}{I_1}}, \\
    \max_{i_2 = 1,2, \cdots, I_2} \norm{[\widehat{\mathbf{V}}_{\star}^{(i_3)}]^{\To} \mathbf{e}_{i_2}}_{\fro} &\leq \sqrt{\frac{\mu R}{I_2}},
\end{aligned}
\end{equation}
for some constant $\mu \geq 1$. Here $\mathbf{e}_{i}$ denotes the standard matrix basis with the $i$-th entries as 1 and the rest as 0.

\begin{remark}
As demonstrated in \cite{zhang2016exact}, the tensor incoherence conditions in Assumption \ref{as:incoherence} can be derived from the matrix incoherence condition, but not vice versa, \textit{i.e.}, \eqref{eq: tic} $\Leftarrow$ \eqref{eq: mic} and \eqref{eq: tic} $ \not \Rightarrow $ \eqref{eq: mic}.
This indicates that tensor incoherence conditions are weaker than those in the matrix setting \cite{cai2021learned} and the Tucker decomposition-based scenario \cite{dong2023fast}.
\end{remark}

\begin{assumption}[\textbf{$\alpha$-t-sparsity of tensor}] \label{as:sparsity}$\mathcal{S}_{\star} \in \mathbb{R}^{I_1 \times I_2 \times I_3}$  is an  $\alpha_t$-sparse tensor, which implies that  at most $\alpha_t$ fraction of non-zero element in each slices $\mathcal{S}(i_1,:,:)$, $\mathcal{S}(:,i_2,:)$, $\mathcal{S}(:,:,i_3)$.
\end{assumption}

To clarify its difference, we recall the $\alpha$-sparsity assumed in the RPCA-SGD \cite{tong2021accelerating} and Tucker-SGD \cite{dong2023fast}.

\begin{assumption}
[\textbf{$\alpha$-m-sparsity in RPCA-SGD} \cite{tong2021accelerating}] \label{as:RPCA-sparsity}$\mathbf{S}_{\star} \in \mathbb{R}^{I_1 \times I_2}$  is an  $\alpha_{m}$-sparse matrix, which implies that  at most $\alpha_m$ fraction of non-zero element in each rows  $\mathbf{S}(i_1,:)$ and columns $\mathbf{S}(:,i_2)$.
\end{assumption}

\begin{assumption}
[\textbf{$\alpha$-f-sparsity in Tucker-SGD} \cite{dong2023fast}] \label{as:Tucker-sparsity}$\mathcal{S}_{\star} \in \mathbb{R}^{I_1 \times I_2 \times I_3}$  is an  $\alpha_f$-sparse tensor, which implies that  at most $\alpha_f$ fraction of non-zero element in each fibers $\mathcal{S}(i_1,i_2,:)$, $\mathcal{S}(:,i_2,i_3)$, $\mathcal{S}(i_1,:,i_3)$.
\end{assumption}

\begin{remark}
Assumption \ref{as:sparsity} imposes the sparsity on the order-$2$ slices rather than on the order-$1$ vectors or fibers, which implies a weaker sparsity condition on the sparse component compared to the matrix case \cite{tong2021accelerating} and the Tucker decomposition case \cite{dong2023fast}. 
For further clarity, we present an illustrative example in Fig.~\ref{fig: sparsity_compare}.  Consider a tensor \(\mathcal{A} \in \mathbb{R}^{5 \times 5 \times 5}\) where all entries in the \((1,1)\)-th column fiber \(\mathcal{A}(:, 1, 1)\) are non-zero, and the remaining entries are 0. In this case, \(\mathcal{A}\) is an \(\alpha_t\)-sparse tensor with \(\alpha_t = 20\% \). 
In contrast, it is  an \(\alpha_f\)-sparse tensor with \(\alpha_f = 100\% \) in the Tucker-ScaledGD setting. Clearly, \(\alpha_t < \alpha_f\), demonstrating that Assumption \ref{as:sparsity} about the sparse component in this paper is inherently weaker than Assumption \ref{as:Tucker-sparsity} in the Tucker case.
In addition, as also discussed in \cite{cai2024robust}, by unfolding \(\mathcal{A}\) along the 1st mode, we obtain a matrix \(\mathbf{A} \in \mathbb{R}^{5 \times 25}\), where only the  1st  column \(\mathbf{A}(:,1)\) contains non-zero entries, making \(\mathbf{A}\) an \(\alpha_m\)-sparse matrix with \(\alpha_m = 100\%\). Clearly, \(\alpha_t < \alpha_m\).
This example demonstrates that unfolding a sparse tensor along a specific mode can significantly worsen the sparsity of the resulting matrix. We also observe that unfolding along a different mode (e.g., the 2nd or 3rd mode) might yield a matrix with no worse sparsity. However, practical applications often lack prior knowledge of the outlier pattern, making it challenging to identify the optimal unfolding mode. Therefore, the tensor sparsity condition in Assumption \ref{as:sparsity} is inherently weaker than the matrix sparsity condition in Assumption \ref{as:RPCA-sparsity}.

\end{remark}

 \begin{figure}
    \includegraphics[width=1\linewidth]{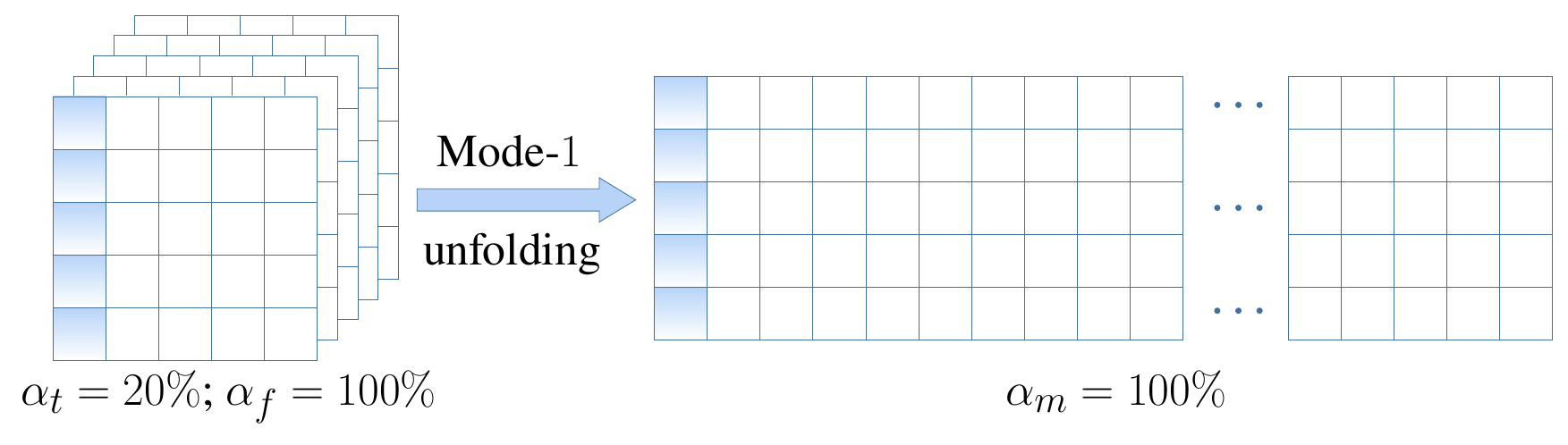}
    \vspace{-0.5cm}
    \caption{Comparison of tensor sparsity and matrix sparsity. Blue boxes denote outlier entries, while white boxes indicate zero entries. The matrix on the right is obtained by unfolding the tensor on the left along the 1st mode.} \label{fig: sparsity_compare}
    \vspace{-0.2cm}
\end{figure}

With these two weaker assumptions in place, we are now prepared to present our main theorem.

\begin{theorem}[\textbf{Guaranteed recovery}]\label{thm:main_theorem}
    Suppose that $\mathcal{X}_{\star}$ with tubal-rank $R$ satisfies tensor $\mu$-incoherence conditions (\textit{i.e.}, Assumption \ref{as:incoherence}), $\mathcal{S}_{\star}$ is an $\alpha$-sparse tensor (\textit{i.e.}, Assumption \ref{as:sparsity}) with $\alpha \leq \frac{1}{10^4 \mu R^{1.5} {I_3}^{1.5} \kappa}$. If we set the thresholding values $\left\|\mathcal{X}_{\star}\right\|_{\infty} \leq \zeta_0 \leq 2\left\|\mathcal{X}_{\star}\right\|_{\infty}$ 
    and $\zeta_{k+1} = \tau \zeta_k, k\geq 1$ with  $\zeta_{1} := \frac{3}{\sqrt{I_1I_2}} \mu R \sigma_{\min}(\mathcal{X}_\star) $, 
$\eta_k=\eta \in\left[\frac{1}{4}, \frac{2}{3}\right]$, where $\tau = 1-0.6\eta$, the iterates of RTPCA-SGD satisfy
   \begin{equation} 
   \begin{aligned}
   \label{eq: con3}
    \left\|\mathcal{X}_k-\mathcal{X}_{\star}\right\|_{\mathrm{F}} &\leq  \frac{0.03}{\sqrt{I_3}} {\tau}^{k}  \sigma_{\min}\left(\mathcal{X}_{\star}\right),\cr
\|\mathcal{X}_k -\mathcal{X}_\star\|_\infty &\leq \frac{3}{\sqrt{I_1I_2}} \mu R \tau^k \sigma_{\min}(\mathcal{X}_\star), \cr
\|\mathcal{S}_k-\mathcal{S}_\star\|_\infty &\leq \frac{6}{\sqrt{I_1I_2}} \mu R \tau^{k-1} \sigma_{\min}(\mathcal{X}_\star),
\end{aligned}
    \end{equation}
    and \begin{equation}\operatorname{supp}\left(\mathcal{S}_k\right) \subseteq \operatorname{supp}\left(\mathcal{S}_{\star}\right)
    \end{equation}
\end{theorem}
\begin{remark} 
The above theoretical results provide recovery guarantees for RTPCA-SGD. We can summarize  three important consequences as follows:
\begin{itemize}
    \item \textbf{Claim 1}: Algorithm~\ref{Alg: LGRTPCA1} with appropriate parameter choices can exactly recover the low-rank and sparse tensors as long as the fraction $\alpha$ of corruptions and the tubal rank $R$ are sufficiently small.
    \item \textbf{Claim 2}: It converges linearly at a constant rate on three error metrics, as shown in Eq.~\eqref{eq: con3}. \label{claim:2}
    \item \textbf{Claim 3}: The convergence rate is independent of $\kappa$. 
\end{itemize}
Here we choose the decay rate $\tau = 1 - 0.6\eta$ to simplify the proof. This may not be an optimal convergence rate.
To the best of our knowledge, this is the first time that a ScaledGD-based theoretical analysis for tensors under the t-SVD framework has been presented. The existing works are based on matrix SVD \cite{tong2021accelerating, cai2021learned} and Tucker decomposition \cite{dong2023fast}. 
Additionally, this paper focuses on the general asymmetric case, and our theoretical results can also be seamlessly applied to PSD tensors without loss of generality by replacing $\mathcal{R} = \mathcal{L}$.
\end{remark}

In preparation for proving Theorem~\ref{thm:main_theorem}, we first introduce a new tensor distance metric in Definition \ref{de: em} and optimal alignment tensor existence lemma in Lemma \ref{lm: Q_existence}. 
Next, Lemma~\ref{lemma:matrix2factor} connects this tensor distance metric to the error metric of the low-rank component.
On the other hand, we also present Lemma~\ref{lm:sparity}, which confirms the effectiveness of our selected thresholding values for the sparse component. 
Additionally, we present Theorem~\ref{thm:local convergence} which establishes local linear convergence and Theorem~\ref{thm:initial} which provides guaranteed initialization.
Detailed proofs for Lemma~\ref{lm: Q_existence},~\ref{lemma:matrix2factor}, \ref{lm:sparity}  and Theorem~\ref{thm:local convergence},~\ref{thm:initial} can be found in the supplementary material. Finally, we conclude by proving Theorem~\ref{thm:main_theorem}.

\begin{definition}[\textbf{Tensor distance metric}]\label{de: em}
Given a tensor $\XC_\star \size^{I_1 \times I_2 \times I_3,}$ with tubal-rank $R$, and its skinny t-SVD represented as $\XC_\star = \UC_\star * \mathit{\Sigma}_\star *\VC_\star^{\To}$,  we define the low-rank factors of the ground truth as $\LC_\star:=\UC_\star*\mathit{\Sigma}_\star^{\frac{1}{2}}$ and $\RC_\star:=\VC_\star*\mathit{\Sigma}_\star^{\frac{1}{2}}$. 
Moreover, the set of invertible tensors
in $\mathbb{R}^{R\times R \times I_3}$ is denoted by $\GL(R)$.
For theoretical analysis, we define  the tensor distance metric $\dist(\LC,\RC;\LC_\star,\RC_\star)$ for the decomposed tubal rank-$R$ tensors as follows:
\begin{equation}\label{eq: em}
\resizebox{1\linewidth}{!}{$\displaystyle \inf_{\QC\in \GL(R)} \left(\|(\LC*\QC-\LC_\star)*\mathit{\Sigma}_\star^{\frac{1}{2}}\|_{\mathrm{F}}^{2} + \| (\RC*\QC^{-\top}-\RC_\star)*\mathit{\Sigma}_\star^{\frac{1}{2}} \|_{\mathrm{F}}^{2}\right)^{\frac{1}{2}}$}.
\end{equation}
\end{definition}
Notice that the optimal alignment tensor $\QC$ exists and is invertible if $\LC$ and $\RC$ are sufficiently close to $\LC_\star$ and $\RC_\star$. In particular, one can have the following lemma. 

\begin{lemma}\label{lm: Q_existence}
For any $\LC\in\mathbb{R}^{I_1\times R \times I_3}$ and $\RC\in \mathbb{R}^{I_2 \times R \times I_3}$, if 
\begin{align}\label{eq:Q_existence_condition}
     \dist(\LC,\RC;\LC_\star,\RC_\star) \leq \frac{\epsilon}{\sqrt{I_3}} \sigma_{\min}\left(\mathcal{X}_{\star}\right),
\end{align}
for $0<\epsilon<1$, then the optimal alignment tensor $\QC$ between $[\LC,\RC]$ and $[\LC_\star,\RC_\star]$ exists and is invertible. 
\end{lemma}

\begin{lemma}\label{lemma:matrix2factor} For any $\LC_{k}\in\mathbb{R}^{I_1\times R \times I_3}$ and $\RC_{k}\in \mathbb{R}^{I_2 \times R \times I_3}$,
if 
$\distk \leq \frac{\epsilon}{\sqrt{I_3}} \sigma_{\min}\left(\mathcal{X}_{\star}\right)$ and $0<\epsilon<1$, then one has
\begin{gather*}
    \| \mathcal{L}_k*\mathcal{R}_k^{\top} - \mathcal{X}_\star \|_{\fro} \leq \left(1+\frac{\epsilon}{2}\right)\sqrt{2}\  \distk.
\end{gather*}
\end{lemma}

\begin{lemma} \label{lm:sparity}
At the $(k+1)$-th iteration of Algorithm~\ref{Alg: LGRTPCA1},  setting the threshold value as $\zeta_{k+1} \geq\|\mathcal{X}_\star-\mathcal{X}_{k}\|_\infty$ yields
\begin{gather*}
    \supp(\mathcal{S}_{k+1})\subseteq \supp(\mathcal{S}_\star),\\
    \|\mathcal{S}_\star-\mathcal{S}_{k+1}\|_\infty \leq \|\mathcal{X}_\star-\mathcal{X}_{k}\|_\infty + \zeta_{k+1} \leq 2 \zeta_{k+1}.
\end{gather*}
\end{lemma}

\begin{theorem}[\textbf{Local linear convergence}] \label{thm:local convergence}
Suppose that $\mathcal{X}_\star=\mathcal{L}_\star*\mathcal{R}_\star^\top$ is a tubal rank-$R$ tensor with $\mu$-incoherence and $\mathcal{S}_\star$ is an $\alpha$-sparse tensor with $\alpha\leq\frac{1}{10^4\mu R^{1.5}{I_3}^{1.5}}$. Let $\mathcal{Q}_k$ be the optimal alignment tensor between $[\mathcal{L}_k,\mathcal{R}_k]$ and $[\mathcal{L}_\star,\mathcal{R}_\star]$. If the initial guesses obey the conditions
\begin{gather*}
 \distzero \leq \frac{\varepsilon}{\sqrt{I_3}}  \sigma_{\min}(\mathcal{X}_\star), \cr
\sqrt{I_1}\|(\mathcal{L}_0*\mathcal{Q}_0-\mathcal{L}_\star)*\mathit{\Sigma}_\star^{\frac{1}{2}}\|_{2,\infty}  \lor \cr \sqrt{I_2} \|(\mathcal{R}_0*\mathcal{Q}_0^{-\top}-\mathcal{R}_\star)*\mathit{\Sigma}_\star^{\frac{1}{2}}\|_{2,\infty}
    \leq \sqrt{\mu R} \sigma_{\min}(\mathcal{X}_\star)
\end{gather*}
with $\varepsilon:=0.02$, then by setting the thresholding values $\zeta_{k+1} = \tau \zeta_k$ for $k\geq 1$ and the fixed step size $\eta_k=\eta\in[\frac{1}{4},\frac{2}{3}]$, the iterates of Algorithm~\ref{Alg: LGRTPCA1} satisfy
\begin{gather*}
\distk\leq \frac{\varepsilon }{\sqrt{I_3}}\tau^k \sigma_{\min}(\mathcal{X}_\star), \cr
\sqrt{I_1}\|(\mathcal{L}_k*\mathcal{Q}_k-\mathcal{L}_\star)*\mathit{\Sigma}_\star^{\frac{1}{2}}\|_{2,\infty}  \lor \cr \sqrt{I_2} \|(\mathcal{R}_k * \mathcal{Q}_k^{-\top}-\mathcal{R}_\star)*\mathit{\Sigma}_\star^{\frac{1}{2}}\|_{2,\infty}  
\leq \sqrt{\mu R} \tau^k \sigma_{\min}(\mathcal{X}_\star), \cr
\|\mathcal{X}_\star-\mathcal{X}_k\|_\infty \leq \frac{3}{\sqrt{I_1I_2}} \mu R \tau^k \sigma_{\min}(\mathcal{X}_\star), 
\end{gather*}
where the convergence rate $\tau:=1-0.6\eta$.
\end{theorem}

\begin{theorem}[\textbf{Guaranteed initialization}] \label{thm:initial}
Suppose that $\mathcal{X}_\star=\mathcal{L}_\star*\mathcal{R}_\star^\top$ is a tubal rank-$R$ tensor with $\mu$-incoherence and $\mathcal{S}_\star$ is an $\alpha$-sparse tensor with $\alpha\leq\frac{c_0}{\mu R^{1.5}{I_3}^{1.5}\kappa}$ for some small positive constant $c_0\leq\frac{1}{38}$. Let $\mathcal{Q}_0$ be the optimal alignment tensor between $[\mathcal{L}_0,\mathcal{R}_0]$ and $[\mathcal{L}_\star,\mathcal{R}_\star]$. By setting the thresholding values $\left\|\mathcal{X}_{\star}\right\|_{\infty} \leq \zeta_0 \leq 2\left\|\mathcal{X}_{\star}\right\|_{\infty}$ and using the spectral initialization step, the initial guesses satisfy
\begin{gather*}
\distzero \leq \frac{14c_0}{\sqrt{I_3}} \sigma_{\min}(\mathcal{X}_\star), \cr
\sqrt{I_1}\|(\mathcal{L}_0*\mathcal{Q}_0-\mathcal{L}_\star)*\mathit{\Sigma}_\star^{\frac{1}{2}}\|_{2,\infty}  \lor \cr\sqrt{I_2} \|(\mathcal{R}_0*\mathcal{Q}_0^{-\top}-\mathcal{R}_\star)*\mathit{\Sigma}_\star^{\frac{1}{2}}\|_{2,\infty}
    \leq  \sqrt{\mu R} \sigma_{\min}(\mathcal{X}_\star).
\end{gather*}
%as long as $c_0\leq\frac{1}{35}$.
%with $\varepsilon:=0.02$.
\end{theorem}
Theorem~\ref{thm:initial} demonstrates that the spectral initialization step in Algorithm~\ref{Alg: LGRTPCA1}  will deduce the initial conditions necessary for Theorem~\ref{thm:local convergence}.

\begin{proof}[\textbf{Proof of Theorem~~\ref{thm:main_theorem}}]
Setting \( c_0 = 10^{-4} \) in Theorem~\ref{thm:initial}, its results will satisfy the required conditions of Theorem~\ref{thm:local convergence}, yielding
\begin{gather*}
    \distk \leq \frac{0.02}{\sqrt{I_3}} (1 - 0.6\eta)^k \sigma_{\min}(\mathcal{X}_\star), \\
    \sqrt{I_1}\|(\mathcal{L}_k *\mathcal{Q}_k - \mathcal{L}_\star)*\mathit{\Sigma}_\star^{\frac{1}{2}}\|_{2,\infty} \lor \cr
    \resizebox{1\linewidth}{!}{$\sqrt{I_2}\|(\mathcal{R}_k *\mathcal{Q}_k^{-\top} - \mathcal{R}_\star)*\mathit{\Sigma}_\star^{\frac{1}{2}}\|_{2,\infty} 
    \leq \sqrt{\mu R} (1 - 0.6\eta)^k \sigma_{\min}(\mathcal{X}_\star)$}, 
\end{gather*}
\begin{gather*}
    \|\mathcal{X}_\star - \mathcal{X}_k\|_\infty \leq \frac{3}{\sqrt{I_1 I_2}} \mu R \tau^k \sigma_{\min}(\mathcal{X}_\star),
\end{gather*}
for all \( k \geq 0 \). By Lemma~\ref{lm:sparity}, we further obtain
\begin{gather*}
    \|\mathcal{S}_\star - \mathcal{S}_k\|_\infty \leq \frac{6}{\sqrt{I_1 I_2}} \mu R \tau^{k-1} \sigma_{\min}(\mathcal{X}_\star).
\end{gather*}
Additionally, using Lemma~\ref{lemma:matrix2factor}, we have
\begin{gather*}
    \|\mathcal{X}_k - \mathcal{X}_\star\|_{\fro} \leq \left(1 + \frac{\epsilon}{2}\right) \sqrt{2} \distk
\end{gather*}
as long as \( \distk \leq \frac{\epsilon}{\sqrt{I_3}} \sigma_{\min}(\mathcal{X}_\star) \). With \( \epsilon = 0.02 \), we establish the three terms of the first claim.

For \( k \geq 1 \), the second claim $\operatorname{supp}\left(\mathcal{S}_k\right) \subseteq \operatorname{supp}\left(\mathcal{S}_{\star}\right)$ follows directly from Lemma~\ref{lm:sparity}. For \( k = 0 \), we initialize \( \mathcal{X}_{-1} = \bm{0} \), leading to
\[
\mathcal{S}_0 = \operatorname{T}_{\zeta_0}(\mathcal{Y}) = \operatorname{T}_{\zeta_0}(\mathcal{Y} - \mathcal{X}_{-1}),
\]
where \( \|\mathcal{X}_\star\|_\infty \leq \zeta_0 \leq 2 \|\mathcal{X}_\star\|_\infty = 2 \|\mathcal{X}_\star - \mathcal{X}_{-1}\|_\infty \). Applying Lemma~\ref{lm:sparity} again, we confirm the second claim for all \( k \geq 0 \).

This completes the proof.
\end{proof}

\section{Experiments} \label{section_experiment}
In this section, we first conduct experiments on synthetic data to validate our theoretical results. Specifically, we present the phase transition performance of the proposed RTPCA-SGD method under different condition numbers $\kappa$.
We then analyze several key aspects of the method, including its linear convergence rate, independence from the condition number, the impact of the decay rate parameter, and the effectiveness of the deep unfolding approach. Additionally, we compare the phase transition performance of the proposed RTPCA-SGD method with that of the convex RTPCA-TNN model \cite{lu2019tensor}, both of which have theoretical guarantees for exact recovery in the t-SVD-based RTPCA problem.
Finally, we evaluate our methods on real-world datasets including video denoising and background initialization tasks, demonstrating their practical effectiveness and computational efficiency.

\subsection{Synthetic data}
For the synthetic experiments, the observed data $\mathcal{Y} \in \mathbb{R}^{I_1 \times I_2 \times I_3}$ is constructed as $\mathcal{Y} = \mathcal{X}_\star + \mathcal{S}_\star$, where $\mathcal{X}_\star$ is a low-rank tensor with tubal rank $R$ and condition number $\kappa$, and $\mathcal{S}_\star$ is an $\alpha$-sparse tensor. The procedure for generating $\mathcal{X}_\star$ is outlined as follows: 
\begin{itemize}
\item Randomly generate two factor tensors, $\mathcal{L} \in \mathbb{R}^{I_1 \times R \times I_3}$ and $\mathcal{R} \in \mathbb{R}^{R \times I_2 \times I_3}$, with entries independently sampled from a standard normal distribution.
\item Construct a preliminary low-rank tensor $\mathcal{X}_0$ as $\mathcal{X}_0 = \mathcal{L} * \mathcal{R}$, resulting in a tensor with tubal rank $R$ but an undefined condition number.

\item Perform a skinny t-SVD on $\mathcal{X}_0$ to obtain orthogonal tensors $\mathcal{U} \in \mathbb{R}^{I_1 \times R \times I_3}$ and $\mathcal{V} \in \mathbb{R}^{I_2 \times R \times I_3}$.

\item Generate an $f$-diagonal tensor $\mathit{\Sigma} \in \mathbb{R}^{R \times R \times I_3}$ with the desired condition number $\kappa$. First, construct a Fourier-domain tensor $\widehat{\mathit{\Sigma}}$, setting the diagonal elements of each slice $\widehat{\mathbf{\Sigma}}^{(i_3)}$ (for $i_3 = 1, \dots, \lceil \frac{I_3 + 1}{2} \rceil$) to be linearly spaced between $\left(\frac{1}{2}\right)^{i_3-1}$ and $\frac{1}{\kappa}$. Then, generate the remaining slices using the conjugate symmetry property, \textit{i.e.}, $\widehat{\mathbf{\Sigma}}^{(i_3)} = \operatorname{conj}(\widehat{\mathbf{\Sigma}}^{(I_3 - i_3 + 2)})$. Finally, apply an inverse FFT along the third mode to obtain $\mathit{\Sigma}$.

\item Compute $\mathcal{X}_\star = \mathcal{U} * \mathit{\Sigma} * \mathcal{V}^{\mathrm{T}}$, yielding a low-rank tensor with tubal rank $R$ and condition number $\kappa$.
\end{itemize}

In addition, we randomly generate an $\alpha$-sparse tensor $\mathcal{S}_\star$ by sampling $\alpha$-fraction entries from a uniform distribution within the range $[-\theta, \theta]$, where $\theta = \frac{\norm{\mathcal{X}_\star}_{1}}{I_1 I_2 I_3}$ represents the mean entry-wise magnitude of $\mathcal{X}_\star$. Finally, the synthetic data is obtained as $\mathcal{Y} = \mathcal{X}_\star + \mathcal{S}_\star$.

To evaluate recovery accuracy, we use the relative standard error (RSE) metric: 
\begin{equation}
    \text{RSE}_{\mathcal{X}} = \frac{||\mathcal{X}-\mathcal{X}_\star||_{\mathrm{F}} }{||\mathcal{X}_\star||_{\mathrm{F}}},
\label{RSE}
\end{equation}
where $\mathcal{X}$ represents the low-rank estimate obtained from the method, and $\mathcal{X}_\star$ is the ground-truth of low-rank component.

\begin{figure}[!t]
% \tiny
% \scriptsize
% \footnotesize
\Huge
\setlength{\tabcolsep}{0.1pt}
\begin{center}
\resizebox{0.5\textwidth}{!}{\begin{tabular}{cccc}
\includegraphics[width=0.5\textwidth]{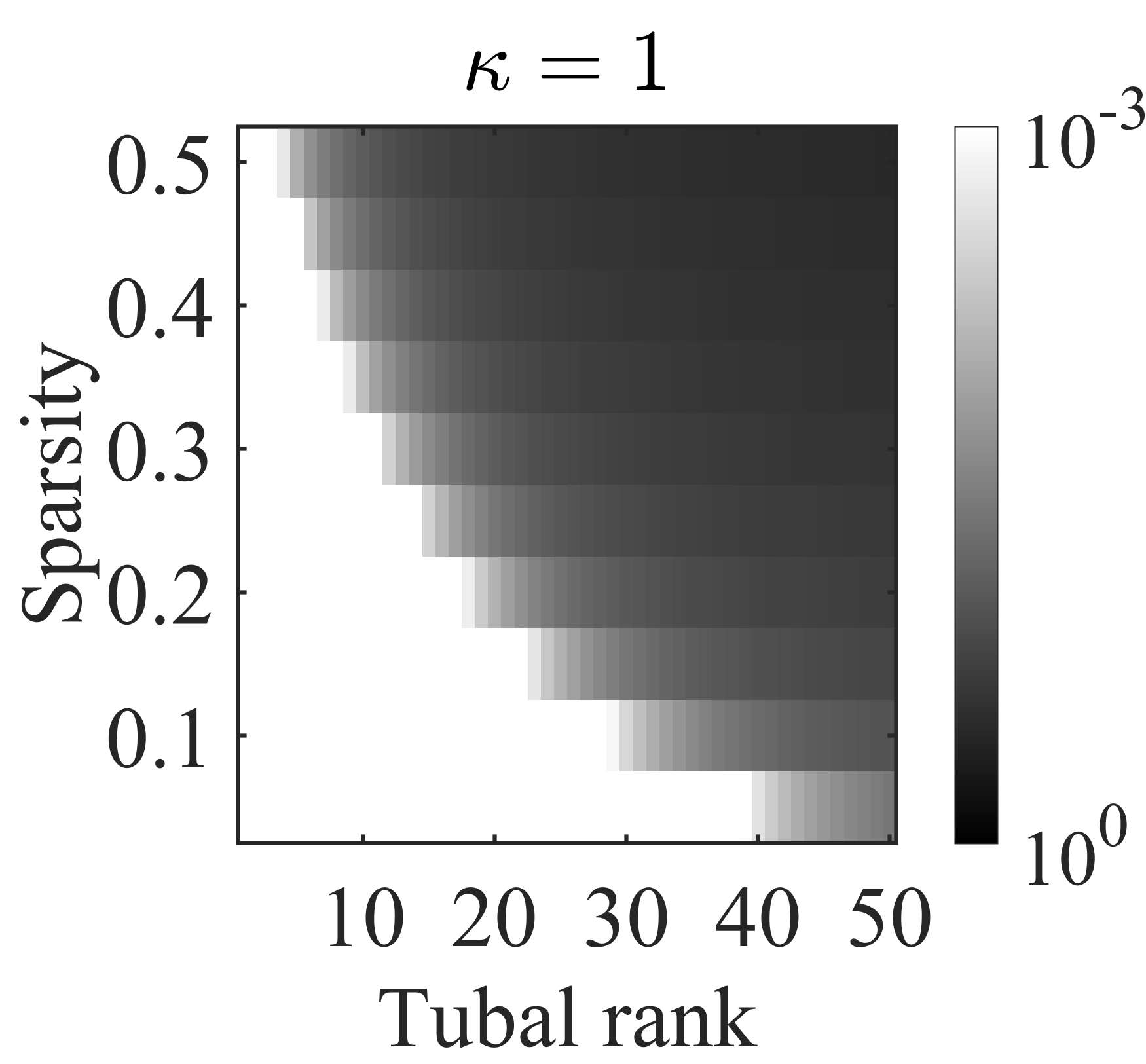}&
\includegraphics[width=0.5\textwidth]{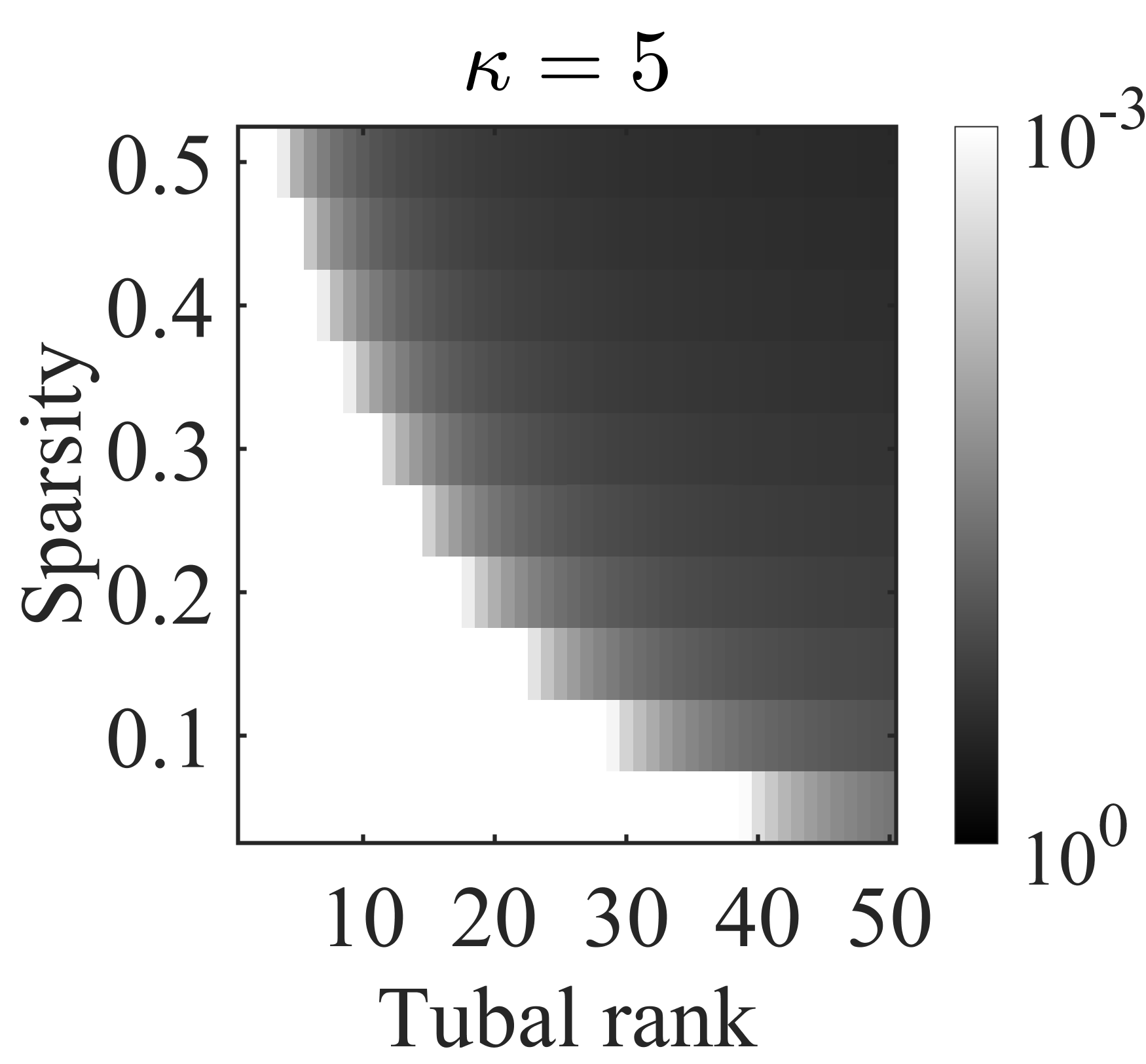}&
\includegraphics[width=0.5\textwidth]{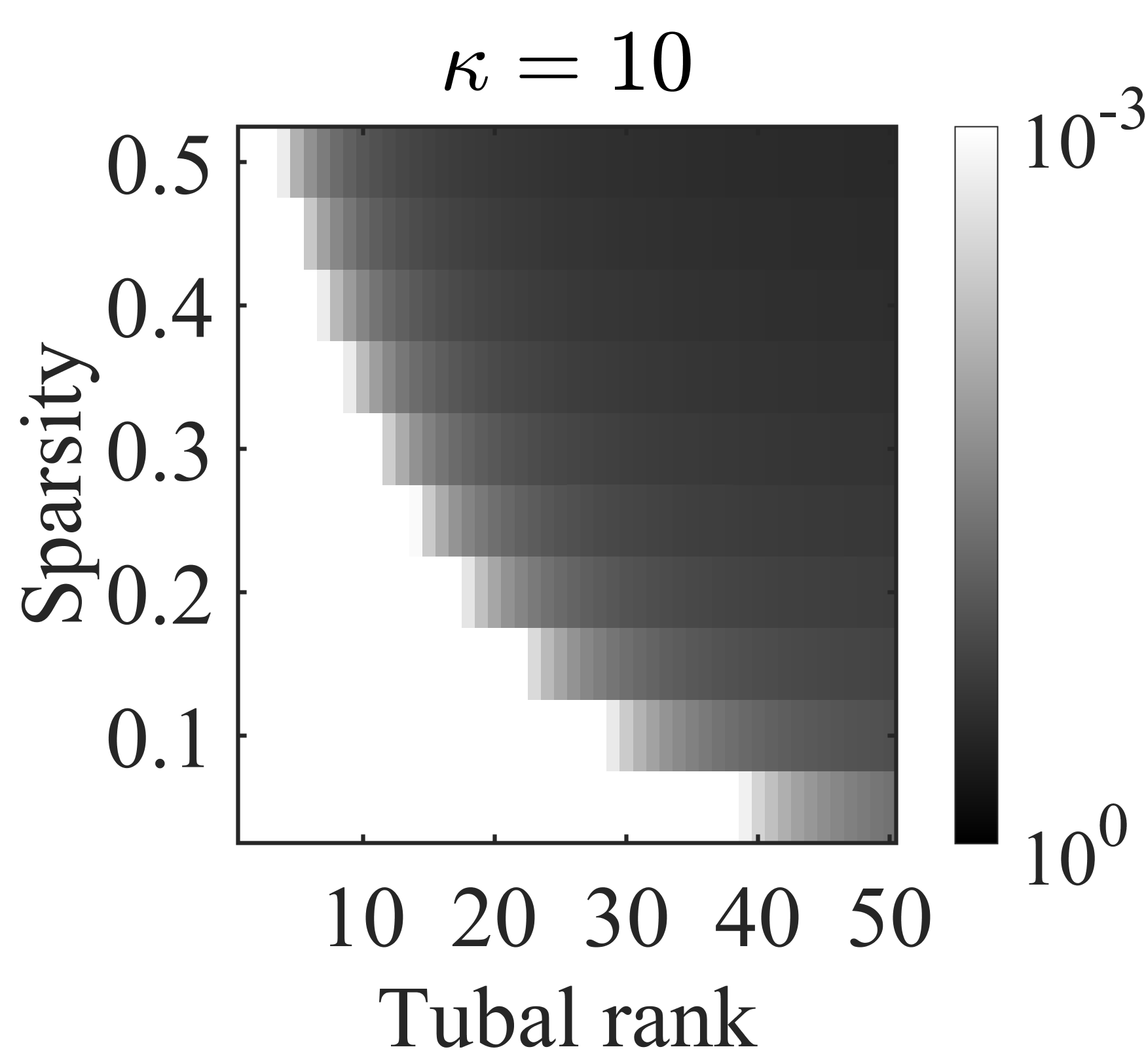}
 \\(a) &(b) &(c)
% \\(a) $\kappa = 1$&(b) $\kappa = 5$&(c) $\kappa = 10$
\end{tabular}}
% \vspace{-0.2cm}
\caption{Phase transition performance of  RTPCA-SGD under different condition numbers. Here $I_1 = I_2 = 100, I_3 = 50$.} 
\label{fig: Phasetransitions}
  \end{center}
  \vspace{-0.2cm}
\end{figure}

\subsubsection{Phase transition}
We begin with evaluating the phase transition performance of the proposed RTPCA-SGD and RTPCA-TNN methods with respect to tubal rank $R$ and sparsity level $\alpha$ under varying condition numbers. Specifically, we set $I_1 = I_2 = 100, I_3 = 50$, the iteration number $K = 100$, and run 10 random experiments for each pair of $(R, \alpha)$. The low-rank component is considered successfully recovered if $\text{RSE}_{\mathcal{X}} \leq 10^{-3}$. Fig.~\ref{fig: Phasetransitions} shows the median of $\text{RSE}_{\mathcal{X}}$ for each pair, where the whiter pixels indicate more successful recovery. It can be seen that exact recovery is achieved when both the tubal rank and sparsity are sufficiently small, which is consistent with Claim 1 of Theorem~\ref{thm:main_theorem}.

\begin{figure}[!t]
% \tiny
% \scriptsize
% \footnotesize
\Huge
\setlength{\tabcolsep}{0.1pt}
\begin{center}
\resizebox{0.495\textwidth}{!}{\begin{tabular}{cccc}
\includegraphics[width=0.49\textwidth]{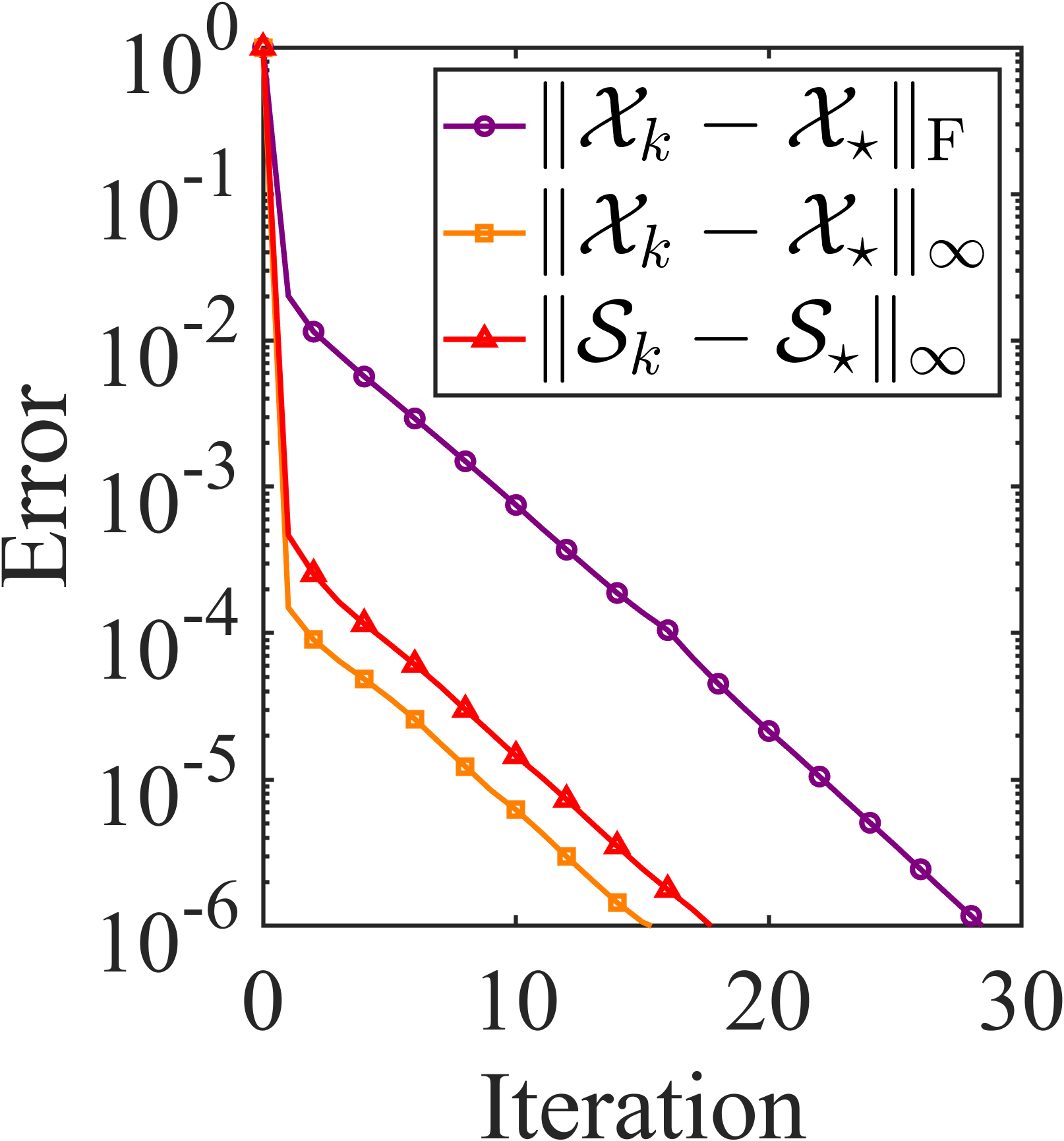}&
\includegraphics[width=0.475\textwidth]{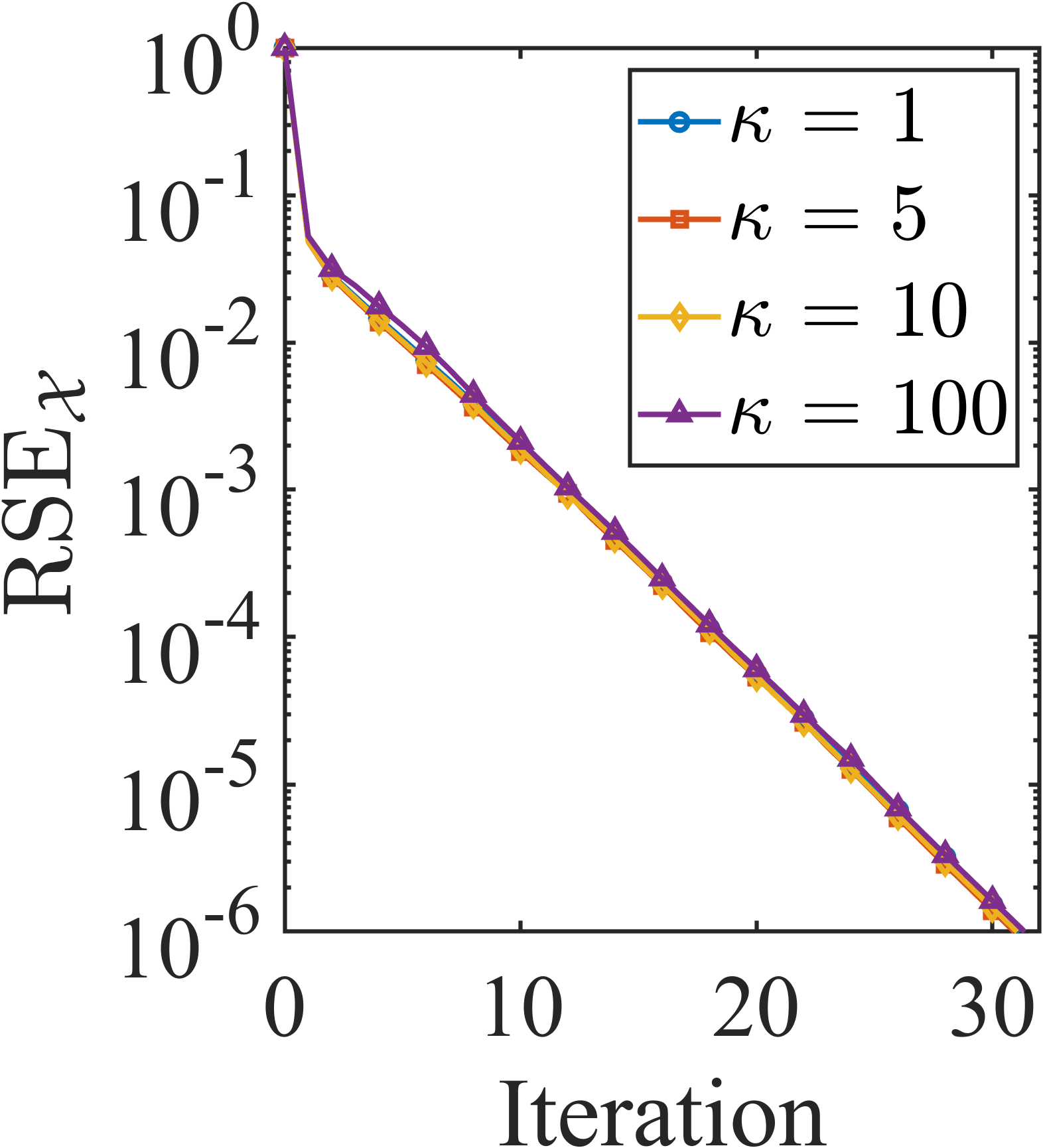}&
\ \includegraphics[width=0.5\textwidth]{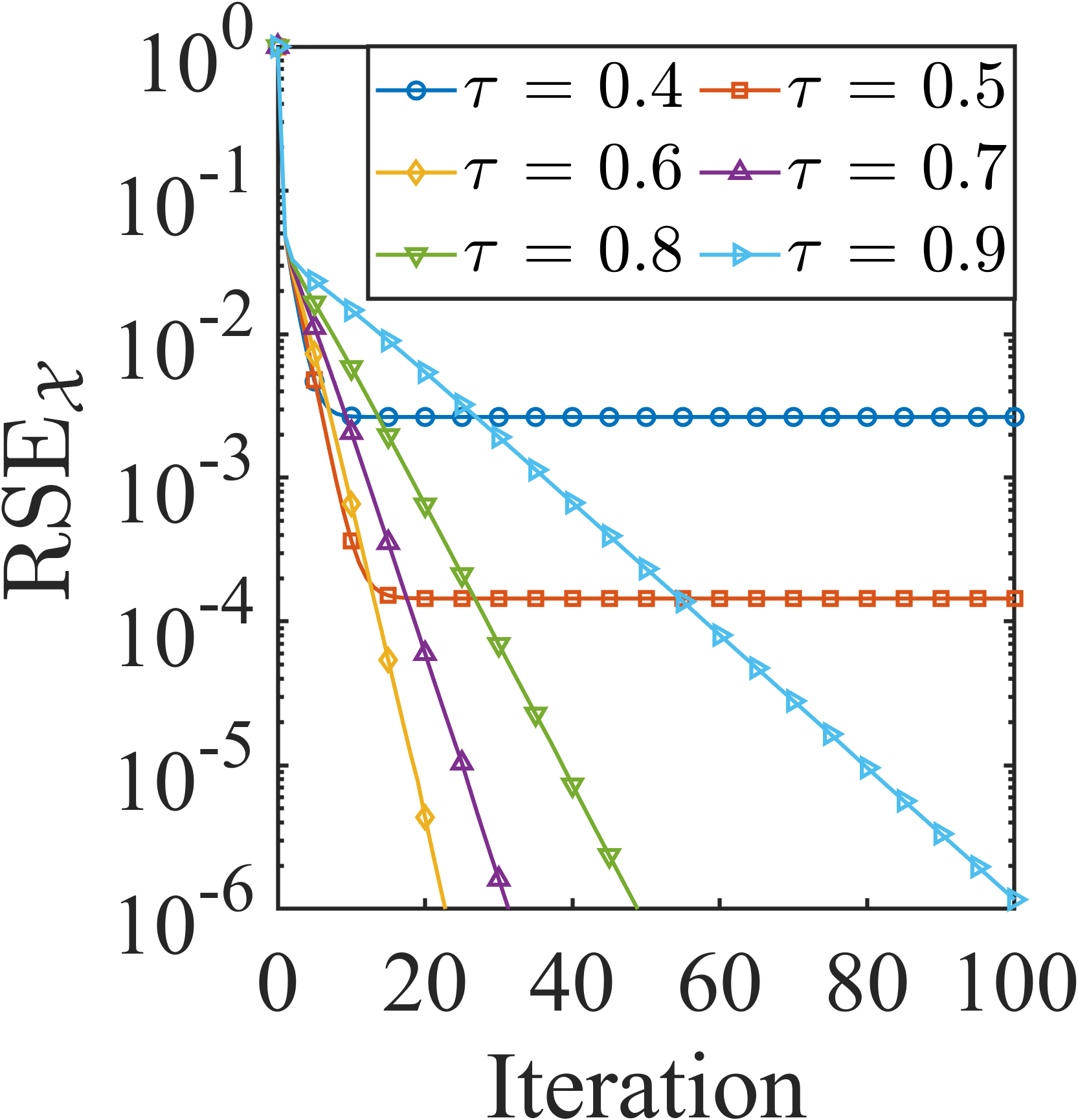}\\ \qquad (a) Different metrics & \qquad(b) Influence of  $\kappa$ & \qquad(c) Influence of $\tau$ 
\end{tabular}}
% \vspace{-0.2cm}
\caption{Convergence performance with respect to different error metrics, condition numbers and decay rates.} 
\label{fig:lcr}
  \end{center}
  \vspace{-0.1cm}
\end{figure}

\subsubsection{Linear convergence rate}
As shown in Eq.~\eqref{eq: con3}, Algorithm  \ref{Alg: LGRTPCA1} converges linearly at a constant rate with respect to three error metrics: $\left\|\mathcal{X}_k-\mathcal{X}_{\star}\right\|_{\mathrm{F}}, \left\|\mathcal{X}_k-\mathcal{X}_{\star}\right\|_{\infty}, \left\|\mathcal{S}_k-\mathcal{S}_{\star}\right\|_{\infty}$. To illustrate this, we set $I_1 = I_2 = 100$, $I_3 = 50$, $\alpha = 0.1$, $R = 5$, $\kappa = 5$ to conduct the experiments. The convergence curves for the different error metrics are plotted in Fig.~\ref{fig:lcr}(a). As observed, all three metrics exhibit linear convergence with the same convergence rate, thereby confirming  Claim 2 of Theorem~\ref{thm:main_theorem}.

\subsubsection{Independence of the condition number}
To validate Claim 3 in Theorem~\ref{thm:main_theorem} that the recovery performance of the proposed algorithm is independent of the condition number, we examine its phase transition performance and convergence behavior across various condition numbers, as shown in Fig.~\ref{fig: Phasetransitions} and in Fig.~\ref{fig:lcr}(b).
The results indicate that the exact recovery performance remains unaffected by the condition number, and the algorithm maintains a consistent linear convergence rate regardless of the condition number.

\subsubsection{Influence of the decay rate}
We further investigate the influence of the decay rate $\tau$ on the thresholding parameters. According to Theorem~\ref{thm:main_theorem}, $\tau$ also determines the convergence rate, with smaller values leading to faster convergence.
In our experiments, we set $I_1 = I_2 = 100$, $I_3 = 50$, $\alpha = 0.1$, $R = 5, \kappa = 5$. As illustrated in Fig.~\ref{fig:lcr}(c), the proposed method achieves exact recovery over a wide range of $\tau$ values, provided that $\tau$ is not too small. Moreover, larger values of $\tau$ result in slower convergence speeds, indicating a trade-off between accuracy and speed. 
Consequently, we choose $0.7 \leq \tau \leq 0.9$ for our experiments.

\subsubsection{Compared with RTPCA-TNN}
We also compare the proposed RTPCA-SGD method with RTPCA-TNN \cite{lu2019tensor}.
Fig.~\ref{fig: Phasetransition_com}(a) and Fig.~\ref{fig: Phasetransition_com}(b) shows their phase transition comparison when we set $I_1 = I_2 = 100, I_3 = 10, \kappa =5$.
We can see that the white/successful region of our method is larger than that of RTPCA-TNN, which shows the superior recovery performance of the proposed RTPCA-SGD.

\begin{figure}[!t]
% \tiny
% \scriptsize
% \footnotesize
\Huge
\setlength{\tabcolsep}{0.1pt}
\begin{center}
\resizebox{0.5\textwidth}{!}{\begin{tabular}{cccc}
\includegraphics[width=0.5\textwidth]{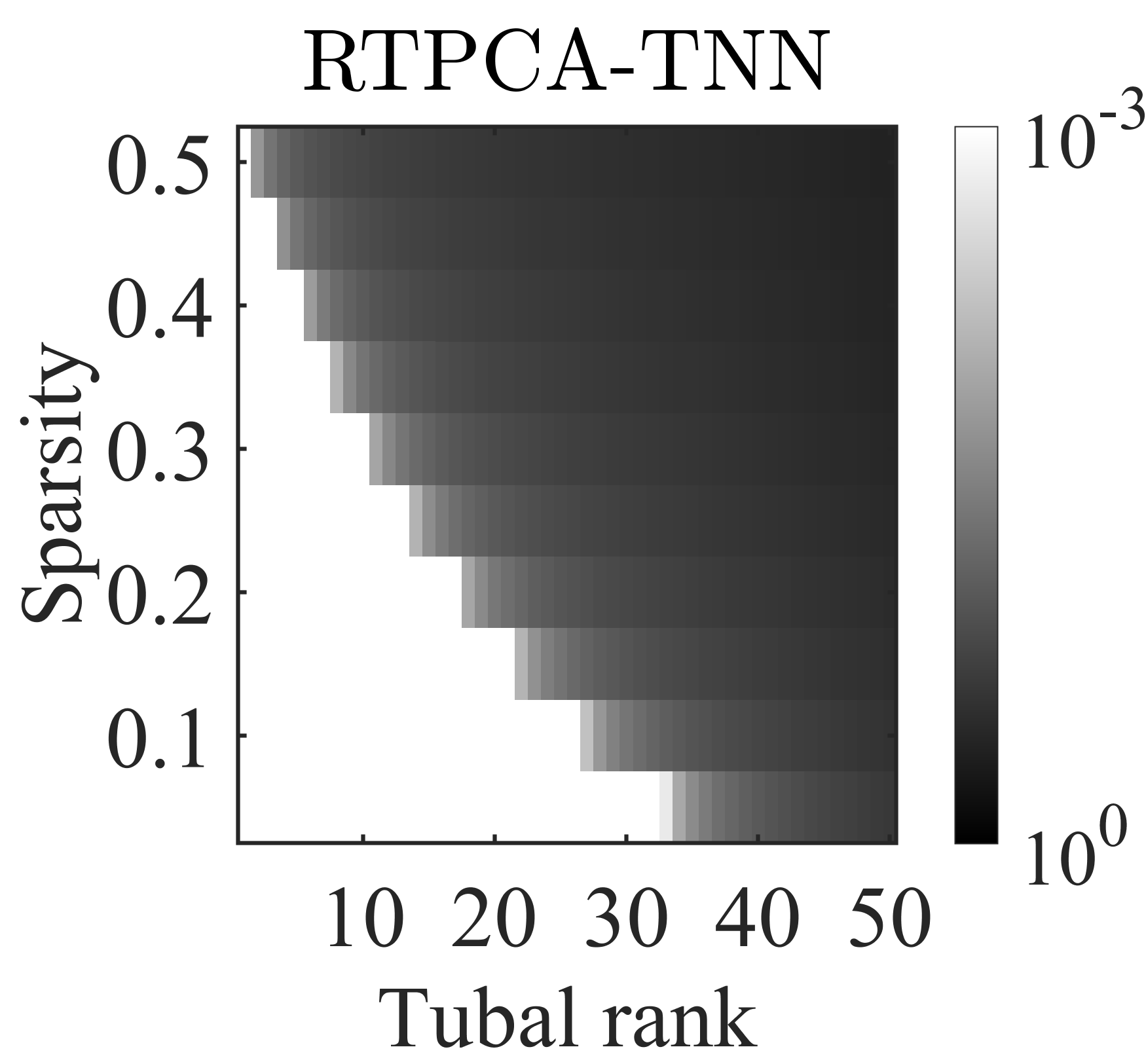}&
\includegraphics[width=0.5\textwidth]{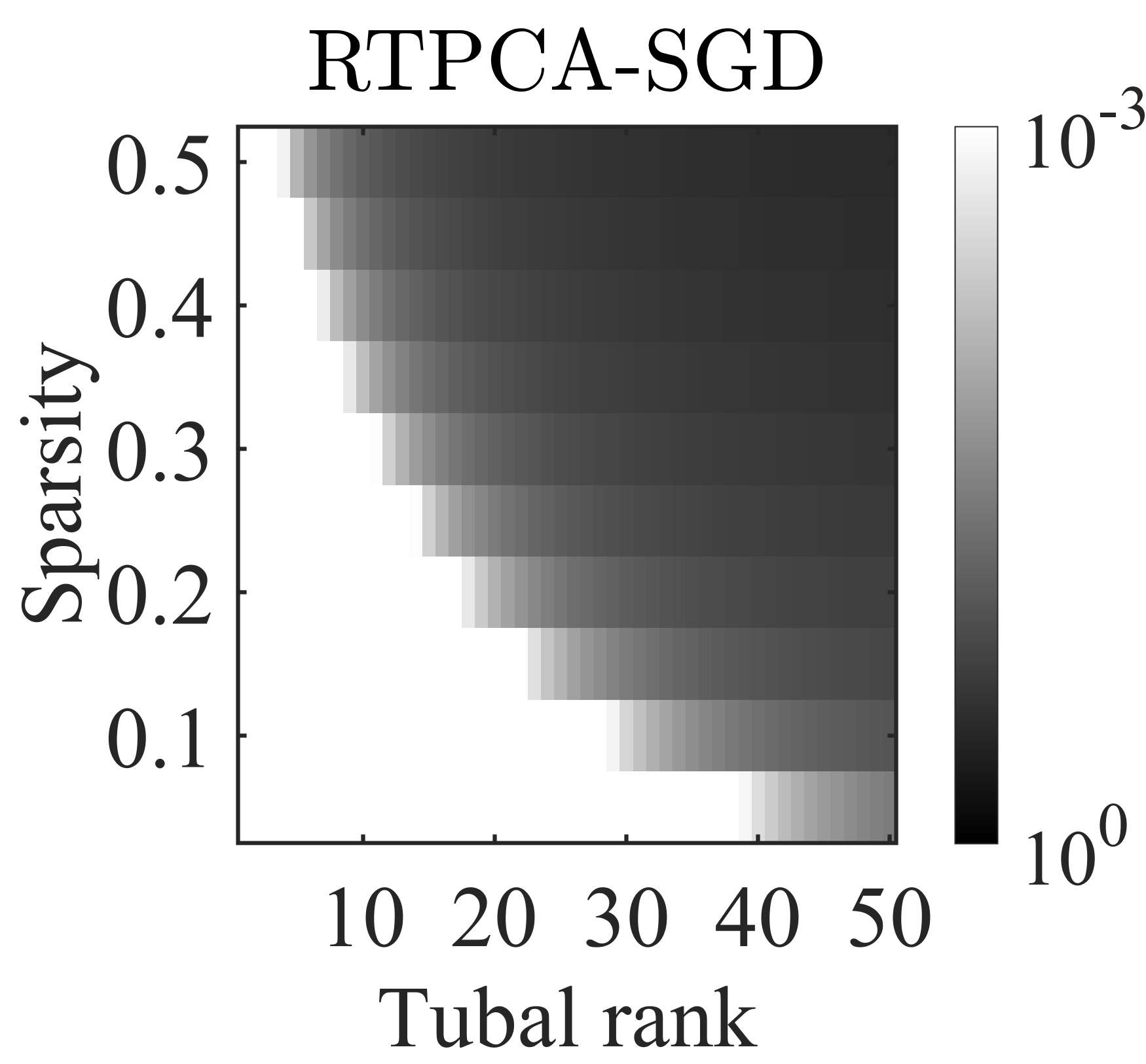}&
\includegraphics[width=0.5\textwidth]{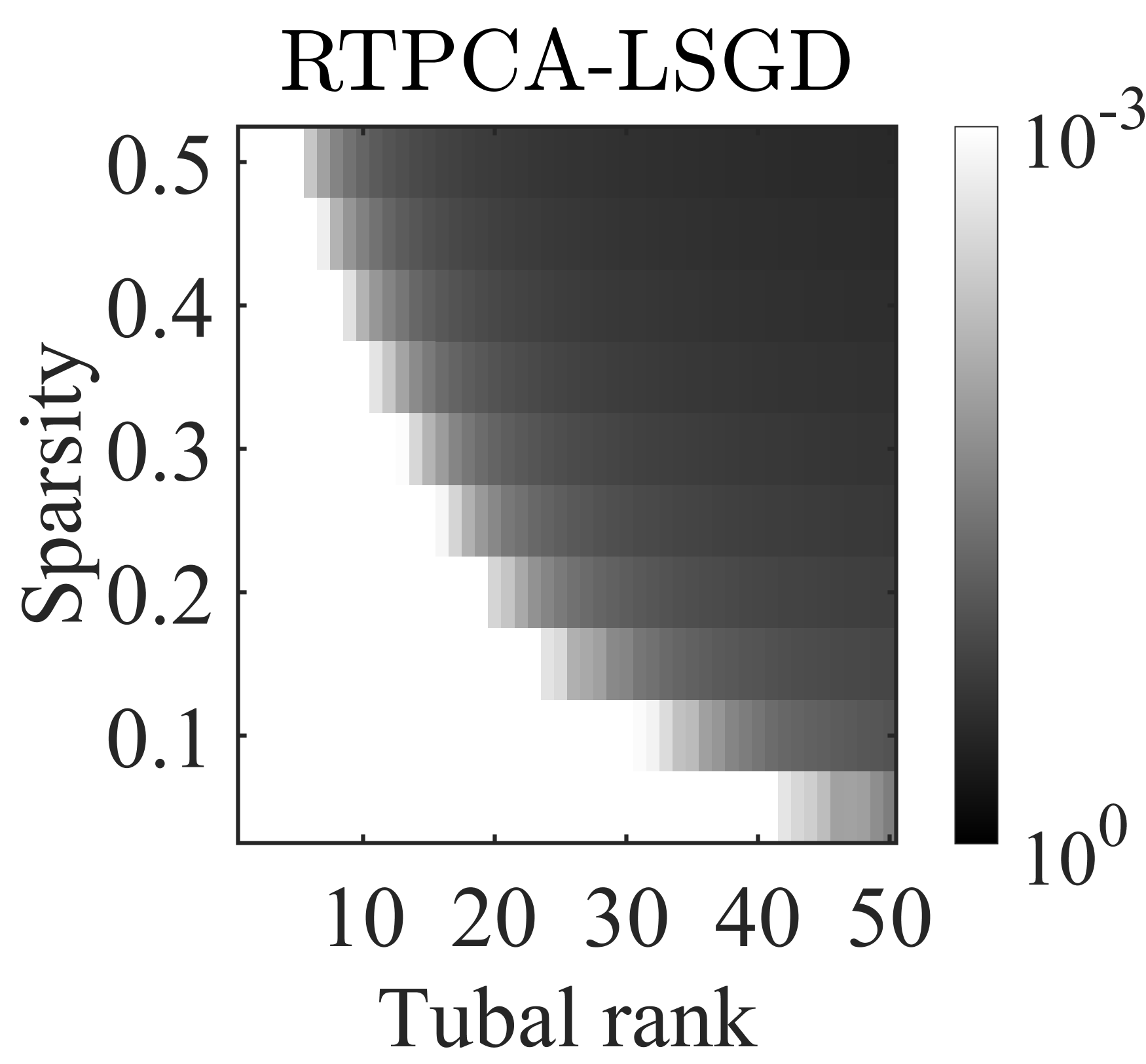}
 \\(a) &(b) &(c)
\end{tabular}}
% \vspace{-0.2cm}
\caption{Phase transition comparison of different methods. Here $I_1 = I_2 =  100, I_3 = 10$, $\kappa = 5$.} 
\label{fig: Phasetransition_com}
  \end{center}
  \vspace{-0.5cm}
\end{figure}

\subsubsection{Effectiveness of deep unfolding}
For this experiment, we set the same iteration number $K= 100$. As shown in Fig.~\ref{fig: Phasetransition_com}(b) and Fig.~\ref{fig: Phasetransition_com}(c), RTPCA-LSGD obtains better recovery performance than RTPCA-SGD. This result demonstrates the effectiveness of the deep unfolding approach for parameter optimization, further enhancing the recovery performance.

\subsection{Real data}
\subsubsection{Video denoising}
In this section, we evaluate the effectiveness of the proposed method through video denoising experiments.
Specifically, we use two video datasets with dimensions $144 \times 176 \times 30$, ``Akiyo" and ``News” \footnote{http://trace.eas.asu.edu/yuv/index.html}. The proposed RTPCA-SGD method is compared against three state-of-the-art approaches: RPCA-SGD~\cite{tong2021accelerating}, Tucker-SGD~\cite{dong2023fast}, and RTPCA-TNN~\cite{lu2019tensor}. 
For RTPCA-SGD, the parameters are set as $R = 3$, $K = 50$, $\tau = 0.8$, $\zeta_0 = 1$, $\zeta_1 = 1$.
All experiments are conducted on a computer equipped with an Intel i5-11400F CPU, 16GB of RAM, and an RTX 3060 Ti GPU.
Table~\ref{Tab: VDresults} presents the comparison results of all methods in terms of PSNR, RSE and CPU time on videos corrupted with  10\% and 30\% salt and pepper noise.
The results demonstrate that RTPCA-SGD achieves superior recovery accuracy compared to the state-of-the-art methods. Although RTPCA-SGD requires more CPU time than RPCA-SGD and Tucker-SGD due to the overhead of tensor computations, it remains faster than the classical RTPCA-TNN method because it eliminates the need for full t-SVD computations. This observation aligns with the computational complexity analysis presented in Section~\ref{sec: Computational complexity}.
Fig.~\ref{fig: VDresults} provides two visual examples, showing that the proposed method effectively recovers more facial and background details compared to other approaches.

\renewcommand{\arraystretch}{0.9}
\begin{table*}[!ht]
    \centering
    \small
    \caption{Comparison of different evaluation metrics on videos with different noise rates.}
    % \vspace{-0.1cm}
    \label{Tab: VDresults}
    \setlength{\tabcolsep}{4.1pt} % 设置列间距
    \begin{tabular}{@{}ccccccccccccc@{}}
    \toprule
    \multirow{3}{*}{Methods} & \multicolumn{6}{c}{Akiyo} & \multicolumn{6}{c}{News} \\ \cmidrule(l){2-7} \cmidrule(l){8-13}
    & \multicolumn{3}{c}{$\alpha=0.1$} & \multicolumn{3}{c}{$\alpha=0.3$} & \multicolumn{3}{c}{$\alpha=0.1$} & \multicolumn{3}{c}{$\alpha=0.3$} \\ \cmidrule(l){2-4} \cmidrule(l){5-7} \cmidrule(l){8-10} \cmidrule(l){11-13}
    & PSNR$\uparrow$ & RSE$\downarrow$ & Time(s)$\downarrow$ & PSNR$\uparrow$ & RSE$\downarrow$ & Time(s)$\downarrow$ & PSNR$\uparrow$ & RSE$\downarrow$ & Time(s)$\downarrow$ & PSNR$\uparrow$ & RSE$\downarrow$ & Time(s)$\downarrow$ \\ \midrule
    RPCA-SGD\cite{tong2021accelerating} & 39.7715 & 0.0387 & \textbf{0.61} & 31.2856 & 0.0863 & \textbf{0.66} & 32.3529 & 0.0682 & \textbf{0.57} & 27.9771 & 0.1216 & \textbf{0.58} \\
    Tucker-SGD\cite{dong2023fast} & 37.3051 & 0.0362 & 1.63 & 34.1089 & 0.0506 & 1.65 & 32.0452 & 0.0704 & 1.47 & 31.2059 & 0.0767 & 1.50 \\
    RTPCA-TNN\cite{lu2019tensor} & 36.7287 & 0.0371 & 6.78 & 34.0879 & 0.0488 & 6.53 & 32.1030 & 0.0691 & 6.38 & 29.1768 & 0.0964 & 6.49 \\
    RTPCA-SGD &  \textbf{41.8706} & \textbf{0.0206} & 2.83 &  \textbf{34.8449} &  \textbf{0.0451} & 2.80 &  \textbf{35.4187} & \textbf{0.0475} & 2.75 &  \textbf{31.9326} &  \textbf{0.0703} & 2.77 \\
    \bottomrule
    \end{tabular}
    % \vspace{-0.2cm}
\end{table*}

\begin{figure*}[!t]
% \tiny
% \scriptsize
\footnotesize
\setlength{\tabcolsep}{1pt}
\begin{center}
% \resizebox{\textwidth}{!}{\begin{tabular}{ccccccc}
\begin{tabular}{cccccc}
\includegraphics[width=0.163\textwidth]{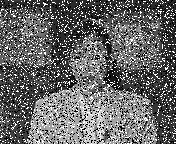}&
\includegraphics[width=0.163\textwidth]{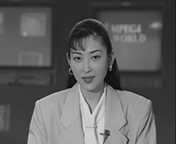}&
\includegraphics[width=0.163\textwidth]{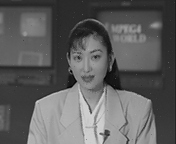}&
\includegraphics[width=0.163\textwidth]{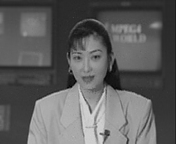}&
\includegraphics[width=0.163\textwidth]{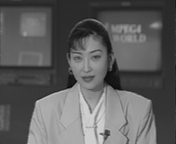}&
\includegraphics[width=0.163\textwidth]{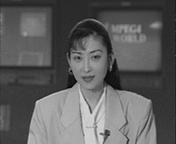}\\
\includegraphics[width=0.163\textwidth]{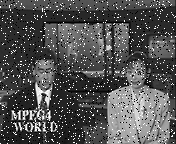}&
\includegraphics[width=0.163\textwidth]{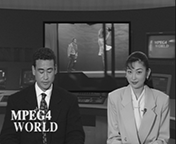}&
\includegraphics[width=0.163\textwidth]{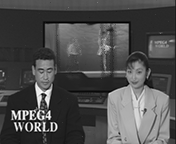}&
\includegraphics[width=0.163\textwidth]{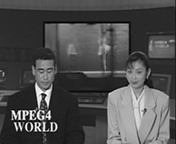}&
\includegraphics[width=0.163\textwidth]{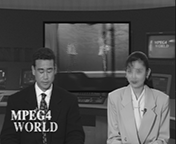}&
\includegraphics[width=0.163\textwidth]{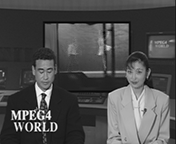}\\
Noisy & GT & RPCA-SGD\cite{tong2021accelerating}  & Tucker-SGD\cite{dong2023fast} & RTPCA-TNN\cite{lu2019tensor}  &   RTPCA-SGD 
\end{tabular}
% \vspace{-0.2cm}
\caption{Video denoising examples on the $11$-th frame of ``Akiyo'' with 30\% noise and the $24$-th frame of ``News'' with 10\%  noise. } 
\label{fig: VDresults}
  \end{center}
  % \vspace{-0.1cm}
\end{figure*}

\subsubsection{Background initialization}
We also evaluate the proposed methods on the background initialization task. 
In surveillance video sequences consisting of a series of frames, the background is typically modeled as a low-rank component, while the foreground target corresponds to a sparse component.
For RTPCA-SGD, the parameters are set as $R = 3$, $K = 50$, $\tau = 0.9$, $\zeta_0 = 1$, $\zeta_1 = 1$.
For this experiment, three scenes are selected from the classical SBI dataset\footnote{https://sbmi2015.na.icar.cnr.it/SBIdataset.html}\cite{bouwmans2017scene, jodoin2017extensive}: 
``HighwayI'' ($240\times320\times441$), ``IBMtest2''($256\times320\times91$), and ``HallAndMonitor'' ($240\times352\times297$).
We evaluate recovery quality using five widely adopted metrics from \cite{bouwmans2017scene}: Average Gray-level Error (AGE), Percentage of Error Pixels (pEPs), Percentage of Clustered Error Pixels (pCEPs), Multi-Scale Structural Similarity Index (MSSSIM), and Peak Signal-to-Noise Ratio (PSNR). 
For the first three metrics(AGE, pEPs and pCEPs), smaller values indicate better background recovery performance. In contrast, higher values of MSSSIM and PSNR reflect better recovery accuracy. Additionally, we compare the CPU time consumed by each method to assess computational efficiency. 
As shown in Table~\ref{tab: evaluation}, RTPCA-SGD achieves the best recovery accuracy across all scenes with competitive computational efficiency.
In particular, it consumes much less time than RTPCA-TNN thanks to the removal of full t-SVD computations at each iteration.
Figure~\ref{fig: BM} provides a visual and intuitive illustration of the recovered background images by different methods. 
Our method consistently removes foreground pixels effectively while preserving background information very well in all video sequences.

\begin{table*}
    	\centering
         \small
		\caption{Comparison of different evaluation metrics on background initialization.}
           \vspace{-0.1cm}
		\label{tab: evaluation}
             \setlength{\tabcolsep}{9.5pt} % 列间距
		\begin{tabular}{cccccccc}
  
			\toprule
			Scenes  & Methods & AGE$\downarrow$ & pEPs$\%$$\downarrow$ & pCEPs$\%$$\downarrow$ & MSSSIM$\uparrow$ & PSNR$\uparrow$ & CPU time (s)$\downarrow$\\
			\midrule
            \multirow{5}*{\begin{tabular}[c]{@{}c@{}}HighwayI\\($240\times320\times441$)\end{tabular}}  
            & RPCA-SGD\cite{tong2021accelerating}  &  7.5118 & 2.2206 &  1.0663 & 0.9664 &  29.2713&  \textbf{24.84}\\
            &Tucker-SGD\cite{dong2023fast}  &  2.6755 & 0.6551 &  0.3832 & 0.9779 &  36.8433&   65.37\\
            &RTPCA-TNN\cite{lu2019tensor}  &  12.3085 & 16.5169 &  13.1783 & 0.7001 &  20.6919&  361.40\\
            & {RTPCA-SGD  } &   \textbf{2.5416} &  \textbf{0.5415} &   \textbf{0.2533} &  \textbf{0.9857}  &   \textbf{37.1574} &86.78\\
             \midrule
            \multirow{5}*{\begin{tabular}[c]{@{}c@{}}IBMtest2\\($256\times320\times91$)\end{tabular}}  
           & RPCA-SGD\cite{tong2021accelerating}  &  6.2228 & 6.0283 &  3.3278 & 0.9790 &  28.8994&  \textbf{5.52}\\
            &Tucker-SGD\cite{dong2023fast}  &  4.6854 & 1.2920 &  0.7864 & 0.9838 &  32.0206&   12.39\\
            &RTPCA-TNN\cite{lu2019tensor}  &  7.5074 & 5.9249 &  4.4934 & 0.9322 &  23.6850&  54.56\\
            &{RTPCA-SGD  } &   \textbf{3.6856} &  \textbf{0.1162} &   \textbf{0.0011} & \textbf{0.9955}  &   \textbf{35.6996} &15.79\\
            \midrule
            \multirow{5}*{\begin{tabular}[c]{@{}c@{}} HallAndMonitor\\($240\times352\times297$)\end{tabular}}  
          & RPCA-SGD\cite{tong2021accelerating}  &  4.5542 & 3.5843 &  2.0127 & 0.9521 &  28.4215&  \textbf{18.37}\\
            &Tucker-SGD\cite{dong2023fast}  &  3.8604 & 2.0448 &  1.0983 & 0.9522 &  28.1574& 50.61\\
            &RTPCA-TNN\cite{lu2019tensor}  &  4.9731 & 3.3638 &  2.2364 & 0.9132 &  25.6906&  287.15\\
            &{RTPCA-SGD  } &  \textbf{3.2279} & \textbf{1.2190} &  \textbf{0.6554} &  \textbf{0.9628}  &   \textbf{29.2700} &78.41\\
			\bottomrule
		\end{tabular}
	\vspace{-0.2cm}
\end{table*}

\begin{figure*}[htbp]
% \scriptsize
\footnotesize
% \small
\setlength{\tabcolsep}{1pt}
\begin{center}
\begin{tabular}{cccccc} 
\includegraphics[width=0.163\textwidth]{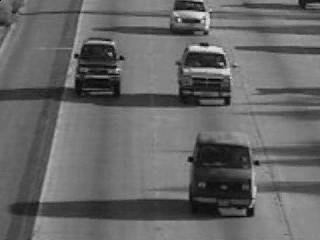}&
\includegraphics[width=0.163\textwidth]{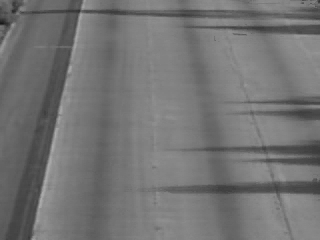}&
\includegraphics[width=0.163\textwidth]{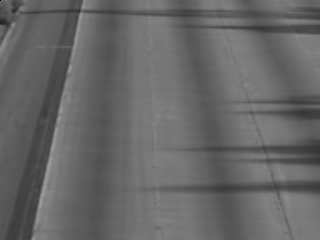}&
\includegraphics[width=0.163\textwidth]{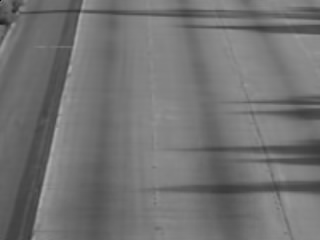}&
\includegraphics[width=0.163\textwidth]{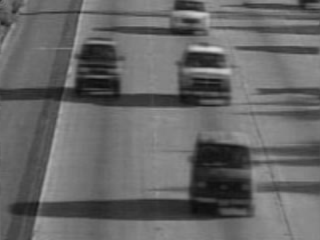}&
\includegraphics[width=0.163\textwidth]{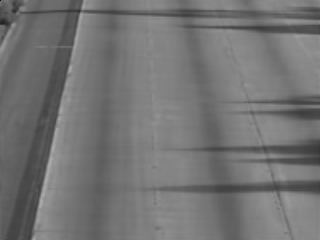}\\
\includegraphics[width=0.163\textwidth]{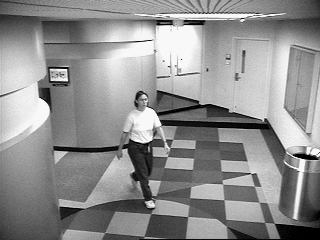}&
\includegraphics[width=0.163\textwidth]{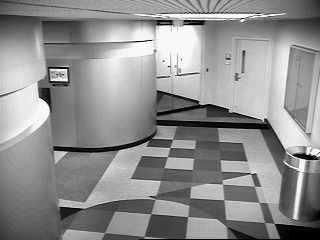}&
\includegraphics[width=0.163\textwidth]{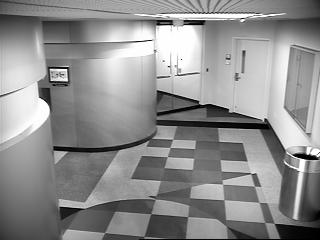}&
\includegraphics[width=0.163\textwidth]{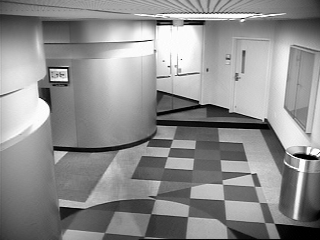}&
\includegraphics[width=0.163\textwidth]{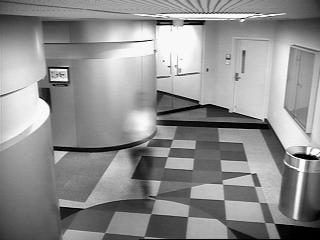}&
\includegraphics[width=0.163\textwidth]{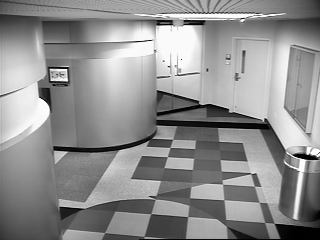}\\
\includegraphics[width=0.163\textwidth]{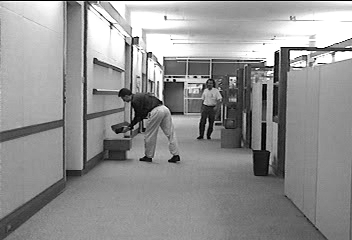}&
\includegraphics[width=0.163\textwidth]{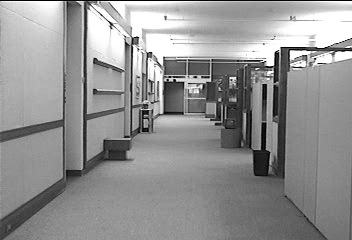}&
\includegraphics[width=0.163\textwidth]{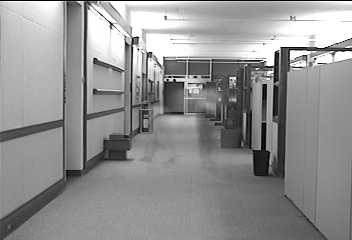}&
\includegraphics[width=0.163\textwidth]{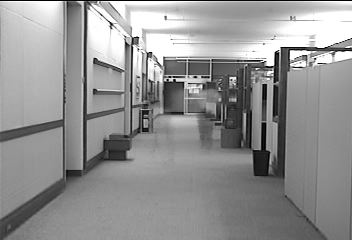}&
\includegraphics[width=0.163\textwidth]{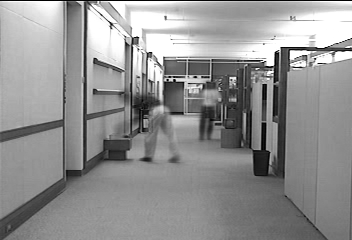}&
\includegraphics[width=0.163\textwidth]{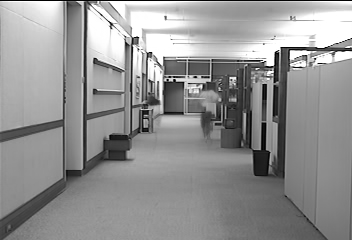}\\
Observed & GT & RPCA-SGD\cite{tong2021accelerating}  & Tucker-SGD\cite{dong2023fast} & RTPCA-TNN\cite{lu2019tensor}  &   RTPCA-SGD
  \end{tabular}
  \caption{Background initialization results by different methods on 3 scenes including the $182$-th frame of ``HighwayI'', the $62$-th frame of ``IBMtest2'', and the $132$-th frame of ``HallAndMonitor''.}
		\label{fig: BM}
  \end{center}\vspace{-0.6cm}
\end{figure*}

\subsection{Effectiveness of the deep unfolding}
We also evaluate the effectiveness of the deep unfolding method on real-world datasets by comparing the performance of RTPCA-SGD and RTPCA-LSGD, as shown in Fig. \ref{fig: discussion}.
In the video denoising experiment, RTPCA-LSGD performs comparably to RTPCA-SGD under a low noise rate (10\%) but significantly outperforms it under a high noise rate (30\%). This demonstrates the strength of the deep unfolding method in parameter optimization, enabling superior recovery performance in challenging scenarios.
In addition, RTPCA-LSGD achieves slight yet consistent improvements over RTPCA-SGD in the background initialization task. These results further validate the practical effectiveness of the deep unfolding method across different tasks.

\begin{figure*}[htbp]
% \scriptsize
% \small
\footnotesize
\setlength{\tabcolsep}{1pt}
\begin{center}
\begin{tabular}{cccccc} 
10 \% noise & 41.8706 dB & 41.8831 dB & HighwayI& 37.1574 dB& 37.2183 dB\\
\includegraphics[width=0.152\textwidth]{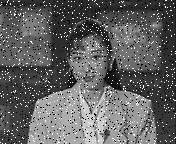}&
\includegraphics[width=0.152\textwidth]{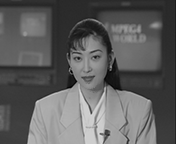}&
\includegraphics[width=0.152\textwidth]{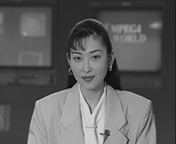}&\quad
\includegraphics[width=0.166\textwidth]{images/background/imshow/HighwayI_frame182/Observed.png}&
\includegraphics[width=0.166\textwidth]{images/background/imshow/HighwayI_frame182/RTPCA-SGD.png}
&\includegraphics[width=0.166\textwidth]{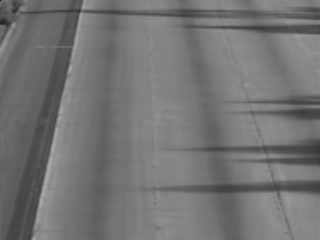}\\
 20 \% noise & 40.2838 dB & 41.1209 dB & IBMtest2 & 35.6996 dB& 35.8065 dB\\
\includegraphics[width=0.152\textwidth]{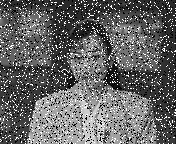}&
\includegraphics[width=0.152\textwidth]{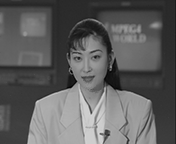}&
\includegraphics[width=0.152\textwidth]{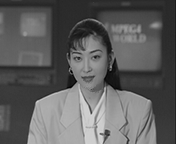}&\quad
\includegraphics[width=0.166\textwidth]{images/background/imshow/IBMtest2_frame62/Observed.png}&
\includegraphics[width=0.166\textwidth]{images/background/imshow/IBMtest2_frame62/RTPCA-SGD.png}
&\includegraphics[width=0.166\textwidth]{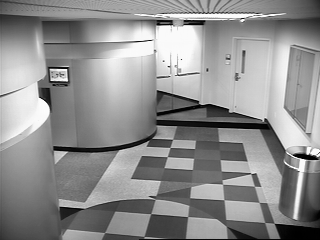}\\
 30 \% noise & 34.8449 dB & 39.4644 dB &  HallAndMonitor & 29.2700 dB& 29.3871 dB \\
\includegraphics[width=0.152\textwidth]{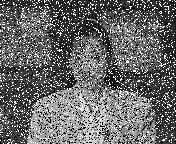}&
\includegraphics[width=0.152\textwidth]{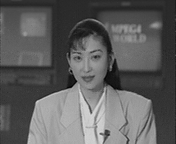}&
\includegraphics[width=0.152\textwidth]{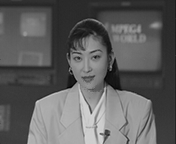}&\quad
\includegraphics[width=0.166\textwidth, height=2.22cm]{images/background/imshow/HallAndMonitor_frame105/Observed.png}&     
\includegraphics[width=0.166\textwidth, height=2.22cm]{images/background/imshow/HallAndMonitor_frame105/RTPCA-SGD.png}&
\includegraphics[width=0.166\textwidth, height=2.22cm]{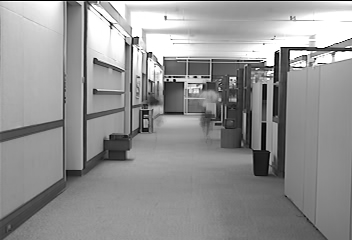}\\
Observed &  RTPCA-SGD&  RTPCA-LSGD & Observed &  RTPCA-SGD&  RTPCA-LSGD \vspace{4pt}\\
 \multicolumn{3}{c}{(a) Video denoising} & \multicolumn{3}{c}{(b) Background initialization} 
  \end{tabular}
  \caption{Comparison reslults of RTPCA-SGD and RTPCA-LSGD on real-world datasets. The avarage PSNR values are shown on the top of the subfigure. }
		\label{fig: discussion}
  \end{center}\vspace{-0.6cm}
\end{figure*}

\section{Conclusions} \label{sec: conclusions}
This paper proposes the RTPCA-SGD model, introducing an efficient ScaledGD method within the t-SVD framework for the first time, along with rigorous theoretical guarantees for exact recovery. 
Additionally, we develop a learnable self-supervised deep unfolding model termed RTPCA-LSGD, enabling effective parameter learning to achieve further performance improvement. 
Synthetic and real-world experiments validate our theoretical findings and demonstrate the effectiveness and efficiency of the proposed methods.
Future work could extend the proposed method to any invertible transformations\cite{qin2022low}, high-order versions \cite{feng2023multiplex},  and other tensor analysis tasks, such as tensor completion and tensor regression~\cite{liu2022tensor}. In addition, exploring recovery guarantees for cases with unknown tubal rank remains a promising direction~\cite{giampouras2024guarantees}.

\ifCLASSOPTIONcaptionsoff
\newpage
\fi

\bibliographystyle{IEEEtran}
\bibliography{reference}
\label{sec:refs}

\begin{IEEEbiography}[{\includegraphics[width=1in,height=1.25in,clip,keepaspectratio]{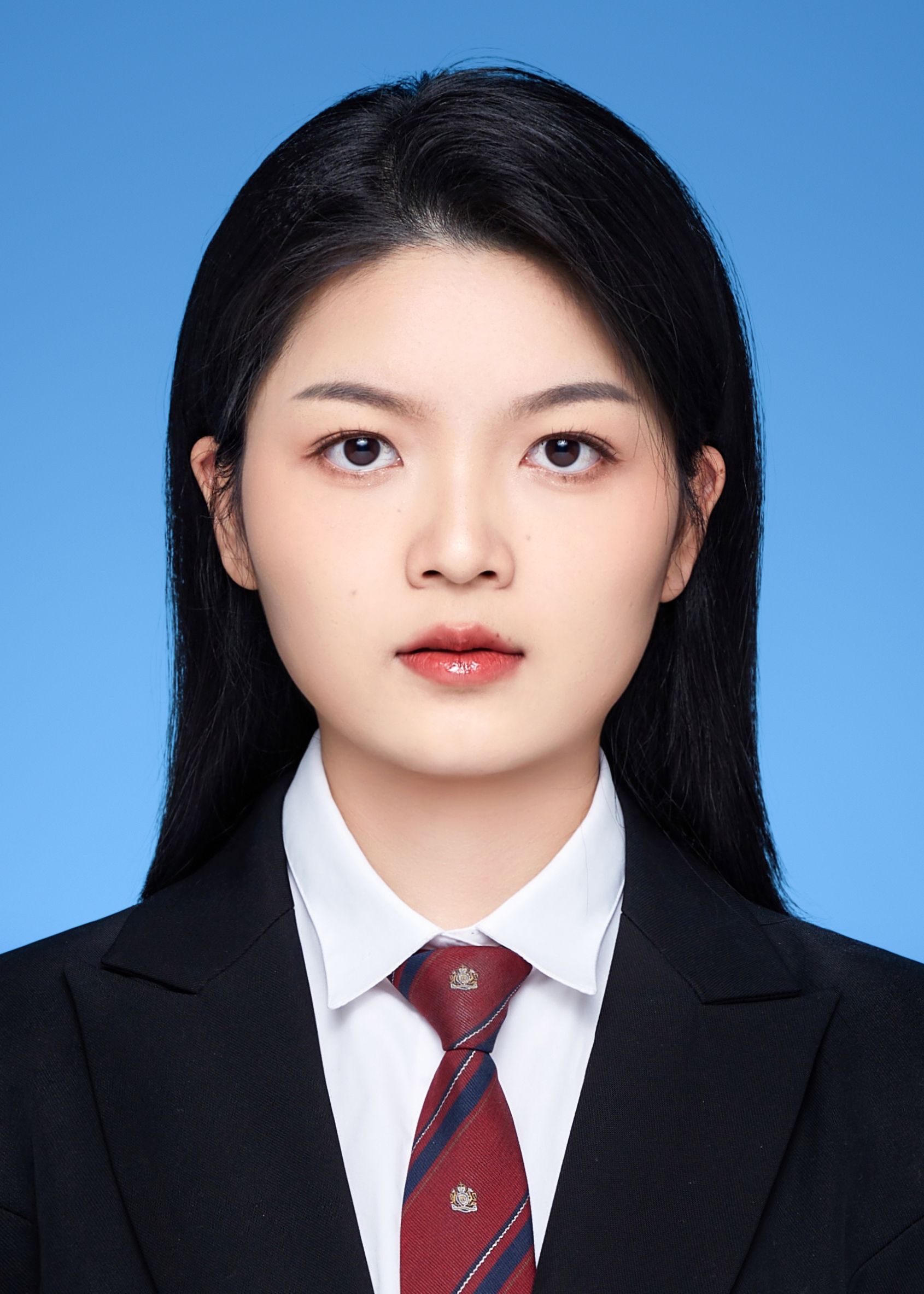}}]{LanlanFeng}
received the B.S. degree from the
University of Electronic Science and Technology of
China (UESTC), Chengdu, China, in 2017, where
she is currently pursuing the Ph.D. degree. Her
research interests include tensor principal component analysis, tensor completion, and tensor neural networks.	
\end{IEEEbiography}

\begin{IEEEbiography}[{\includegraphics[width=1in,height=1.25in,clip,keepaspectratio]{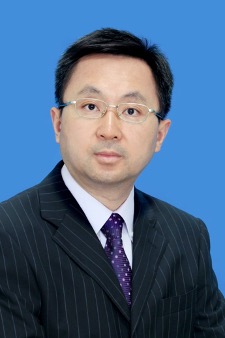}}]
{Ce Zhu} (Fellow, IEEE) received the B.S. degree in electronic and information engineering from Sichuan University,  Chengdu, China, in 1989, and the M.Eng. and Ph.D. degrees from Southeast University, Nanjing, China, in 1992 and 1994, respectively, all in electronic and information engineering. He held a postdoctoral research position with the Chinese University of Hong Kong in 1995, the City University of Hong Kong, and the University of Melbourne, Australia, from 1996 to 1998. For 14 years from 1998 to 2012, he was with Nanyang Technological University, Singapore, where he was a research fellow, a program manager, an assistant professor, and then promoted to an associate professor in 2005. He has been with the University of Electronic Science and Technology of China (UESTC), Chengdu, China, as a professor since 2012, and is the Dean of Glasgow College, a joint school between the University of Glasgow, Glasgow, UK and the University of Electronic Science and Technology of China, Chengdu, China. His research interests include video coding and communications, video analysis and processing, 3D video, and visual perception and applications. He has served on the editorial boards of a few journals, including as an Associate Editor for IEEE Transactions on Image Processing, IEEE Transactions on Circuits and Systems for Video Technology, IEEE Transactions on Broadcasting, IEEE Signal Processing Letters, the Editor of IEEE Communications Surveys and Tutorials, and the Area Editor of Signal Processing: Image Communication. He was also the Guest Editor of a few special issues in international journals, including as the Guest Editor of the IEEE Journal of Selected Topics in Signal Processing.
He was an APSIPA Distinguished Lecturer during 2021-2022, and also an IEEE Distinguished Lecturer of Circuits and Systems Society during 2019-2020. He is the Chair of IEEE ICME Steering Committee during 2024-2025. He was the co-recipient of multiple paper awards at international conferences, including the most recent Best Demo Award in IEEE MMSP 2022, and the Best Paper Runner Up Award in IEEE ICME 2020.
\end{IEEEbiography}

\begin{IEEEbiography}[{\includegraphics[width=1in,height=1.25in,clip,keepaspectratio]{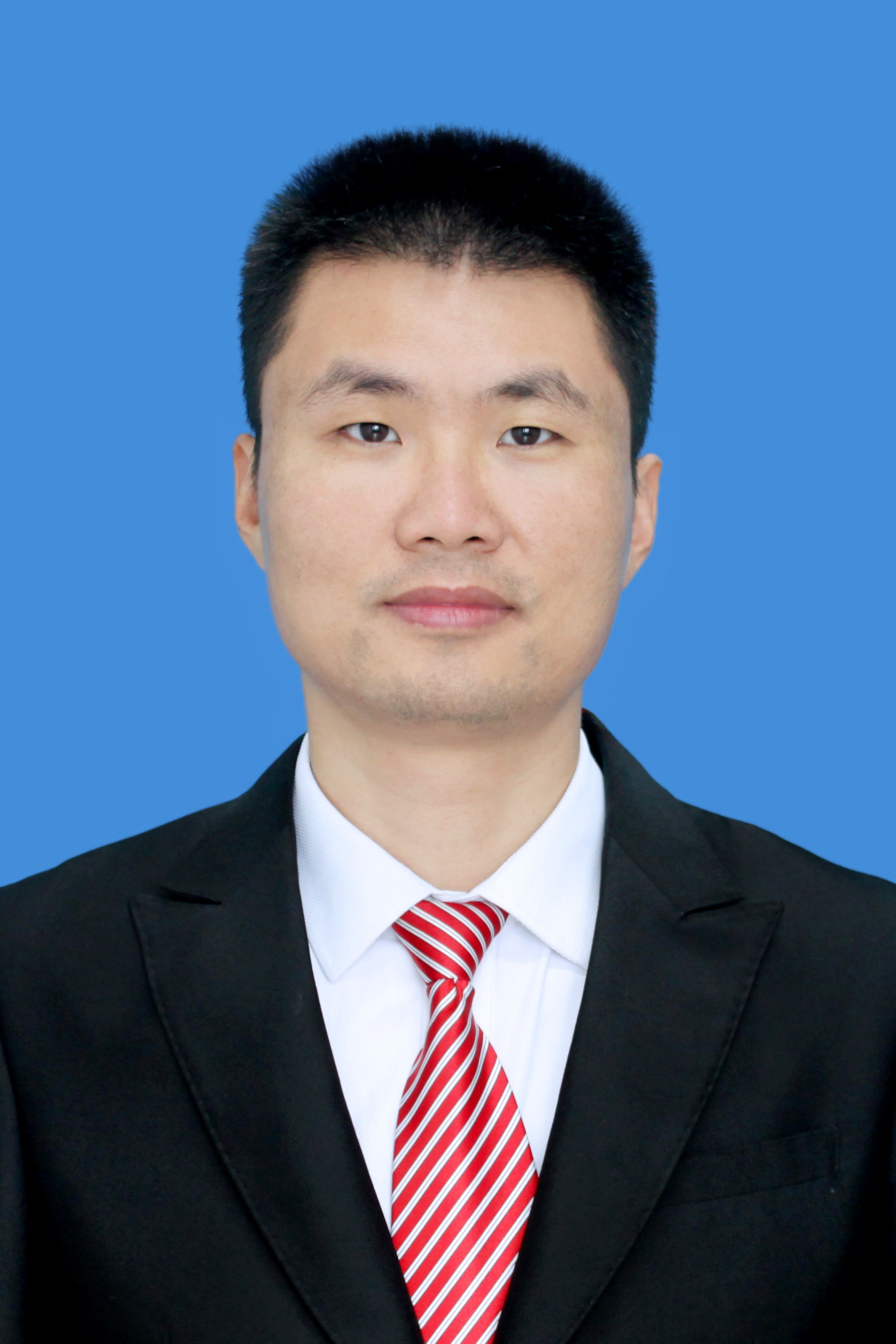}}]{Yipeng Liu} (Senior Member, IEEE) received the B.S. degree in biomedical engineering and the Ph.D. degree in information and communication engineering from University of Electronic Science and Technology of China (UESTC), Chengdu, China, in 2006 and 2011, respectively. In 2011, he was a Research Engineer with Huawei Technologies. From 2011 to 2014, he was a Research Fellow with the University of Leuven, Leuven, Belgium. In 2014, he joined as an Associate Professor at the UESTC, Chengdu, China, where he has been a Full Professor, since 2023. 
His research interest is tensor for data processing. He has published over 100 papers, co-authored two books “Tensor Computation for Data Analysis” by Springer and “Tensor Regression” by Foundations and Trends\textregistered \  in Machine Learning of NOW Publishers, and edited one book “Tensors for Data Processing” by Elsevier. He has served as an associate editor for IEEE Signal Processing Letters and the lead guest editor for Signal Processing: Image Communication. He has given tutorials for 10 international conferences, such as IJCAI 2022, ICME 2022, MLSP 2022, APSIPSA ASC 2022, VCIP 2021, ICIP 2020, SSCI 2020, ISCAS 2019, SiPS 2019, and APSIPA ASC 2019. He is the APSIPA Distinguished Lecturer 2022-2023. 
\end{IEEEbiography}

\begin{IEEEbiography}[{\includegraphics[width=1in,height=1.25in,clip,keepaspectratio]{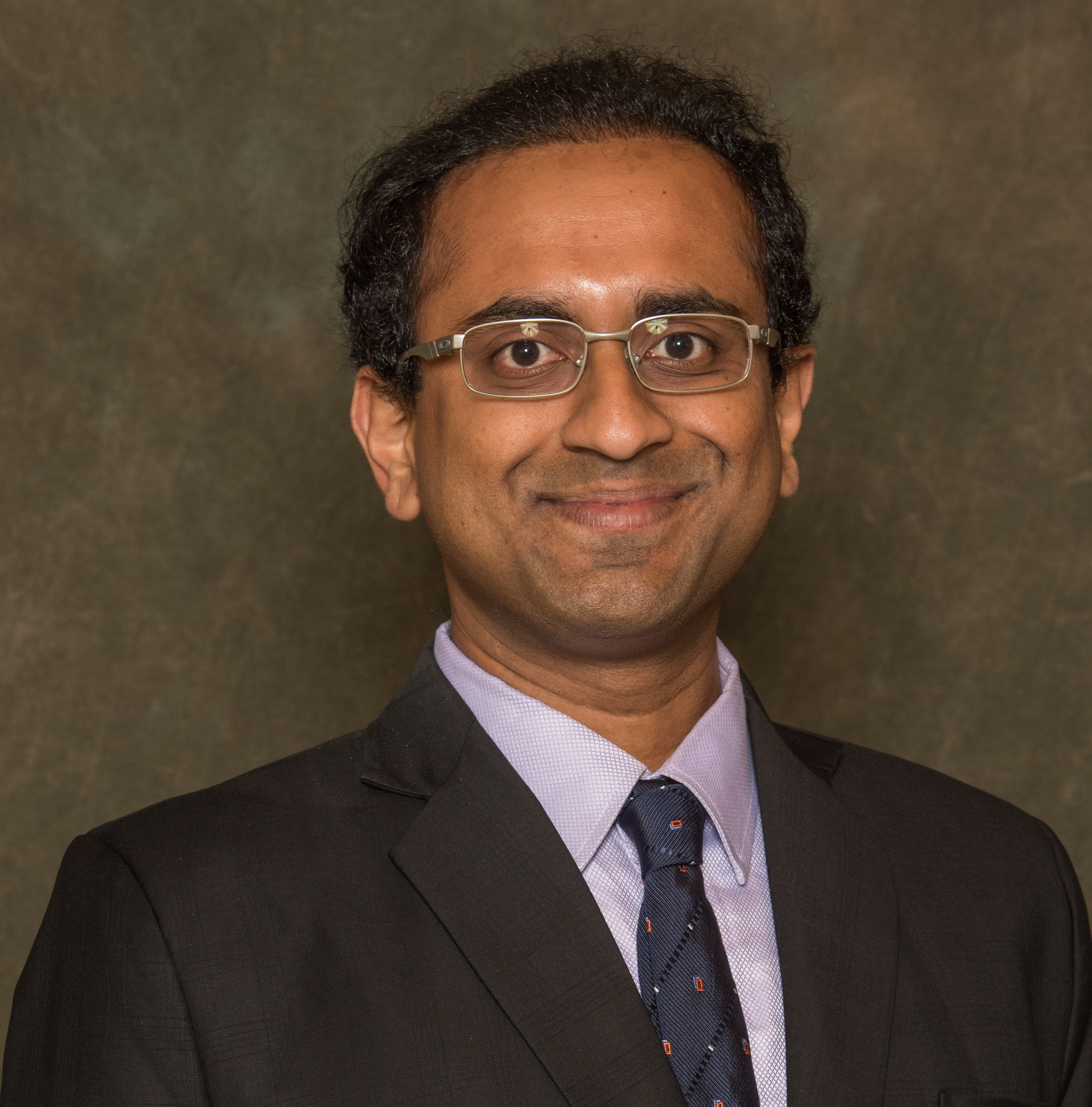}}]{Saiprasad Ravishankar} (Senior Member, IEEE) received the B.Tech. degree in Electrical Engineering from the Indian Institute of Technology Madras, Chennai, India, in 2008, and the M.S. and Ph.D. degrees in Electrical and Computer Engineering from the University of Illinois at Urbana-Champaign, Urbana, IL, USA, in 2010 and 2014, respectively. He is currently an Assistant Professor with the Departments of Computational Mathematics, Science and Engineering, and Biomedical Engineering, Michigan State University, Michigan, USA. He was an Adjunct Lecturer and a Postdoctoral Research Associate with the University of Illinois at Urbana-Champaign from February to August, 2015. Since August 2015, he was a Postdoc with the Department of Electrical Engineering and Computer Science at the University of Michigan, Ann Arbor, MI, USA, and then a Postdoc Research Associate with the Theoretical Division at Los Alamos National Laboratory, Los Alamos, NM, USA, from August 2018 to February 2019. His research interests include biomedical and computational imaging, signal and image processing, machine learning, inverse problems, large-scale data processing, optimization, and neuroscience. He was the recipient of the IEEE Signal Processing Society Young Author Best Paper Award in 2016. A paper he co-authored won a Best Student Paper Award at the IEEE International Symposium on Biomedical Imaging (ISBI) 2018 and other papers were award finalists at the IEEE International Workshop on Machine Learning for Signal Processing (MLSP) 2017, ISBI 2020, and Optica Imaging Congress, 2023. He is currently a member of the IEEE Machine Learning for Signal Processing Technical Committee (MLSP TC) and the IEEE Bio Image and Signal Processing (BISP) TC. He has organized several special sessions or workshops on computational imaging and machine learning themes including at the Institute for Mathematics and its Applications (IMA), IEEE Image, Video, and Multidimensional Signal Processing (IVMSP) Workshop 2016, MLSP 2017, ISBI 2018, the International Conference on Computer Vision (ICCV) 2019 and 2021, etc.
\end{IEEEbiography}

\begin{IEEEbiography}[{\includegraphics[width=1in,height=1.25in,clip,keepaspectratio]{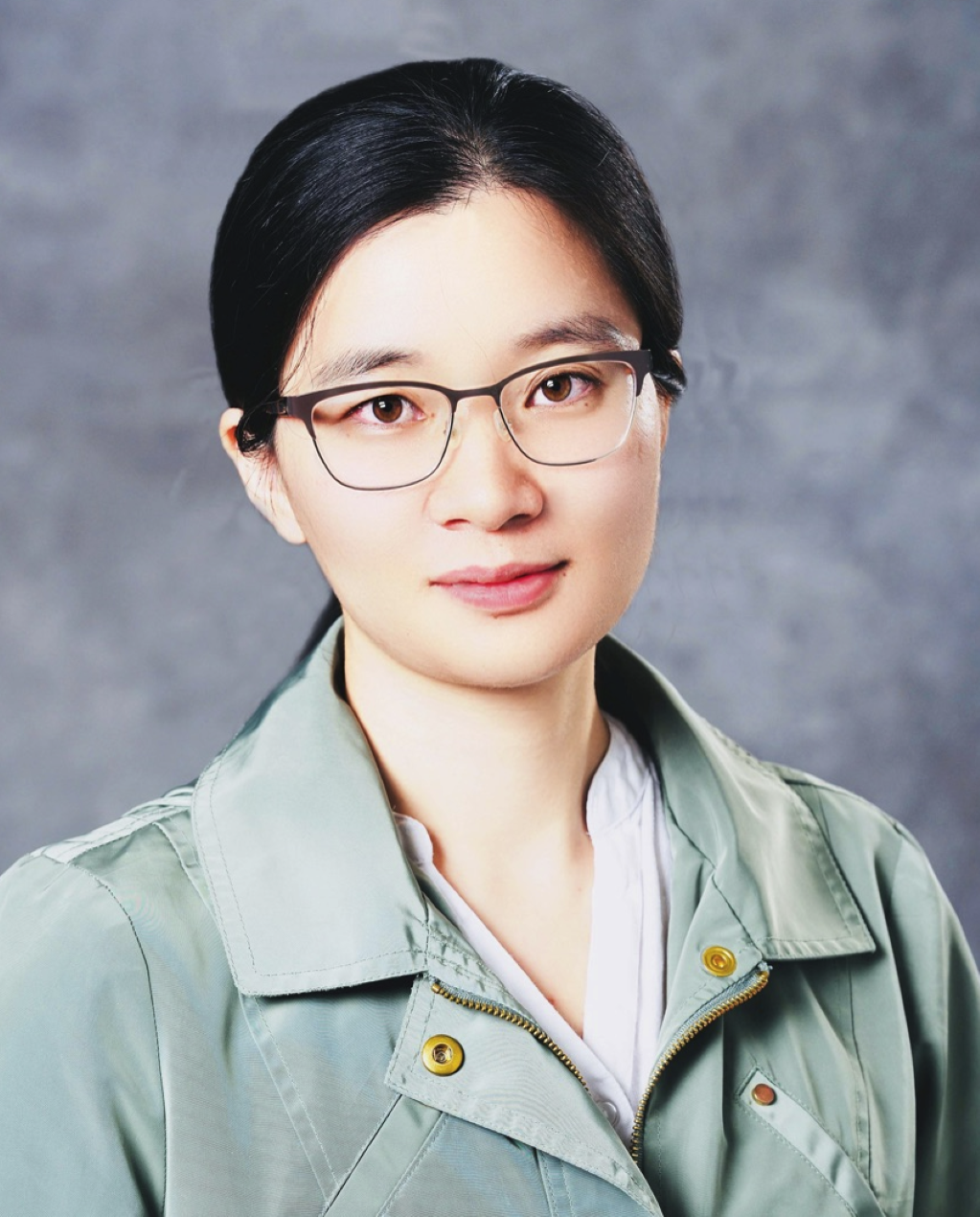}}]{Longxiu Huang} received her Ph.D.~degree in Applied Mathematics from Vanderbilt University, Tennessee, USA, in 2019. She is currently an Assistant Professor in both the Department of Computational Mathematics, Science, and Engineering and the Department of Mathematics at Michigan State University, Michigan, USA. Prior to this, she served as an Assistant Adjunct Professor in the Department of Mathematics at the University of California, Los Angeles (UCLA), California, USA, from July 2019 to June 2022.  
Her research interests include applied harmonic analysis, machine learning, data science, tensor analysis, numerical linear algebra and approximation theory. 
\end{IEEEbiography}

\clearpage
%{\twocolumn[
%	\begin{center}
%		\Huge Supplementary Material for ``Learnable Scaled Gradient Descent for Guaranteed Robust Tensor PCA''
%		\vspace{0.4in} 
%	\end{center}]}
\onecolumn
\begin{center}
	\Huge Supplementary Material for ``Learnable Scaled Gradient Descent for Guaranteed Robust Tensor PCA'' \\
	\vspace{8pt}
    \large Lanlan~Feng,
    Ce~Zhu,~\IEEEmembership{Fellow,~IEEE,} % <-this % stops a space
	Yipeng~Liu,~\IEEEmembership{Senior Member,~IEEE,}
	Saiprasad Ravishankar,\\~\IEEEmembership{Senior Member,~IEEE,}
	Longxiu~Huang
	\vspace{0.3in}
\end{center}
\setcounter{section}{0}

We first introduce some key properties about tensor operations in Section\ref{subsection: Some properties about tensor operations}. Following this, the proof details for Lemma \ref{lm: Q_existence}, \ref{lemma:matrix2factor}, \ref{lm:sparity} are provided in Section\ref{appendix B}.
In Section\ref{appendix C}, we then establish several important auxiliary lemmas, which are necessary for proving Theorem \ref{thm:local convergence} and Theorem \ref{thm:initial}.
Finally, we present the detailed proofs of Theorem \ref{thm:local convergence} (local linear convergence) in Section\ref{sec:local conv} and Theorem \ref{thm:initial} (guaranteed initialization) in Section\ref{sec:guaranteed initialization}.

\subsection{Some properties about tensor operations}
\label{subsection: Some properties about tensor operations}
We summarize some properties of tensor operations.
% used in the proof process.
\begin{itemize}
	\item $\norm{\AC}_{\fro} = \frac{1}{\sqrt{I_3}}\norm{\widehat{\AC}}_{\fro} = \frac{1}{\sqrt{I_3}}\norm{\widehat{\Am}}_{\fro}$
	\item $\norm{\AC}_{\op} = \norm{\widehat{\AC}}_{\op} = \norm{\widehat{\Am}}_{\op}$
	\item $\|\mathcal{A} * \mathcal{B}\|_{\fro}\ge\|\mathcal{A}\|_{\fro}\sigma_{\min}(\mathcal{B})$
	\item $\|\mathcal{A}* \mathcal{B}\|_{2}\ge\|\mathcal{A}\|_{2}\sigma_{\min}(\mathcal{B})$
	\item $\|\mathcal{A}* \mathcal{B}\|_{2,\infty}\ge\|\mathcal{A}\|_{2,\infty}\sigma_{\min}(\mathcal{B})$
	\item $ \left\Vert \AC \right\Vert _{\op}\le {\sqrt{I_3}} \left\Vert \AC \right\Vert _{\fro}$
	\item $|\sigma_{\min}(\mathcal{A})-\sigma_{\min}(\mathcal{B})| \le \|\mathcal{A}-\mathcal{B}\|_{\op}$
	\item $|\sigma_{\max}(\mathcal{A})-\sigma_{\max}(\mathcal{B})| \le \|\mathcal{A}-\mathcal{B}\|_{\op}$
	\item $\sigma_{\min}(\mathcal{A})\sigma_{1}(\mathcal{B}) \le \sigma_{1}(\mathcal{A}*\mathcal{B})$
	\item If $\mathcal{Q}$ is an orthogonal tensor, then$\norm{\AC }_{\fro} = \norm{\AC * \QC}_{\fro} = \norm{\QC * \AC}_{\fro}$; $\norm{\AC }_{\op} = \norm{\AC * \QC}_{\op} = \norm{\QC * \AC}_{\op}$
	\item $\norm{\AC + \BC }_{\op} \leq \norm{\AC }_{\op} + \norm{\BC}_{\op}$
	\item   $\sigma_{1}(\mathcal{A}) = \frac{1}{\sigma_{\min}(\mathcal{A}^{-\To})}$
	\item $\operatorname{tr}(\AC * \BC) = \operatorname{tr}(\BC * \AC)$
	\item $\operatorname{tr}(\AC * \BC) \le \norm{\AC}_{\op} \operatorname{tr}(\BC)$
	\item $\|\mathcal{A}* \mathcal{B}\|_{\op}\le \|\mathcal{A}\|_{\op}\|\mathcal{B}\|_{\op}$
	\item $ \tr(\mathcal{A}*\mathcal{B})  \le \|\mathcal{A}\|_{2}\|\mathcal{B}\|_{*}$
	\item If the tubal rank of $\mathcal{A}$ is $R$, we have $\|\mathcal{A}\|_{*} \le \sqrt{R}\|\mathcal{A}\|_{\fro} $ because of the Cauchy-Schwaz inequality $ \left( \sum_{i=1}^{n} a_i \right)^2 \leq n \sum_{i=1}^{n} a_i^2$. Moreover, $\|\mathcal{A}\|_{\fro}^{2} = \frac{1}{I_3}\|\widehat{\mathbf{A}}\|_{\fro}^{2} \le \frac{1}{I_3} R I_3\|\widehat{\mathbf{A}}\|_{2}^{2} \le R\|\mathcal{A}\|_{2}^{2}$.
	\item $\|\mathcal{A}* \mathcal{B}\|_{\fro}\le \|\mathcal{A}\|_{2}\|\mathcal{B}\|_{\fro}$
	\item $\|\mathcal{A}* \mathcal{B}\|_{2,\infty}\le \|\mathcal{A}\|_{2}\|\mathcal{B}\|_{2,\infty}$
	\item $\|\mathcal{A}* \mathcal{B}^{\To}\|_{\infty}\le \|\mathcal{A}\|_{2,\infty}\|\mathcal{B}\|_{2,\infty}$
	\item $\|\mathcal{A}*\mathcal{B}\|_{2,\infty}\leq\|\mathcal{A}\|_{1,\infty}\|\mathcal{B}\|_{2,\infty}$ 
\end{itemize} 

\subsection{Proof of Lemma \ref{lm: Q_existence}, \ref{lemma:matrix2factor},  \ref{lm:sparity} } \label{appendix B}
\begin{proof}[\textbf{Proof of Lemma \ref{lm: Q_existence}}]
	Given the condition in Inequation \eqref{eq:Q_existence_condition} and Definition \ref{de: em} of $\distk$  in Equation \eqref{eq: em}, one knows that there exists a tensor $\QC_{\diamond} \in \GL(R)$ such that
	\begin{equation}\label{eq: Q_existenceas}
		\sqrt{\left\Vert \left(\LC*\QC_{\diamond}-\LC_{\star}\right)*\SCigma_{\star}^{1/2}\right\Vert _{\fro}^{2}+\left\Vert \left(\RC*\QC_{\diamond}^{-\top}-\RC_{\star}\right)*\SCigma_{\star}^{1/2}\right\Vert _{\fro}^{2}} \le \frac{\epsilon }{\sqrt{I_3}}\sigma_{\min}\left(\mathcal{X}_{\star}\right)
		% \frac{\epsilon }{\|\mathcal{X}^{\dagger}\|_{2}}\,
	\end{equation}
	for some $\epsilon$ satisfing  $0<\epsilon<1$. 
	In light of the relation $\|\mathcal{A}*\mathcal{B}\|_{\fro}\ge\|\mathcal{A}\|_{\fro}\sigma_{\min}(\mathcal{B})$, 
	\begin{equation}
		\begin{split}
			& \sqrt{\left\Vert \left(\LC*\QC_{\diamond}-\LC_{\star}\right)*\mathit{\Sigma}_{\star}^{-1/2}\right\Vert _{\fro}^{2}+\left\Vert \left(\RC*\QC_{\diamond}^{-\top}-\RC_{\star}\right)*\mathit{\Sigma}_{\star}^{-1/2}\right\Vert _{\fro}^{2}}\\
			\le &\frac{1}{\sigma_{\min}( \mathcal{X}_{\star})}  
			\sqrt{ \left\Vert \left(\LC*\QC_{\diamond}-\LC_{\star}\right)*\mathit{\Sigma}_{\star}^{1/2}\right\Vert _{\fro}^{2}+\left\Vert \left(\RC*\QC_{\diamond}^{-\top}-\RC_{\star}\right)*\mathit{\Sigma}_{\star}^{1/2}\right\Vert _{\fro}^{2} }\\
			\le &\frac{\epsilon}{\sqrt{I_3}}.
		\end{split}
	\end{equation}
	Utilize the relationship $I_3\left\Vert \mathcal{A} \right\Vert _{\fro}^{2} \ge \left\Vert \mathcal{A} \right\Vert _{\op}^{2}$, 
	\begin{equation}
		\begin{split}
			&\sqrt{ \left\Vert \left(\LC*\QC_{\diamond}-\LC_{\star}\right)*\mathit{\Sigma}_{\star}^{-1/2}\right\Vert _{\fro}^{2}+\left\Vert \left(\RC*\QC_{\diamond}^{-\top}-\RC_{\star}\right)*\mathit{\Sigma}_{\star}^{-1/2}\right\Vert _{\fro}^{2}}\\
			\ge& \frac{1}{\sqrt{I_3}}\sqrt{ \left\Vert \left(\LC*\QC_{\diamond}-\LC_{\star}\right)*\mathit{\Sigma}_{\star}^{-1/2}\right\Vert _{\op}^{2}+\left\Vert \left(\RC*\QC_{\diamond}^{-\top}-\RC_{\star}\right)*\mathit{\Sigma}_{\star}^{-1/2}\right\Vert _{\op}^{2}}\\
			\ge &\frac{1}{\sqrt{I_3}}\left\Vert \left(\LC*\QC_{\diamond}-\LC_{\star}\right)*\mathit{\Sigma}_{\star}^{-1/2}\right\Vert _{\op}  \vee
			\left\Vert \left(\RC*\QC_{\diamond}^{-\top}-\RC_{\star}\right)*\mathit{\Sigma}_{\star}^{-1/2}\right\Vert _{\op},
		\end{split}
	\end{equation}
	%  then according to the property of the spectral norm $\norm{\AC}_2 = \norm{\widehat{\mathbf{A}}}_2$,  it means that 
	% \begin{align}
		% \left\Vert \left(\LC*\QC_{\diamond}-\LC_{\star}\right)*\SCigma_{\star}^{-1/2}\right\Vert _{\op} \vee \left\Vert \left(\RC*\QC_{\diamond}^{-\top}-\RC_{\star}\right)*\SCigma_{\star}^{-1/2}\right\Vert _{\op} \le \epsilon.
		% \end{align}
	it means that 
	\begin{align} \left\Vert \left(\LC*\QC_{\diamond}-\LC_{\star}\right)*\mathit{\Sigma}_{\star}^{-1/2}\right\Vert _{\text{op}} \vee \left\Vert \left(\RC*\QC_{\diamond}^{-\top}-\RC_{\star}\right)*\mathit{\Sigma}_{\star}^{-1/2}\right\Vert _{\text{op}} \le \epsilon. \label{eq: op_ineq}
	\end{align}
	Then Invoke Weyl's inequality $|\sigma_{\min}(\mathcal{A})-\sigma_{\min}(\mathcal{B})| \le \|\mathcal{A}-\mathcal{B}\|_{\op}$, and $\sigma_{\min}(\LC_{\star}*\mathit{\Sigma}_{\star}^{-1/2})=\sigma_{\min}(\widehat{\mathcal{U}}_{\star})=1$ are used to obtain
	\begin{align}
		\sigma_{\min}(\LC*\QC_{\diamond}*\mathit{\Sigma}_{\star}^{-1/2}) \ge 1 - \left\Vert \left(\LC*\QC_{\diamond}-\LC_{\star}\right)*\mathit{\Sigma}_{\star}^{-1/2}\right\Vert _{\op} \ge 1-\epsilon.
		\label{eq:first_sigma}
	\end{align}
	% \begin{align}
		% \small \sigma_{\min}(\LC*\QC_{\diamond}*\mathit{\Sigma}_{\star}^{-1/2}) \ge \sigma_{\min}(\LC_{\star}*\mathit{\Sigma}_{\star}^{-1/2}) - \left\Vert \left(\LC*\QC_{\diamond}-\LC_{\star}\right)*\mathit{\Sigma}_{\star}^{-1/2}\right\Vert _{\op} \ge 1-\epsilon.\label{eq:first_sigma}
		% \end{align}
	In addition, it is straightforward to verify that 
	\begin{align}
		& \inf_{\QC\in \GL(R)}\left\Vert \left(\LC*\QC-\LC_{\star}\right)*\mathit{\Sigma}_{\star}^{1/2}\right\Vert _{\fro}^{2}+\left\Vert \left(\RC*\QC^{-\top}-\RC_{\star}\right)*\mathit{\Sigma}_{\star}^{1/2}\right\Vert _{\fro}^{2}\label{eq:first_inf}\\
		=&\inf_{\HC\in \GL(R)}\left\Vert \left(\LC*\QC_{\diamond}*\HC-\LC_{\star}\right)*\mathit{\Sigma}_{\star}^{1/2}\right\Vert _{\fro}^{2}+\left\Vert \left(\RC*\QC_{\diamond}^{-\top}*\HC^{-\top}-\RC_{\star}\right)*\mathit{\Sigma}_{\star}^{1/2}\right\Vert _{\fro}^{2}.\label{eq:second_inf}
	\end{align}
	If the minimizer of the optimization problem~\eqref{eq:second_inf} exists at some $\HC$, then $\QC_{\diamond}*\HC$ must be the minimizer of the optimization problem~\eqref{eq:first_inf}.
	Consequently, our focus now shifts to proving the existence of a minimizer for the problem~\eqref{eq:second_inf} at some $\HC$. One has
	\begin{align*}
		& \inf_{\HC\in \GL(R)}\;\left\Vert \left(\LC*\QC_{\diamond}*\HC-\LC_{\star}\right)*\mathit{\Sigma}_{\star}^{1/2}\right\Vert _{\fro}^{2}+\left\Vert \left(\RC*\QC_{\diamond}^{-\top}*\HC^{-\top}-\RC_{\star}\right)*\mathit{\Sigma}_{\star}^{1/2}\right\Vert _{\fro}^{2}\nonumber \\
		& \quad\le\left\Vert \left(\LC*\QC_{\diamond}-\LC_{\star}\right)*\mathit{\Sigma}_{\star}^{1/2}\right\Vert _{\fro}^{2}+\left\Vert \left(\RC*\QC_{\diamond}^{-\top}-\RC_{\star}\right)*\mathit{\Sigma}_{\star}^{1/2}\right\Vert _{\fro}^{2},
	\end{align*}
	Clearly, for any $\QC_{\diamond}*\HC$ to yield a smaller distance than $\QC_{\diamond}$ and recall \eqref{eq: Q_existenceas},$\HC$ must obey
	\begin{align*}
		\sqrt{\left\Vert \left(\LC*\QC_{\diamond}*\HC-\LC_{\star}\right)*\mathit{\Sigma}_{\star}^{1/2}\right\Vert _{\fro}^{2}+\left\Vert \left(\RC*\QC_{\diamond}^{-\top}*\HC^{-\top}-\RC_{\star}\right)*\mathit{\Sigma}_{\star}^{1/2}\right\Vert _{\fro}^{2}} 
		\le \frac{\epsilon }{\sqrt{I_3}}\sigma_{\min}(\XC_{\star}).
	\end{align*}
	which implies that
	\begin{align*}
		\left\Vert \left(\LC*\QC_{\diamond}*\HC-\LC_{\star}\right)*\mathit{\Sigma}_{\star}^{-1/2}\right\Vert _{\op} \vee \left\Vert \left(\RC*\QC_{\diamond}^{-\top}*\HC^{-\top}-\RC_{\star}\right)*\mathit{\Sigma}_{\star}^{-1/2}\right\Vert _{\op} \le \epsilon.
	\end{align*}
	Then Invoke Weyl's inequality $|\sigma_{1}(\mathcal{A})-\sigma_{1}(\mathcal{B})| \le \|\mathcal{A}-\mathcal{B}\|_{\op}$ and $\sigma_{1}(\LC_{\star}*\mathit{\Sigma}_{\star}^{-1/2})=\sigma_{1}(\UC_{\star})=1$ are used to obtain
	\begin{align}
		\sigma_{1}(\LC*\QC_{\diamond}*\HC*\mathit{\Sigma}_{\star}^{-1/2}) \le 1 + \left\Vert\left(\LC*\QC_{\diamond}*\HC-\LC_{\star}\right)*\mathit{\Sigma}_{\star}^{-1/2}\right\Vert _{\op} \le 1+\epsilon.\label{eq:second_sigma}
	\end{align}
	% \begin{align}
		% \sigma_{1}(\LC*\QC_{\diamond}*\HC*\mathit{\Sigma}_{\star}^{-1/2}) \le \sigma_{1}(\LC_{\star}*\mathit{\Sigma}_{\star}^{-1/2}) + \left\Vert\left(\LC*\QC_{\diamond}*\HC-\LC_{\star}\right)*\mathit{\Sigma}_{\star}^{-1/2}\right\Vert _{\op} \le 1+\epsilon.\label{eq:second_sigma}
		% \end{align}
	% \normalsize
	% \setstretch{1.3}
	Combine \eqref{eq:first_sigma} and \eqref{eq:second_sigma}, and use the relation $\sigma_{\min}(\mathcal{A})\sigma_{1}(\mathcal{B}) \le \sigma_{1}(\mathcal{A}*\mathcal{B})$, we can get
	\begin{align*}
		\sigma_{\min}(\LC*\QC_{\diamond}*\mathit{\Sigma}_{\star}^{-1/2})\sigma_{1}(\mathit{\Sigma}_{\star}^{1/2}*\HC*\mathit{\Sigma}_{\star}^{-1/2}) 
		\le \sigma_{1}(\LC*\QC_{\diamond}*\HC*\mathit{\Sigma}_{\star}^{-1/2}) \le \frac{1+\epsilon}{1-\epsilon}\sigma_{\min}(\LC*\QC_{\diamond}*\mathit{\Sigma}_{\star}^{-1/2}).
	\end{align*}
	Therefore, we have $\sigma_{1}(\mathit{\Sigma}_{\star}^{1/2}*\HC*\mathit{\Sigma}_{\star}^{-1/2}) \le \frac{1+\epsilon}{1-\epsilon}$.
	
	Similarly, we also derive $\sigma_{1}(\mathit{\Sigma}_{\star}^{1/2}*\HC^{-\top}*\mathit{\Sigma}_{\star}^{-1/2}) \le \frac{1+\epsilon}{1-\epsilon}$, which is equivalent to $\sigma_{\min}(\mathit{\Sigma}_{\star}^{1/2}*\HC*\mathit{\Sigma}_{\star}^{-1/2}) \ge \frac{1-\epsilon}{1+\epsilon}$ because of the property $\sigma_{1}(\mathcal{A}) = \frac{1}{\sigma_{\min}(\mathcal{A}^{-\To})}$. 
	With the above two arguments, the minimization problem \eqref{eq:second_inf} is equivalent to the following problem: 
	\begin{align*}
		\min_{\mathcal{H}\in \GL(R)} & \left\Vert \left(\LC*\QC_{\diamond}*\HC-\LC_{\star}\right)*\mathit{\Sigma}_{\star}^{1/2}\right\Vert _{\fro}^{2}+\left\Vert \left(\RC*\QC_{\diamond}^{-\top}*\HC^{-\top}-\RC_{\star}\right)*\mathit{\Sigma}_{\star}^{1/2}\right\Vert _{\fro}^{2}\\
		\mbox{s.t.}\quad & \frac{1-\epsilon}{1+\epsilon} \le \sigma_{\min}(\mathit{\Sigma}_{\star}^{1/2}*\HC*\mathit{\Sigma}_{\star}^{-1/2}) \le \sigma_{1}(\mathit{\Sigma}_{\star}^{1/2}*\HC*\mathit{\Sigma}_{\star}^{-1/2})\le \frac{1+\epsilon}{1-\epsilon}.
	\end{align*}
	Note that this is a continuous optimization problem defined over a compact set. By applying the Weierstrass Extreme Value Theorem, we can complete the proof.
\end{proof}

\begin{definition} \label{def: shorthand notation}
	For convenience, we define the following notations: $\mathcal{Q}_k$ denotes the optimal alignment tensor between $(\mathcal{L}_k, \mathcal{R}_k)$ and $(\mathcal{L}_{\star}, \mathcal{R}_\star)$, $\mathcal{L}_\natural:=\mathcal{L}_k*\mathcal{Q}_k$, $\mathcal{R}_\natural:=\mathcal{R}_k*\mathcal{Q}_k^{-\top}$, $\Delta_{\mathcal{L}}:=\mathcal{L}_\natural-\mathcal{L}_\star$, $\Delta_{\mathcal{R}}:=\mathcal{R}_\natural-\mathcal{R}_\star$, and $\Delta_{\mathcal{S}}:=\mathcal{S}_{k+1}-\mathcal{S}_\star$.
	Then we can have the following properties: $\mathcal{L}_\natural * \mathcal{R}_\natural^{\top} = \mathcal{L}_k *\mathcal{R}_k^{\top}$.
	% Moreover, we denote $\Delta_{\widehat{\mathbf{L}}}= \widehat{\mathbf{L}}_\natural-\widehat{\mathbf{L}}_\star =\widehat{\mathbf{L}}_{k}\widehat{\mathbf{Q}}_{k}-\widehat{\mathbf{L}}_\star; \Delta_{\widehat{\mathbf{R}}}:=\widehat{\mathbf{R}}_\natural-\widehat{\mathbf{R}}_\star = \widehat{\mathbf{R}}_k\widehat{\mathbf{Q}}_k^{-\top}-\widehat{\mathbf{R}}_\star.$
\end{definition}

\begin{proof}[\textbf{Proof of Lemma \ref{lemma:matrix2factor}}] 
	Given the decomposition $\mathcal{L}_{\natural}*\mathcal{R}_{\natural}^{\top}-\mathcal{X}_{\star}=\Delta_{\mathcal{L}}*\mathcal{R}_{\star}^{\top}+\mathcal{L}_{\star}*\Delta_{\mathcal{R}}^{\top}+\Delta_{\mathcal{L}}*\Delta_{\mathcal{R}}^{\top}$ and leveraging the triangle inequality, we can derive the following:
	\begin{align*}
		\|\mathcal{L}_{\natural}*\mathcal{R}_{\natural}^{\top}-\mathcal{X}_{\star}\|_{\fro} & \le\|\Delta_{\mathcal{L}}*\mathcal{R}_{\star}^{\top}\|_{\fro}+\|\mathcal{L}_{\star}*\Delta_{\mathcal{R}}^{\top}\|_{\fro}+\|\Delta_{\mathcal{L}}*\Delta_{\mathcal{R}}^{\top}\|_{\fro}\\
		& =\|\Delta_{\mathcal{L}}*\mathit{\Sigma}_{\star}^{1/2}\|_{\fro}+\|\Delta_{\mathcal{R}}*\mathit{\Sigma}_{\star}^{1/2}\|_{\fro}+\|\Delta_{\mathcal{L}}*\Delta_{\mathcal{R}}^{\top}\|_{\fro},
	\end{align*}
	where the following facts have been used
	\begin{align*}
		\|\Delta_{\mathcal{L}}*\mathcal{R}_{\star}^{\top}\|_{\fro}=\|\Delta_{\mathcal{L}}*\mathit{\Sigma}_{\star}^{1/2}*\mathcal{V}_{\star}^{\top}\|_{\fro}=\|\Delta_{\mathcal{L}}*\mathit{\Sigma}_{\star}^{1/2}\|_{\fro}, \\
		\quad\|\mathcal{L}_{\star}*\Delta_{\mathcal{R}}^{\top}\|_{\fro}=\|\mathcal{U}_{\star}*\mathit{\Sigma}_{\star}^{1/2}*\Delta_{\mathcal{R}}^{\top}\|_{\fro}=\|\Delta_{\mathcal{R}}*\mathit{\Sigma}_{\star}^{1/2}\|_{\fro}.
	\end{align*}
	We then derive a simple bound for the third term
	\begin{align*}
		&\|\Delta_{\mathcal{L}}*\Delta_{\mathcal{R}}^{\top}\|_{\fro} \\
		=& \frac{1}{2}\|\Delta_{\mathcal{L}}*\mathit{\Sigma}_{\star}^{1/2}*(\Delta_{\mathcal{R}}*\mathit{\Sigma}_{\star}^{-1/2})^{\top}\|_{\fro}+\frac{1}{2}\|\Delta_{\mathcal{L}}*\mathit{\Sigma}_{\star}^{-1/2}*(\Delta_{\mathcal{R}}*\mathit{\Sigma}_{\star}^{1/2})^{\top}\|_{\fro}\\
		\le & \frac{1}{2}\|\Delta_{\mathcal{L}}*\mathit{\Sigma}_{\star}^{1/2}\|_{\fro}\|\Delta_{\mathcal{R}}*\mathit{\Sigma}_{\star}^{-1/2}\|_{\op}+\frac{1}{2}\|\Delta_{\mathcal{L}}*\mathit{\Sigma}_{\star}^{-1/2}\|_{\op}\|\Delta_{\mathcal{R}}*\mathit{\Sigma}_{\star}^{1/2}\|_{\fro}\\
		\le &\frac{1}{2}(\|\Delta_{\mathcal{L}}*\mathit{\Sigma}_{\star}^{-1/2}\|_{\op}\vee\|\Delta_{\mathcal{R}}*\mathit{\Sigma}_{\star}^{-1/2}\|_{\op})\left(\|\Delta_{\mathcal{L}}*\mathit{\Sigma}_{\star}^{1/2}\|_{\fro}+\|\Delta_{\mathcal{R}}*\mathit{\Sigma}_{\star}^{1/2}\|_{\fro}\right)
	\end{align*}  
	% Here we finish the proof of the first claim.
	If 
	$\distk\leq\frac{\epsilon}{\sqrt{I_3}} \sigma_{\min}\left(\mathcal{X}_{\star}\right)$, the following inequality for the tensor operator norm holds, analogous to the derivation in \eqref{eq: op_ineq}
	\begin{align}
		\|\Delta_{\mathcal{L}}*\mathit{\Sigma}_{\star}^{-1/2}\|_{\op}\vee\|\Delta_{\mathcal{R}}*\mathit{\Sigma}_{\star}^{-1/2}\|_{\op}\le\epsilon,
		\label{eq:cond_RPCA_op}
	\end{align}
	Then we can obtain an easy consequence of Lemma~\ref{lemma:matrix2factor} as
	\begin{equation}
		\begin{aligned}\label{eq:dist_matrix}
			% \begin{split}
				& \left\Vert\mathcal{L}_{k}*\mathcal{R}_{k}^{\top}-\mathcal{X}_{\star}\right\Vert _{\fro} \\
				\le& \left(1+\frac{\epsilon}{2}\right)\left(\|\Delta_{\mathcal{L}}*\mathit{\Sigma}_{\star}^{1/2}\|_{\fro}+\|\Delta_{\mathcal{R}}*\mathit{\Sigma}_{\star}^{1/2}\|_{\fro}\right)\\
				\le&\left(1+\frac{\epsilon}{2}\right)\sqrt{2}\distk
				% & \le \frac{3\sqrt{2}}{2} \distk.
				% \end{split}
		\end{aligned}
	\end{equation}
	Here the last line follows from the elementary inequality $a+b\le\sqrt{2(a^{2}+b^{2})}$ and Definition \ref{de: em} of $\distk$ in Eq.~\eqref{eq: em}.
	The proof is now completed.
	% Here $0<\epsilon<1$, the second line follows from the elementary inequality $a+b\le\sqrt{2(a^{2}+b^{2})}$ and Definition \ref{de: em} of $\distk$ in Eq.~\eqref{eq: em}.
	% The proof is now completed.
\end{proof}

\begin{proof}[\textbf{Proof of Lemma \ref{lm:sparity}}]
	Let \(\Omega_\star := \supp(\mathcal{S}_\star)\) and \(\Omega_{k+1} := \supp(\mathcal{S}_{k+1})\). Recall the update about the sparse tensor that \(\mathcal{S}_{k+1} = \operatorname{T}_{\zeta_{k+1}}(\mathcal{Y} - \mathcal{X}_{k}) = \operatorname{T}_{\zeta_{k+1}}(\mathcal{S}_\star + \mathcal{X}_\star - \mathcal{X}_{k})\), where $\operatorname{T}_{\zeta_{k+1}}$ denotes the soft-thresholding operator with threshold parameter $\zeta_{k+1}$. Since \([\mathcal{S}_\star]_{i_1,i_2,i_3} = 0\) outside its support, it follows that \([\mathcal{Y} - \mathcal{X}_{k}]_{i_1,i_2,i_3} = [\mathcal{X}_\star - \mathcal{X}_{k}]_{i_1,i_2,i_3}\) for all entries \((i_1, i_2, i_3) \in \Omega_\star^c\). By applying the thresholding value \(\zeta_{k+1} \geq \|\mathcal{X}_\star - \mathcal{X}_{k}\|_\infty\), we have \([\mathcal{S}_{k+1}]_{i_1,i_2,i_3} = 0\) for all \((i_1, i_2, i_3) \in \Omega_\star^c\). Consequently, the support of \(\mathcal{S}_{k+1}\) must belongs to the support of \(\mathcal{S}_\star\), \textit{i.e.},
	\begin{equation*}
		\supp(\mathcal{S}_{k+1}) = \Omega_{k+1} \subseteq \Omega_\star = \supp(\mathcal{S}_\star).
	\end{equation*}
	This establishes our first claim.
	
	Clearly, $[\mathcal{S}_\star-\mathcal{S}_{k+1}]_{i_1,i_2,i_3}=0$ for all $(i_1,i_2,i_3)\in\Omega_\star^c$. Furthermore, the entries in \(\Omega_\star\) can be divided into two groups:
	\begin{align*}
		\Omega_{k+1} &= \{(i_1,i_2,i_3)~|~ |[\mathcal{Y}-\mathcal{X}_k]_{i_1,i_2,i_3}|>\zeta_{k+1} \textnormal{ and } [\mathcal{S}_\star]_{i_1,i_2,i_3}\neq 0\},\cr
		\Omega_\star\backslash\Omega_{k+1} &= \{(i_1,i_2,i_3)~|~ |[\mathcal{Y}-\mathcal{X}_k]_{i_1,i_2,i_3}|\leq\zeta_{k+1} \textnormal{ and } [\mathcal{S}_\star]_{i_1,i_2,i_3}\neq 0\},
	\end{align*}
	and it holds
	\begin{align*}
		&~|[\mathcal{S}_\star-\mathcal{S}_{k+1}]_{i_1,i_2,i_3}|\cr
		= &~
		\begin{cases}
			|[\mathcal{X}_{k}-\mathcal{X}_\star]_{i_1,i_2,i_3} - \mathrm{sign}([\mathcal{Y}-\mathcal{X}_{k}]_{i_1,i_2,i_3})\zeta_{k+1} |&   \cr
			|[\mathcal{S}_\star]_{i_1,i_2,i_3}|         & \cr
		\end{cases}  \cr
		\leq&~
		\begin{cases}
			|[\mathcal{X}_{k}-\mathcal{X}_\star]_{i_1,i_2,i_3}|+\zeta_{k+1}      &  \cr
			|[\mathcal{X}_\star-\mathcal{X}_{k}]_{i_1,i_2,i_3}|+\zeta_{k+1}  & \cr
		\end{cases} \cr
		\leq&~
		\begin{cases}
			2\zeta_{k+1}     & \quad (i_1,i_2,i_3)\in \Omega_{k+1},   \cr
			2\zeta_{k+1}     & \quad (i_1,i_2,i_3)\in \Omega_\star\backslash\Omega_{k+1}. \cr
		\end{cases}
		% \begin{cases}
			% 2\|\mathcal{X}_\star-\mathcal{X}_{k}\|_\infty      & \qquad (i_1,i_2,i_3)\in \Omega_{k+1},   \cr
			% 2\|\mathcal{X}_\star-\mathcal{X}_{k}\|_\infty      & \qquad (i_1,i_2,i_3)\in \Omega_\star\backslash\Omega_{k+1}. \cr
			% \end{cases}
	\end{align*}
	Thus, we have \(\|\mathcal{S}_\star - \mathcal{S}_{k+1}\|_\infty \leq 2 \|\mathcal{X}_\star - \mathcal{X}_k\|_\infty + \zeta_{k+1} \leq 2 \zeta_{k+1}\), which proves our second claim.
	% Now we will prove the last claim. By the arguments above, we have
	% \begin{align*}
		%     |\langle \mathcal{S}_\star-\mathcal{S}_{k+1},\BM\rangle|&\leq \sum_{(i_1,i_2,i_3)\in\Omega_\star} |[\mathcal{S}_\star-\mathcal{S}_{k+1}]_{i_1,i_2,i_3}|~|[\BM]_{i_1,i_2,i_3}| \cr
		%     &\leq \sum_{(i_1,i_2,i_3)\in\Omega_\star} 2\|\mathcal{X}_\star-\mathcal{X}_{k}\|_\infty~|[\BM]_{i_1,i_2,i_3}|
		% \end{align*}
\end{proof}

% Now, we are already to prove Theorem~\ref{thm:main_theorem}.

% \begin{proof}[\textbf{Proof of Theorem~~\ref{thm:main_theorem}}]
	% Take $c_0=10^{-4}$ in Theorem~\ref{thm:initial}. Thus, the results of Theorem~\ref{thm:initial} satisfy the condition of Theorem~\ref{thm:local convergence}, and gives 
	% \begin{gather*}
		%     \distk\leq \frac{0.02}{\sqrt{I_3}} (1-0.6\eta)^k \sigma_{\min}(\mathcal{X}_\star), \cr
		%     \sqrt{I_1}\|(\mathcal{L}_k*\mathcal{Q}_k-\mathcal{L}_\star)*\mathit{\Sigma}_\star^{1/2}\|_{2,\infty}  \lor \sqrt{I_2}\|(\mathcal{R}_k*\mathcal{Q}_k^{-\top}-\mathcal{R}_\star)*\mathit{\Sigma}_\star^{1/2}\|_{2,\infty} \cr
		%     \leq\sqrt{\frac{\mu R}{I_3}} (1-0.6\eta)^k \sigma_{\min}(\mathcal{X}_\star)
		% \end{gather*}
	% for all $k\geq0$. Moreover, Lemma~\ref{lemma:matrix2factor} states that
	% \begin{gather*}
		%     \| \mathcal{L}_k\mathcal{R}_k^{\top} - \mathcal{X}_\star \|_{\fro} \leq \frac{3\sqrt{2}}{2}~\distk
		% \end{gather*}
	% as long as $\distk\leq\frac{\epsilon}{\sqrt{I_3}} \sigma_{\min}\left(\mathcal{X}_{\star}\right)$. See \eqref{eq:dist_matrix} for a detailed argument.
	% Hence, our first claim is proved.
	
	% When $k\geq 1$, the second claim is directly followed by Lemma~\ref{lm:sparity}. When $k=0$, take $\mathcal{X}_{-1}=\bm{0}$, then one can see $\mathcal{S}_0=\operatorname{T}_{\zeta_0}(\mathcal{Y})=\operatorname{T}_{\zeta_0}(\mathcal{Y}-\mathcal{X}_{-1})$ where $\zeta_0=\|\mathcal{X}_\star\|_\infty=\|\mathcal{X}_\star-\mathcal{X}_{-1}\|_\infty$. Applying Lemma~\ref{lm:sparity} again, we have the second claim for all $k\geq0$.
	
	% This finishes the proof.
	% \end{proof}

\subsection{Auxiliary Lemma}\label{appendix C}
Before we can present the detailed proofs for Theorem~\ref{thm:local convergence} and \ref{thm:initial}, several important auxiliary Lemmas must be established.

\begin{lemma}\label{lemma:Q_criterion} Suppose that the optimal alignment tensor
	\begin{align*}
		\mathcal{Q}_{k} =\argmin_{\mathcal{Q}_{k}\in\GL(R)}\;\left\Vert (\mathcal{L}_{k}*\mathcal{Q}_{k}-\mathcal{L}_{\star})*{\mathit{\Sigma}}_{\star}^{1/2}\right\Vert _{\fro}^{2}+\left\Vert (\mathcal{R}_{k}*\mathcal{Q}_{k}^{-\top}-\mathcal{R}_{\star})*{\mathit{\Sigma}}_{\star}^{1/2}\right\Vert _{\fro}^{2}
	\end{align*}  
	between $[\mathcal{L}_{k},\mathcal{R}_{k}]$ and $[\mathcal{L}_{\star},\mathcal{R}_{\star}]$ exists, then $\mathcal{Q}_{k}$ obeys the following optimal alignment criterion
	\begin{align}
		\mathcal{L}_{\natural}^{\top} *\Delta_{\mathcal{L}} *{\mathit{\Sigma}}_{\star}={\mathit{\Sigma}}_{\star}*\Delta_{\mathcal{R}}^{\top}*\mathcal{R}_{\natural}.\label{eq:Q_criterion}
	\end{align}
\end{lemma}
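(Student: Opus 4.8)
The plan is to derive the optimality condition \eqref{eq:Q_criterion} as a first-order stationarity (Euler--Lagrange) condition for the unconstrained-on-$\GL(R)$ minimization problem defining $\mathcal{Q}_k$. Concretely, I would parametrize perturbations of $\mathcal{Q}_k$ inside $\GL(R)$: since $\GL(R)$ is open, for any fixed tensor $\mathcal{G}\in\mathbb{R}^{R\times R\times I_3}$ the curve $\mathcal{Q}(t):=\mathcal{Q}_k*(\mathcal{I}+t\mathcal{G})$ stays in $\GL(R)$ for $|t|$ small, with inverse transpose $\mathcal{Q}(t)^{-\top}=\mathcal{Q}_k^{-\top}*(\mathcal{I}+t\mathcal{G})^{-\top}=\mathcal{Q}_k^{-\top}*(\mathcal{I}-t\mathcal{G}^{\top}+O(t^2))$. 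Substituting into the objective
\[
f(\mathcal{Q}):=\bigl\Vert (\mathcal{L}_{k}*\mathcal{Q}-\mathcal{L}_{\star})*{\mathit{\Sigma}}_{\star}^{1/2}\bigr\Vert _{\fro}^{2}+\bigl\Vert (\mathcal{R}_{k}*\mathcal{Q}^{-\top}-\mathcal{R}_{\star})*{\mathit{\Sigma}}_{\star}^{1/2}\bigr\Vert _{\fro}^{2},
\]
and using $\mathcal{L}_k*\mathcal{Q}_k=\mathcal{L}_\natural$, $\mathcal{R}_k*\mathcal{Q}_k^{-\top}=\mathcal{R}_\natural$ together with $\mathcal{L}_\natural-\mathcal{L}_\star=\Delta_{\mathcal{L}}$, $\mathcal{R}_\natural-\mathcal{R}_\star=\Delta_{\mathcal{R}}$, I get $f(\mathcal{Q}(t))=\Vert(\Delta_{\mathcal{L}}+t\,\mathcal{L}_\natural*\mathcal{G})*{\mathit{\Sigma}}_{\star}^{1/2}\Vert_{\fro}^2+\Vert(\Delta_{\mathcal{R}}-t\,\mathcal{R}_\natural*\mathcal{G}^{\top})*{\mathit{\Sigma}}_{\star}^{1/2}\Vert_{\fro}^2+O(t^2)$.

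Next I would differentiate at $t=0$ and set the derivative to zero. Expanding the squared Frobenius norms via $\Vert\mathcal{A}\Vert_{\fro}^2=\tr(\mathcal{A}*\mathcal{A}^{\top})$ (or the inner-product form $\langle\mathcal{A},\mathcal{B}\rangle$ built from $\tr$), the linear-in-$t$ term is
\[
2\big\langle \Delta_{\mathcal{L}}*{\mathit{\Sigma}}_{\star}^{1/2},\ \mathcal{L}_\natural*\mathcal{G}*{\mathit{\Sigma}}_{\star}^{1/2}\big\rangle-2\big\langle \Delta_{\mathcal{R}}*{\mathit{\Sigma}}_{\star}^{1/2},\ \mathcal{R}_\natural*\mathcal{G}^{\top}*{\mathit{\Sigma}}_{\star}^{1/2}\big\rangle=0.
\]
Using cyclicity of $\tr$ under the $*$-product ($\tr(\mathcal{A}*\mathcal{B})=\tr(\mathcal{B}*\mathcal{A})$) and the identity $\tr(\mathcal{A}*\mathcal{B}^{\top})=\tr(\mathcal{A}^{\top}*\mathcal{B})$-type manipulations, I would move all factors so that $\mathcal{G}$ (resp. $\mathcal{G}^{\top}$) sits at the end, rewriting the first inner product as $\tr\big(\mathcal{G}*{\mathit{\Sigma}}_{\star}*\mathcal{L}_\natural^{\top}*\Delta_{\mathcal{L}}\big)$ and the second as $\tr\big(\mathcal{G}*\mathcal{R}_\natural^{\top}*\Delta_{\mathcal{R}}*{\mathit{\Sigma}}_{\star}\big)$ (after transposing inside the trace, which is legitimate since $\tr(\mathcal{M})=\tr(\mathcal{M}^{\top})$ and $(\mathcal{A}*\mathcal{B})^{\top}=\mathcal{B}^{\top}*\mathcal{A}^{\top}$). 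This yields $\tr\big(\mathcal{G}*(\mathit{\Sigma}_{\star}*\mathcal{L}_\natural^{\top}*\Delta_{\mathcal{L}}-\mathcal{R}_\natural^{\top}*\Delta_{\mathcal{R}}*\mathit{\Sigma}_{\star})\big)=0$ for \emph{all} $\mathcal{G}$, hence the bracketed tensor vanishes; transposing gives exactly \eqref{eq:Q_criterion}.

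The step I expect to require the most care is the bookkeeping of transposes and the cyclic/adjoint properties of $\tr$ and $*$ in the tensor (t-product) setting — in particular justifying $\langle \mathcal{A}, \mathcal{B}*\mathcal{G}^{\top}*\mathcal{C}\rangle = \tr(\mathcal{G}*\cdots)$ with the correct placement of every transpose, and checking that $\mathit{\Sigma}_\star$ (being f-diagonal, hence symmetric under $\top$) commutes through where needed. I would lean on the tensor-operation properties listed in Section~\ref{subsection: Some properties about tensor operations} (cyclicity of trace, $\Vert\mathcal{A}\Vert_{\fro}^2=\tr(\mathcal{A}*\mathcal{A}^{\top})$) and on the fact that $\mathcal{G}$ ranges over the full space $\mathbb{R}^{R\times R\times I_3}$, so that the vanishing of the trace pairing against every $\mathcal{G}$ forces the companion tensor to be zero. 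An equivalent route, which I would mention as a cross-check, is to work in the Fourier domain: $f$ decomposes as $\frac{1}{I_3}\sum$ over frontal slices of matrix Procrustes-type objectives, each with the classical stationarity condition $\widehat{\mathcal{L}}_\natural^{*}\widehat{\Delta}_{\mathcal{L}}\widehat{\mathit{\Sigma}}_\star=\widehat{\mathit{\Sigma}}_\star\widehat{\Delta}_{\mathcal{R}}^{*}\widehat{\mathcal{R}}_\natural$ on each slice, which reassembles to \eqref{eq:Q_criterion} after an inverse DFT.
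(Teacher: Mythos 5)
Your proposal is correct and follows essentially the same route as the paper: the paper likewise derives \eqref{eq:Q_criterion} as the first-order stationarity condition of the alignment objective (it writes down the gradient with respect to $\mathcal{Q}_k$, sets it to zero, and left-multiplies by $\mathcal{Q}_k^{\top}$), while your multiplicative perturbation $\mathcal{Q}_k*(\mathcal{I}+t\mathcal{G})$ is just a more explicit way of computing the same condition directly in terms of $\mathcal{L}_\natural,\mathcal{R}_\natural$. One small bookkeeping note: the first inner product should reduce to $\tr\bigl(\mathcal{G}*\mathit{\Sigma}_{\star}*\Delta_{\mathcal{L}}^{\top}*\mathcal{L}_\natural\bigr)$ rather than $\tr\bigl(\mathcal{G}*\mathit{\Sigma}_{\star}*\mathcal{L}_\natural^{\top}*\Delta_{\mathcal{L}}\bigr)$, after which equating companions of $\mathcal{G}$ and transposing gives exactly \eqref{eq:Q_criterion}.
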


\begin{proof} The first order necessary condition (\textit{i.e.}~the gradient is zero) yields 
	\begin{align*}
		& 2\mathcal{L}_{k}^{\top}(\mathcal{L}_{k}*\mathcal{Q}_{k}-\mathcal{L}_{\star})*{\mathit{\Sigma}}_{\star}-2\mathcal{Q}_{k}^{-\top}{\mathit{\Sigma}}_{\star}*(\mathcal{R}_{k}*\mathcal{Q}_{k}^{-\top}-\mathcal{R}_{\star})^{\top}*\mathcal{R}_{k}*\mathcal{Q}_{k}^{-\top}= 0,
	\end{align*}
	which means 
	\begin{align*}
		(\mathcal{L}_{k}*\mathcal{Q}_{k})^{\top}(\mathcal{L}_{k}*\mathcal{Q}_{k}-\mathcal{L}_{\star})*{\mathit{\Sigma}}_{\star}={\mathit{\Sigma}}_{\star}*(\mathcal{R}_{k}*\mathcal{Q}_{k}^{-\top}-\mathcal{R}_{\star})^{\top}*\mathcal{R}_{k}*\mathcal{Q}_{k}^{-\top}.
	\end{align*}
	which completes the proof, in conjunction with Definition~\ref{def: shorthand notation}.
\end{proof}

\begin{lemma} \label{lm:bound of sparse tensor}
	For any $\alpha$-sparse tensor $\mathcal{S}\in \mathbb{R}^{I_1 \times I_2\times I_3}$, the following inequalities hold:
	\begin{align*}
		\|\mathcal{S}\|_2\leq\alpha I_3\sqrt{I_1I_2} \|\mathcal{S}\|_\infty, \cr
		\|\mathcal{S}\|_{2,\infty}\leq \sqrt{\alpha I_2I_3} \|\mathcal{S}\|_\infty, \cr
		\|\mathcal{S}\|_{1,\infty}\leq \alpha I_2 I_3 \|\mathcal{S}\|_\infty.
		% \|\mathcal{S}\|_2\leq\alpha  \sqrt{I_1I_2} \|\mathcal{S}\|_\infty, \cr
		% \|\mathcal{S}\|_{2,\infty}\leq \alpha\sqrt{ I_2I_3} \|\mathcal{S}\|_\infty.
		% \cr
		% \|\mathcal{S}\|_{1,\infty}&\leq \alpha n \|\mathcal{S}\|_\infty .
		%\cr
		%\|\mathcal{S}\|_{\fro}&\leq \sqrt{\alpha} n \|\mathcal{S}\|_\infty.
	\end{align*}
\end{lemma}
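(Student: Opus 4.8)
\medskip
\noindent\textbf{Proof proposal.}\quad
The plan is to derive the spectral-norm bound by passing to the mode-$3$ Fourier domain and reducing to the matrix case, and to obtain the two slice-wise bounds by directly counting nonzero entries. For $\norm{\mathcal{S}}_2$, write $\widehat{S}^{(i_3)}$ for the $i_3$-th frontal slice of the mode-$3$ DFT of $\mathcal{S}$; the listed identity $\norm{\mathcal{S}}_{\op}=\norm{\widehat{\mathbf{S}}}_{\op}$ together with the block-diagonal structure of $\widehat{\mathbf{S}}$ gives $\norm{\mathcal{S}}_2=\max_{1\le i_3\le I_3}\norm{\widehat{S}^{(i_3)}}_{\op}$. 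Each $\widehat{S}^{(i_3)}=\sum_{j_3=1}^{I_3}\omega^{(i_3-1)(j_3-1)}S^{(j_3)}$ is a combination of the original frontal slices $S^{(j_3)}\in\mathbb{R}^{I_1\times I_2}$ with unit-modulus coefficients, so the triangle inequality gives $\norm{\widehat{S}^{(i_3)}}_{\op}\le\sum_{j_3=1}^{I_3}\norm{S^{(j_3)}}_{\op}$. Since each $S^{(j_3)}$ inherits the $\alpha$-sparsity of $\mathcal{S}$ (at most $\alpha I_2$ nonzero entries per row and $\alpha I_1$ per column, all of magnitude at most $\norm{\mathcal{S}}_\infty$), the Schur test $\norm{M}_{\op}\le\sqrt{\norm{M}_{1\to 1}\,\norm{M}_{\infty\to\infty}}$ (maximum absolute column sum times maximum absolute row sum) yields $\norm{S^{(j_3)}}_{\op}\le\sqrt{(\alpha I_1)(\alpha I_2)}\,\norm{\mathcal{S}}_\infty=\alpha\sqrt{I_1I_2}\,\norm{\mathcal{S}}_\infty$. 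Summing the $I_3$ slice bounds gives $\norm{\mathcal{S}}_2\le\alpha I_3\sqrt{I_1I_2}\,\norm{\mathcal{S}}_\infty$.

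For $\norm{\mathcal{S}}_{2,\infty}$ and $\norm{\mathcal{S}}_{1,\infty}$, I would fix a horizontal-slice index $i_1$ and note that $\mathcal{S}(i_1,:,:)\in\mathbb{R}^{I_2\times I_3}$ has, for each $i_3$, at most $\alpha I_2$ nonzero entries in the fiber $\mathcal{S}(i_1,:,i_3)$ (the $i_1$-th row of the frontal slice $S^{(i_3)}$), hence at most $\alpha I_2I_3$ nonzero entries in total, each of magnitude at most $\norm{\mathcal{S}}_\infty$. Consequently $\norm{\mathcal{S}(i_1,:,:)}_{\fro}\le\sqrt{\alpha I_2I_3}\,\norm{\mathcal{S}}_\infty$ and the entrywise $\ell_1$ norm of $\mathcal{S}(i_1,:,:)$ is at most $\alpha I_2I_3\,\norm{\mathcal{S}}_\infty$; taking the maximum over $i_1$ (which is precisely the definition of $\norm{\cdot}_{2,\infty}$ and $\norm{\cdot}_{1,\infty}$) yields the remaining two inequalities.

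The only step that is not purely bookkeeping is the per-slice spectral bound. One cannot afford to crudely estimate $\norm{S^{(j_3)}}_{\op}$ by $\norm{S^{(j_3)}}_{\fro}\le\sqrt{\alpha I_1I_2}\,\norm{\mathcal{S}}_\infty$, since this produces the weaker factor $\sqrt{\alpha I_1I_2}$ rather than the claimed $\alpha\sqrt{I_1I_2}$; recovering the sharp constant genuinely requires the absolute row-sum / column-sum estimate (equivalently, Riesz--Thorin interpolation between the $\ell_1\to\ell_1$ and $\ell_\infty\to\ell_\infty$ operator norms). The Fourier reduction is also what introduces the extra factor $I_3$ in the first bound, through the $I_3$-term triangle inequality, whereas $I_3$ enters the other two bounds only as the length of a horizontal-slice fiber. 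If the paper's notion of $\alpha$-sparsity is phrased per lateral/horizontal slice rather than per fiber, only the counting constants change, not the structure of the argument.
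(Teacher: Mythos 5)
Your treatment of the second and third inequalities is correct and essentially matches the paper: both reduce to the fact that each horizontal slice $\mathcal{S}(i_1,:,:)$ contains at most $\alpha I_2 I_3$ nonzero entries, each of magnitude at most $\|\mathcal{S}\|_\infty$, combined with the definitions of the $\ell_{2,\infty}$ and $\ell_{1,\infty}$ norms.

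The first inequality has a genuine gap. The paper's $\alpha$-sparsity is slice-wise (at most $\alpha I_2 I_3$ nonzeros in each horizontal slice $\mathcal{S}(i_1,:,:)$ and $\alpha I_1 I_3$ in each lateral slice $\mathcal{S}(:,i_2,:)$), not the fiber-wise condition you invoke when you grant each frontal slice $S^{(j_3)}$ at most $\alpha I_2$ nonzeros per row and $\alpha I_1$ per column; and your closing remark that switching to the slice-wise definition ``only changes the counting constants'' is not right. Under slice-wise sparsity all the nonzeros of a horizontal slice may be concentrated in a single frontal slice, and then your order of operations --- per-slice Schur test first, $I_3$-term operator-norm triangle inequality second --- provably loses a polynomial factor in $I_3$. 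Concretely, take $I_1=I_3=n$ and $\alpha=1/n$, and place the $I_2=\alpha I_2 I_3$ nonzeros of slice $\mathcal{S}(i_1,:,:)$ in the single row $\mathcal{S}(i_1,:,i_1)$ with constant value: each $\|S^{(j_3)}\|_{\op}=\sqrt{I_2}\,\|\mathcal{S}\|_\infty$, so your route yields $n\sqrt{I_2}\,\|\mathcal{S}\|_\infty$, whereas the claimed (and here tight) bound is $\alpha I_3\sqrt{I_1I_2}\,\|\mathcal{S}\|_\infty=\sqrt{nI_2}\,\|\mathcal{S}\|_\infty$. The fix is to apply the Schur test \emph{after} aggregating over $j_3$ rather than before. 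Either do as the paper does: note that $\bcirc(\mathcal{S})\in\mathbb{R}^{I_1I_3\times I_2I_3}$ is an $\alpha$-sparse matrix precisely because its rows and columns enumerate the horizontal and lateral slices of $\mathcal{S}$, apply the cited matrix lemma to get $\|\bcirc(\mathcal{S})\|_2\le\alpha\sqrt{(I_1I_3)(I_2I_3)}\,\|\mathcal{S}\|_\infty$, and use $\|\mathcal{S}\|_2=\|\bcirc(\mathcal{S})\|_2$. Or, staying in your Fourier picture, bound entrywise $|\widehat{S}^{(i_3)}_{ab}|\le\sum_{j_3}|S_{a,b,j_3}|$, so that the maximum absolute row and column sums of $\widehat{S}^{(i_3)}$ are at most $\alpha I_2I_3\|\mathcal{S}\|_\infty$ and $\alpha I_1I_3\|\mathcal{S}\|_\infty$ respectively, and apply the Schur test once to $\widehat{S}^{(i_3)}$; this recovers the stated constant.
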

\begin{proof}
	If $\mathbf{S}\size^{I_1\times I_2}$ is an $\alpha$-sparse matrix, we have $ \| \mathbf{S} \|_2 \leq\alpha \sqrt{I_1I_2 }\|\mathbf{S}\|_\infty$ as shown in {\cite[Lemma~1]{yi2016fast}}. 
	% When $\mathcal{S}$ is an $\alpha$-sparse tensor, its diagonal matrix $\widehat{\mathbf{S}}\size^{I_1I_3\times I_2I_3}$ will be an $\frac{\alpha}{I_3}$-sparse matrix. Use the fact that $\norm{\mathcal{S}}_{2} = \norm{\widehat{\mathbf{S}}}_{2}$ and $\norm{\mathcal{S}}_{\infty} \neq \norm{\widehat{\mathbf{S}}}_{\infty}$, we proof the first claim.
	When $\mathcal{S}$ is an $\alpha$-sparse tensor, its block circulant matrix $\operatorname{bcirc}(\mathcal{S})\size^{I_1I_3\times I_2I_3}$ will be an $\alpha$-sparse matrix. Use the fact that $\norm{\mathcal{S}}_{2} = \norm{\operatorname{bcirc}(\mathcal{S})}_{2}$ and $\norm{\mathcal{S}}_{\infty} = \norm{\operatorname{bcirc}(\mathcal{S})}_{\infty}$, we proof the first claim.
	
	Moreover, considering the fact that the $\alpha$-sparse tensor $\mathcal{S}$ has at most $\alpha I_2 I_3$ non-zero elements in each slice $\mathcal{S}(i_1,:,:)$, the second claim follows directly from Definition~\ref{def: l2inf} of the $\ell_{2,\infty}$ norm. Similarly, by applying Definition~\ref{def: l1inf} of the $\ell_{1,\infty}$ norm, we immediately obtain the third claim.
	% as the fact that the $\alpha$-sparse tensor $\mathcal{S}$ has at most $\alpha I_1$, $\alpha I_2$ and $\alpha I_3$ non-zero element in each column fiber, row fiber, and tube fiber, 
\end{proof}

\begin{lemma} \label{lm:Delta_F_norm}
	If 
	\begin{gather*} 
		\distk \leq \frac{\epsilon }{\sqrt{I_3}} \tau^k \sigma_{\min}(\mathcal{X}_\star),
	\end{gather*}
	then the following inequalities hold
	\begin{align*}
		\| \Delta_{\mathcal{L}}*\mathit{\Sigma}_\star^{1/2} \|_{\fro} \lor \| \Delta_{\mathcal{R}}*\mathit{\Sigma}_\star^{1/2} \|_{\fro} &\leq \frac{\epsilon }{\sqrt{I_3}} \tau^{k}\sigma_{\min}(\mathcal{X}_\star) \cr
		\| \Delta_{\mathcal{L}}*\mathit{\Sigma}_\star^{1/2} \|_2 \lor \| \Delta_{\mathcal{R}}*\mathit{\Sigma}_\star^{1/2} \|_2 &\leq \varepsilon\tau^{k}\sigma_{\min}(\mathcal{X}_\star).
	\end{align*}
\end{lemma}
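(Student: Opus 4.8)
The plan is to observe that this lemma is pure bookkeeping: it falls out immediately from the defining expression for $\distk$ together with the elementary tensor-norm inequality $\norm{\AC}_{\op}\le\sqrt{I_3}\,\norm{\AC}_{\fro}$ listed in Section~\ref{subsection: Some properties about tensor operations}. First I would recall Definition~\ref{de: em} of $\distk$ in Eq.~\eqref{eq: em} together with the shorthand of Definition~\ref{def: shorthand notation}; since $\mathcal{Q}_k$ is by construction the optimal alignment tensor and $\Delta_{\mathcal{L}}=\mathcal{L}_k*\mathcal{Q}_k-\mathcal{L}_\star$, $\Delta_{\mathcal{R}}=\mathcal{R}_k*\mathcal{Q}_k^{-\top}-\mathcal{R}_\star$, this yields the identity
\[
\distsquarek=\norm{\Delta_{\mathcal{L}}*\mathit{\Sigma}_\star^{1/2}}_{\fro}^{2}+\norm{\Delta_{\mathcal{R}}*\mathit{\Sigma}_\star^{1/2}}_{\fro}^{2}.
\]

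For the first displayed inequality, each of the two Frobenius-norm summands is at most the square root of the sum of the two squares, so that
\[
\norm{\Delta_{\mathcal{L}}*\mathit{\Sigma}_\star^{1/2}}_{\fro}\lor\norm{\Delta_{\mathcal{R}}*\mathit{\Sigma}_\star^{1/2}}_{\fro}\le\distk\le\frac{\epsilon}{\sqrt{I_3}}\,\tau^{k}\sigma_{\min}(\mathcal{X}_\star),
\]
where the last step is simply the hypothesis of the lemma.

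For the second displayed inequality, I would apply $\norm{\AC}_{\op}\le\sqrt{I_3}\,\norm{\AC}_{\fro}$ with $\AC=\Delta_{\mathcal{L}}*\mathit{\Sigma}_\star^{1/2}$ and with $\AC=\Delta_{\mathcal{R}}*\mathit{\Sigma}_\star^{1/2}$, then feed in the Frobenius bound just obtained. The factor $\sqrt{I_3}$ then cancels the $1/\sqrt{I_3}$ in the hypothesis, leaving
\[
\norm{\Delta_{\mathcal{L}}*\mathit{\Sigma}_\star^{1/2}}_{2}\lor\norm{\Delta_{\mathcal{R}}*\mathit{\Sigma}_\star^{1/2}}_{2}\le\epsilon\,\tau^{k}\sigma_{\min}(\mathcal{X}_\star),
\]
which is the claimed bound with $\varepsilon=\epsilon$ (and hence with any $\varepsilon\ge\epsilon$, if the ambient analysis wants a looser constant).

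I do not expect any genuine obstacle. The only points requiring care are conventional: matching the normalization between $\norm{\cdot}_{\fro}$ and the ``hatted'' Frobenius norm (and the associated $\sqrt{I_3}$ factors as in the property list), and identifying $\norm{\cdot}_2$ with the tensor operator norm $\norm{\cdot}_{\op}$ so that the inequality $\norm{\AC}_{\op}\le\sqrt{I_3}\,\norm{\AC}_{\fro}$ applies verbatim.
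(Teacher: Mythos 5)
Your proposal is correct and follows exactly the paper's own argument: the first bound is immediate from the definition of $\distk$ as the square root of the sum of the two squared Frobenius norms, and the second follows by applying $\|\mathcal{A}\|_{2}\le\sqrt{I_3}\,\|\mathcal{A}\|_{\fro}$ to cancel the $1/\sqrt{I_3}$ factor. Your remark that the $\varepsilon$ in the conclusion is just $\epsilon$ (the paper conflates the two symbols) is also the right reading.
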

\begin{proof}
	Recall the definition that $\distk= \sqrt{\| \Delta_{\mathcal{L}}*\mathit{\Sigma}_\star^{1/2} \|_{\fro}^{2} + \| \Delta_{\mathcal{R}}*\mathit{\Sigma}_\star^{1/2} \|_{\fro}^{2}} \geq \| \Delta_{\mathcal{L}}*\mathit{\Sigma}_\star^{1/2} \|_{\fro} \lor \| \Delta_{\mathcal{R}}*\mathit{\Sigma}_\star^{1/2} \|_{\fro}$.
	The first claim is proofed.
	
	By the fact that $\|\mathcal{A}\|_2\leq \sqrt{I_3}\|\mathcal{A}\|_{\fro}$ for any tensor, we further deduct the second claim from the first claim.
\end{proof}

% \begin{lemma} \label{lm:L_R_2_inf_norm}  %(61)
	% If 
	% \begin{gather*}
		% \distk \leq\varepsilon \tau^k \sigma_{\min}(\mathcal{X}_\star), \cr
		% \|\Delta_{\mathcal{L}}*\mathit{\Sigma}_\star^{1/2}\|_{2,\infty}  \lor \|\Delta_{\mathcal{R}}*\mathit{\Sigma}_\star^{1/2}\|_{2,\infty}\leq\sqrt{\mu R} \tau^k \sigma_{\min}(\mathcal{X}_\star),
		% \end{gather*}
	% then 
	% \begin{gather*}
		%     \|\mathcal{L}_\natural*\mathit{\Sigma}_\star^{1/2}\|_{2,\infty} \lor \|\mathcal{R}_\natural*\mathit{\Sigma}_\star^{1/2}\|_{2,\infty} \leq
		% \end{gather*}
	% \end{lemma}

\begin{lemma}\label{lemma:Weyl} 
	For any $\mathcal{L}_{\natural} \in\mathbb{R}^{I_{1}\times R \times I_{3}},\mathcal{R}_{\natural} \in\mathbb{R}^{ R \times I_{2}\times I_{3}}$, denote $\Delta_{\mathcal{L}}:=\mathcal{L}_{\natural}-\mathcal{L}_{\star}$ and $\Delta_{\mathcal{R}}:=\mathcal{R}_{\natural}-\mathcal{R}_{\star}$ as shown in Definition~\ref{def: shorthand notation}. 
	Suppose that $\|\Delta_{\mathcal{L}}*\SCigma_{\star}^{-1/2}\|_{\op}\vee\|\Delta_{\mathcal{R}}*\SCigma_{\star}^{-1/2}\|_{\op}<1$, then one has 
	\begin{subequations}
		\begin{align}
			\left\Vert \mathcal{L}_{\natural}*(\mathcal{L}_{\natural}^{\top}*\mathcal{L}_{\natural})^{-1}*\SCigma_{\star}^{1/2}\right\Vert _{\op} & \le \frac{1}{1-\|\Delta_{\mathcal{L}}*\SCigma_{\star}^{-1/2}\|_{\op}}; \label{eq:Weyl-1L} \\
			\left\Vert \mathcal{R}_{\natural}*(\mathcal{R}_{\natural}^{\top}*\mathcal{R}_{\natural})^{-1}*\SCigma_{\star}^{1/2}\right\Vert _{\op} & \le \frac{1}{1-\|\Delta_{\mathcal{R}}*\SCigma_{\star}^{-1/2}\|_{\op}}.\label{eq:Weyl-1R} 
			% \left\Vert \mathcal{L}(\mathcal{L}^{\top}*\mathcal{L})^{-1}*\SCigma_{\star}^{1/2}-\mathcal{U}_{\star}\right\Vert _{\op}& \le\frac{\sqrt{2}\|\Delta_{\mathcal{L}}*\SCigma_{\star}^{-1/2}\|_{\op}}{1-\|\Delta_{\mathcal{L}}*\SCigma_{\star}^{-1/2}\|_{\op}}; \label{eq:Weyl-2L} \\
			% \left\Vert \mathcal{R}(\mathcal{R}^{\top}*\mathcal{R})^{-1}*\SCigma_{\star}^{1/2}-\bV_{\star}\right\Vert _{\op}& \le\frac{\sqrt{2}\|\Delta_{\mathcal{R}}*\SCigma_{\star}^{-1/2}\|_{\op}}{1-\|\Delta_{\mathcal{R}}*\SCigma_{\star}^{-1/2}\|_{\op}}. \label{eq:Weyl-2R}
		\end{align}
	\end{subequations}
\end{lemma}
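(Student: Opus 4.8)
The plan is to establish \eqref{eq:Weyl-1L}; then \eqref{eq:Weyl-1R} follows verbatim by swapping the roles of $(\mathcal{L}_\natural,\Delta_{\mathcal{L}},\mathcal{U}_\star)$ and $(\mathcal{R}_\natural,\Delta_{\mathcal{R}},\mathcal{V}_\star)$. Throughout I would use that $\|\cdot\|_{\op}$ is unchanged when passing to the Fourier domain, where the $t$-product becomes block-diagonal matrix multiplication, so that each identity below may be verified slice by slice and reduces to a standard matrix fact.

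Two preliminary observations come first. From the factorization $\mathcal{L}_\star=\mathcal{U}_\star*\SCigma_\star^{1/2}$ with $\mathcal{U}_\star^\top*\mathcal{U}_\star=\mathcal{I}$ and $\SCigma_\star^{1/2}$ $f$-diagonal and symmetric, one gets $\mathcal{L}_\star^\top*\mathcal{U}_\star=\SCigma_\star^{1/2}$, hence — substituting $\mathcal{L}_\star=\mathcal{L}_\natural-\Delta_{\mathcal{L}}$ — the key identity $\SCigma_\star^{1/2}=\mathcal{L}_\natural^\top*\mathcal{U}_\star-\Delta_{\mathcal{L}}^\top*\mathcal{U}_\star$. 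Second, writing $\delta_{\mathcal{L}}:=\|\Delta_{\mathcal{L}}*\SCigma_\star^{-1/2}\|_{\op}<1$, Weyl's inequality together with $\sigma_{\min}(\mathcal{L}_\star*\SCigma_\star^{-1/2})=\sigma_{\min}(\mathcal{U}_\star)=1$ gives $\sigma_{\min}(\mathcal{L}_\natural*\SCigma_\star^{-1/2})\ge 1-\delta_{\mathcal{L}}>0$; since $\SCigma_\star^{-1/2}$ is invertible, $\mathcal{L}_\natural$ has full column tubal rank and $\mathcal{L}_\natural^\top*\mathcal{L}_\natural$ is invertible, so every expression below is well defined.

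Substituting the key identity and setting $\mathcal{P}_{\mathcal{L}_\natural}:=\mathcal{L}_\natural*(\mathcal{L}_\natural^\top*\mathcal{L}_\natural)^{-1}*\mathcal{L}_\natural^\top$,
\[
\mathcal{L}_\natural*(\mathcal{L}_\natural^\top*\mathcal{L}_\natural)^{-1}*\SCigma_\star^{1/2}
=\mathcal{P}_{\mathcal{L}_\natural}*\mathcal{U}_\star-\mathcal{L}_\natural*(\mathcal{L}_\natural^\top*\mathcal{L}_\natural)^{-1}*\Delta_{\mathcal{L}}^\top*\mathcal{U}_\star .
\]
The tensor $\mathcal{P}_{\mathcal{L}_\natural}$ is an orthogonal projection (each Fourier slice is a Hermitian projector), so $\|\mathcal{P}_{\mathcal{L}_\natural}\|_{\op}\le1$; combined with $\|\mathcal{U}_\star\|_{\op}=1$, the first term has operator norm at most $1$. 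For the second term I would insert $\SCigma_\star^{1/2}*\SCigma_\star^{-1/2}=\mathcal{I}$ and use submultiplicativity of $\|\cdot\|_{\op}$ under $*$:
\[
\bigl\|\mathcal{L}_\natural*(\mathcal{L}_\natural^\top*\mathcal{L}_\natural)^{-1}*\Delta_{\mathcal{L}}^\top*\mathcal{U}_\star\bigr\|_{\op}
\le\bigl\|\mathcal{L}_\natural*(\mathcal{L}_\natural^\top*\mathcal{L}_\natural)^{-1}*\SCigma_\star^{1/2}\bigr\|_{\op}\,\bigl\|\SCigma_\star^{-1/2}*\Delta_{\mathcal{L}}^\top\bigr\|_{\op}\,\|\mathcal{U}_\star\|_{\op},
\]
and $\|\SCigma_\star^{-1/2}*\Delta_{\mathcal{L}}^\top\|_{\op}=\|(\Delta_{\mathcal{L}}*\SCigma_\star^{-1/2})^\top\|_{\op}=\delta_{\mathcal{L}}$, since $\SCigma_\star^{-1/2}$ is symmetric and $\|\cdot\|_{\op}$ is transpose-invariant.

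Writing $X:=\|\mathcal{L}_\natural*(\mathcal{L}_\natural^\top*\mathcal{L}_\natural)^{-1}*\SCigma_\star^{1/2}\|_{\op}$, the triangle inequality for $\|\cdot\|_{\op}$ then gives $X\le1+\delta_{\mathcal{L}}X$, i.e.\ $X\le1/(1-\delta_{\mathcal{L}})$, which is \eqref{eq:Weyl-1L}. I expect the only delicate points to be keeping the transposes and multiplication sides straight in the key identity, and justifying that $\mathcal{P}_{\mathcal{L}_\natural}$ has unit operator norm; both are handled cleanly by the Fourier-domain block-diagonalization, while everything else is the triangle inequality, submultiplicativity of the $t$-product operator norm, and transpose-invariance — all listed among the tensor-operation properties above.
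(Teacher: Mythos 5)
Your proof is correct, but it takes a genuinely different route from the paper's. The paper's argument is a two-liner: it observes that $\mathcal{L}_{\natural}*(\mathcal{L}_{\natural}^{\top}*\mathcal{L}_{\natural})^{-1}*\SCigma_{\star}^{1/2}$ is (the transpose of) the pseudo-inverse of $\mathcal{L}_{\natural}*\SCigma_{\star}^{-1/2}$, so its operator norm equals $1/\sigma_{\min}(\mathcal{L}_{\natural}*\SCigma_{\star}^{-1/2})$, and then a single application of Weyl's inequality with $\sigma_{\min}(\mathcal{L}_{\star}*\SCigma_{\star}^{-1/2})=\sigma_{\min}(\mathcal{U}_{\star})=1$ gives $\sigma_{\min}(\mathcal{L}_{\natural}*\SCigma_{\star}^{-1/2})\ge 1-\|\Delta_{\mathcal{L}}*\SCigma_{\star}^{-1/2}\|_{\op}$. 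You instead decompose the target via the identity $\SCigma_{\star}^{1/2}=\mathcal{L}_{\natural}^{\top}*\mathcal{U}_{\star}-\Delta_{\mathcal{L}}^{\top}*\mathcal{U}_{\star}$ into a projector term of norm at most $1$ plus a perturbation term of norm at most $\delta_{\mathcal{L}}X$, and close with the self-bounding inequality $X\le 1+\delta_{\mathcal{L}}X$; Weyl's inequality appears in your argument only to certify that $\mathcal{L}_{\natural}^{\top}*\mathcal{L}_{\natural}$ is invertible so that $X$ is finite (a point you correctly flag, and without which the bootstrap would be vacuous). The paper's route is shorter and makes the role of the smallest singular value transparent, but it leans on the pseudo-inverse identity $\|\mathcal{A}*(\mathcal{A}^{\top}*\mathcal{A})^{-1}\|_{\op}=1/\sigma_{\min}(\mathcal{A})$ for non-square tensors, which is stated only informally in the list of tensor properties; your route trades that for the unit-norm-projector fact and a triangle-inequality bootstrap, all of which reduce cleanly to slice-wise matrix facts in the Fourier domain. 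Both yield exactly the claimed bound.
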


\begin{proof} We only prove claims \eqref{eq:Weyl-1L} on the factor $\mathcal{L}$, while the corresponding claims on the factor $\mathcal{R}$ follow from a similar argument. Use the fact that $\norm{\mathcal{A}}_{\op} =  \sigma_{\max}(\mathcal{A}) = \frac{1}{\sigma_{\min}(\mathcal{A}^{-\To})}$,  we then have
	\begin{align*}
		\left\Vert \mathcal{L}_{\natural}*(\mathcal{L}_{\natural}^{\top}*\mathcal{L}_{\natural})^{-1}*\SCigma_{\star}^{1/2}\right\Vert _{\op}=\frac{1}{\sigma_{\min}(\mathcal{L}_{\natural}*\SCigma_{\star}^{-1/2})}.
	\end{align*}
	In addition, we invoke Weyl's inequality $|\sigma_{\min}(\mathcal{A})-\sigma_{\min}(\mathcal{B})| \le \|\mathcal{A}-\mathcal{B}\|_{\op}$ to obtain 
	\begin{align*}
		\sigma_{\min}(\mathcal{L}_{\natural}*\SCigma_{\star}^{-1/2})
		\ge \sigma_{\min}(\mathcal{L}_{\star}*\SCigma_{\star}^{-1/2})-\|\Delta_{\mathcal{L}}*\SCigma_{\star}^{-1/2}\|_{\op}
		=1-\|\Delta_{\mathcal{L}}*\SCigma_{\star}^{-1/2}\|_{\op},
	\end{align*}
	where we use the facts that $\mathcal{U}_{\star}=\mathcal{L}_{\star}*\SCigma_{\star}^{-1/2}$ and $\sigma_{\min}(\mathcal{U}_{\star})=1$. By combining these two relations, we can complete the proof of \eqref{eq:Weyl-1L}.

\end{proof}

\begin{lemma} \label{lm:L_R_scale_sigma_half}  %(64)
	If 
	\begin{gather*} 
		\distk \leq \frac{\varepsilon}{\sqrt{I_3}}\tau^k \sigma_{\min}(\mathcal{X}_\star),
	\end{gather*}
	then it holds
	\begin{gather*}
		\|\mathcal{L}_\natural*(\mathcal{L}_\natural^\top*\mathcal{L}_\natural)^{-1}*\mathit{\Sigma}_\star^{1/2} \|_2 \lor \|\mathcal{R}_\natural*(\mathcal{R}_\natural^\top*\mathcal{R}_\natural)^{-1}*\mathit{\Sigma}_\star^{1/2} \|_2 
		%\leq \frac{1}{1-\varepsilon\tau^k} 
		\leq \frac{1}{1-\varepsilon} .
	\end{gather*}
\end{lemma}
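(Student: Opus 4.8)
The plan is to obtain this as an immediate consequence of Lemma~\ref{lemma:Weyl}, once the rescaled factor errors $\Delta_{\LC}*\mathit{\Sigma}_\star^{-1/2}$ and $\Delta_{\RC}*\mathit{\Sigma}_\star^{-1/2}$ are controlled in operator norm. Note first that the spectral norm $\|\cdot\|_2$ appearing in the statement is the same as $\|\cdot\|_{\op}$ (both equal the spectral norm of the block-circulant / DFT representation), so the target reduces to the operator-norm bound
\[
\|\LC_\natural*(\LC_\natural^\top*\LC_\natural)^{-1}*\mathit{\Sigma}_\star^{1/2}\|_{\op}\vee\|\RC_\natural*(\RC_\natural^\top*\RC_\natural)^{-1}*\mathit{\Sigma}_\star^{1/2}\|_{\op}\le\tfrac{1}{1-\varepsilon}.
\]

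The first step is to translate the hypothesis into an operator-norm bound on the factor errors. Under $\distk\le\tfrac{\varepsilon}{\sqrt{I_3}}\tau^k\sigma_{\min}(\XC_\star)$, Lemma~\ref{lm:Delta_F_norm} (second claim) gives $\|\Delta_{\LC}*\mathit{\Sigma}_\star^{1/2}\|_{\op}\vee\|\Delta_{\RC}*\mathit{\Sigma}_\star^{1/2}\|_{\op}\le\varepsilon\tau^k\sigma_{\min}(\XC_\star)$; then, using submultiplicativity $\|\AC*\BC\|_{\op}\le\|\AC\|_{\op}\|\BC\|_{\op}$ together with $\|\mathit{\Sigma}_\star^{-1}\|_{\op}=1/\sigma_{\min}(\XC_\star)$ (equivalently, rerunning the chain of estimates behind \eqref{eq: op_ineq}), I get
\[
\|\Delta_{\LC}*\mathit{\Sigma}_\star^{-1/2}\|_{\op}\vee\|\Delta_{\RC}*\mathit{\Sigma}_\star^{-1/2}\|_{\op}\le\varepsilon\tau^{k}\le\varepsilon<1,
\]
where I use $0<\tau\le1$ (so $\tau^k\le1$) and $0<\varepsilon<1$.

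With this, the hypothesis of Lemma~\ref{lemma:Weyl} is satisfied, and \eqref{eq:Weyl-1L}--\eqref{eq:Weyl-1R} yield
\[
\|\LC_\natural*(\LC_\natural^\top*\LC_\natural)^{-1}*\mathit{\Sigma}_\star^{1/2}\|_{\op}\le\frac{1}{1-\|\Delta_{\LC}*\mathit{\Sigma}_\star^{-1/2}\|_{\op}}\le\frac{1}{1-\varepsilon},
\]
and symmetrically for $\RC$; the maximum of the two is again $\le\tfrac{1}{1-\varepsilon}$, which is the claim. I do not anticipate a genuine obstacle: the argument is essentially a one-line invocation of Lemma~\ref{lemma:Weyl}, and the only points requiring care are that $\|\cdot\|_2=\|\cdot\|_{\op}$ under this paper's conventions, that $\tau^k\le1$ so the denominator can only grow when $\varepsilon\tau^k$ is relaxed to $\varepsilon$, and the bookkeeping of which power of $\mathit{\Sigma}_\star$ multiplies which factor when reusing the estimate behind \eqref{eq: op_ineq}.
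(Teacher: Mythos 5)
Your proposal is correct and follows essentially the same route as the paper: both reduce the claim to Lemma~\ref{lemma:Weyl} and verify its hypothesis by converting the distance bound into $\|\Delta_{\mathcal{L}}*\mathit{\Sigma}_\star^{-1/2}\|_2 \vee \|\Delta_{\mathcal{R}}*\mathit{\Sigma}_\star^{-1/2}\|_2 \le \varepsilon\tau^k \le \varepsilon < 1$ via Lemma~\ref{lm:Delta_F_norm}. No gaps.
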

\begin{proof}
	Lemma~\ref{lemma:Weyl}  provides the following inequalities:
	\begin{gather*}
		\|\mathcal{L}_\natural*(\mathcal{L}_\natural^\top*\mathcal{L}_\natural)^{-1}*\mathit{\Sigma}_\star^{1/2} \|_2 \leq \frac{1}{1-\|\Delta_{\mathcal{L}}*\mathit{\Sigma}_\star^{-1/2}\|_2}, \cr
		\|\mathcal{R}_\natural*(\mathcal{R}_\natural^\top*\mathcal{R}_\natural)^{-1}*\mathit{\Sigma}_\star^{1/2} \|_2 \leq \frac{1}{1-\|\Delta_{\mathcal{R}}*\mathit{\Sigma}_\star^{-1/2}\|_2},
	\end{gather*}
	as long as $\|\Delta_{\mathcal{L}}*\mathit{\Sigma}_\star^{-1/2}\|_2 \lor \|\Delta_{\mathcal{R}}*\mathit{\Sigma}_\star^{-1/2}\|_2<1$. 
	
	By Lemma~\ref{lm:Delta_F_norm}, we have $\|\Delta_{\mathcal{L}}*\mathit{\Sigma}_\star^{-1/2}\|_2 \lor \|\Delta_{\mathcal{R}}*\mathit{\Sigma}_\star^{-1/2}\|_2 \leq \varepsilon \tau^k \leq\varepsilon$, given $\tau=1-0.6\eta<1$. The proof is finished since $\varepsilon=0.02<1$.
\end{proof}

\begin{lemma}\label{lemma:Q_perturbation} Given any $\LC\in\mathbb{R}^{I_{1}\times R \times I_3},\RC\in\mathbb{R}^{I_{2}\times R\times I_3}$ and any invertible matrices $\QC,\ \widetilde{\QC}\in\GL(R)$, assume that $\|(\LC*\QC-\RC_{\star})*\mathit{\Sigma}_{\star}^{-1/2}\|_{\op}\vee\|(\RC*\QC^{-\top}-\RC_{\star})*\mathit{\Sigma}_{\star}^{-1/2}\|_{\op}<1$, then we have
	\begin{align*}
		\left\Vert \mathit{\Sigma}_{\star}^{1/2}*\widetilde{\QC}^{-1}*\QC*\mathit{\Sigma}_{\star}^{1/2}-\mathit{\Sigma}_{\star}\right\Vert _{\op} &\le\frac{\|\RC*(\widetilde{\QC}^{-\top}-\QC^{-\top})*\mathit{\Sigma}_{\star}^{1/2}\|_{\op}}{1-\|(\RC*\QC^{-\top}-\RC_{\star})*\mathit{\Sigma}_{\star}^{-1/2}\|_{\op}}; \\
		\left\Vert \mathit{\Sigma}_{\star}^{1/2}*\widetilde{\QC}^{\top}*\QC^{-\top}*\mathit{\Sigma}_{\star}^{1/2}-\mathit{\Sigma}_{\star}\right\Vert _{\op}  &\le\frac{\|\LC*(\widetilde{\QC}-\QC)*\mathit{\Sigma}_{\star}^{1/2}\|_{\op}}{1-\|(\LC*\QC-\RC_{\star})*\mathit{\Sigma}_{\star}^{-1/2}\|_{\op}}.
	\end{align*}
\end{lemma}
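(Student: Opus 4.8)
The plan is to prove the first inequality; the second follows from the mirror-image argument (swap $\LC\leftrightarrow\RC$ and $\QC\leftrightarrow\QC^{-\top}$). The whole proof is an algebraic rearrangement followed by a single application of Lemma~\ref{lemma:Weyl}; there is no real analytic content.

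First I would rewrite the left-hand side so that the numerator of the claimed bound is exposed. Using $\QC^{-1}*\QC=\IC$ and $\mathit{\Sigma}_{\star}^{1/2}*\mathit{\Sigma}_{\star}^{1/2}=\mathit{\Sigma}_{\star}$,
\begin{align*}
\mathit{\Sigma}_{\star}^{1/2}*\widetilde{\QC}^{-1}*\QC*\mathit{\Sigma}_{\star}^{1/2}-\mathit{\Sigma}_{\star}
=\mathit{\Sigma}_{\star}^{1/2}*(\widetilde{\QC}^{-1}-\QC^{-1})*\QC*\mathit{\Sigma}_{\star}^{1/2}.
\end{align*}
Into the right-hand side I would insert $\RC^{\top}*\RC*(\RC^{\top}*\RC)^{-1}=\IC$ between $(\widetilde{\QC}^{-1}-\QC^{-1})$ and $\QC$ (legitimate since the $t$-product is associative) and factor the result as $\mathcal{F}_1*\mathcal{F}_2$, where
\begin{align*}
\mathcal{F}_1:=\mathit{\Sigma}_{\star}^{1/2}*(\widetilde{\QC}^{-1}-\QC^{-1})*\RC^{\top},\qquad
\mathcal{F}_2:=\RC*(\RC^{\top}*\RC)^{-1}*\QC*\mathit{\Sigma}_{\star}^{1/2}.
\end{align*}
By submultiplicativity $\|\AC*\BC\|_{\op}\le\|\AC\|_{\op}\|\BC\|_{\op}$ it then suffices to bound $\|\mathcal{F}_1\|_{\op}$ and $\|\mathcal{F}_2\|_{\op}$ separately.

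For $\mathcal{F}_1$ I would note that $\mathcal{F}_1=\big(\RC*(\widetilde{\QC}^{-\top}-\QC^{-\top})*\mathit{\Sigma}_{\star}^{1/2}\big)^{\top}$, because transposition reverses the order, $(\widetilde{\QC}^{-\top}-\QC^{-\top})^{\top}=\widetilde{\QC}^{-1}-\QC^{-1}$, and $\mathit{\Sigma}_{\star}^{1/2}$ is symmetric; since the tensor operator norm is invariant under transposition, $\|\mathcal{F}_1\|_{\op}$ is exactly the numerator of the claimed bound. For $\mathcal{F}_2$ I would substitute $\RC=(\RC*\QC^{-\top})*\QC^{\top}$, so that $\RC^{\top}*\RC=\QC*(\RC*\QC^{-\top})^{\top}*(\RC*\QC^{-\top})*\QC^{\top}$ and hence $(\RC^{\top}*\RC)^{-1}=\QC^{-\top}*\big((\RC*\QC^{-\top})^{\top}*(\RC*\QC^{-\top})\big)^{-1}*\QC^{-1}$; cancelling the matching copies of $\QC$ and $\QC^{\top}$ gives
\begin{align*}
\mathcal{F}_2=(\RC*\QC^{-\top})*\big((\RC*\QC^{-\top})^{\top}*(\RC*\QC^{-\top})\big)^{-1}*\mathit{\Sigma}_{\star}^{1/2},
\end{align*}
which is precisely the object estimated in Lemma~\ref{lemma:Weyl}, eq.~\eqref{eq:Weyl-1R}, with $\RC_{\natural}$ taken to be $\RC*\QC^{-\top}$. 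Since the hypothesis supplies $\|(\RC*\QC^{-\top}-\RC_{\star})*\mathit{\Sigma}_{\star}^{-1/2}\|_{\op}<1$, Lemma~\ref{lemma:Weyl} yields $\|\mathcal{F}_2\|_{\op}\le\big(1-\|(\RC*\QC^{-\top}-\RC_{\star})*\mathit{\Sigma}_{\star}^{-1/2}\|_{\op}\big)^{-1}$, and multiplying the two bounds proves the first inequality.

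For the second inequality I would run the identical script with $\LC$ in place of $\RC$: start from $\mathit{\Sigma}_{\star}^{1/2}*\widetilde{\QC}^{\top}*\QC^{-\top}*\mathit{\Sigma}_{\star}^{1/2}-\mathit{\Sigma}_{\star}=\mathit{\Sigma}_{\star}^{1/2}*(\widetilde{\QC}^{\top}-\QC^{\top})*\QC^{-\top}*\mathit{\Sigma}_{\star}^{1/2}$ (using $\QC^{\top}*\QC^{-\top}=\IC$), insert $\LC^{\top}*\LC*(\LC^{\top}*\LC)^{-1}=\IC$, split into two factors, transpose the first factor to $\LC*(\widetilde{\QC}-\QC)*\mathit{\Sigma}_{\star}^{1/2}$, and simplify the second via $\LC=(\LC*\QC)*\QC^{-1}$ into $(\LC*\QC)*\big((\LC*\QC)^{\top}*(\LC*\QC)\big)^{-1}*\mathit{\Sigma}_{\star}^{1/2}$, which is controlled by Lemma~\ref{lemma:Weyl}, eq.~\eqref{eq:Weyl-1L}, using the hypothesis $\|(\LC*\QC-\LC_{\star})*\mathit{\Sigma}_{\star}^{-1/2}\|_{\op}<1$. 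I expect the only step needing care to be the bookkeeping of the transposes and inverses in the cancellation identities above (e.g.\ verifying $(\RC^{\top}*\RC)^{-1}=\QC^{-\top}*\big((\RC*\QC^{-\top})^{\top}*(\RC*\QC^{-\top})\big)^{-1}*\QC^{-1}$); this is purely mechanical, and the substantive observation is simply that after the insertion one factor is literally the numerator and the other is literally the quantity Lemma~\ref{lemma:Weyl} is designed to bound.
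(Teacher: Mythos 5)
Your proposal is correct and follows essentially the same route as the paper's proof: the same insertion of $\RC^{\top}*\RC*(\RC^{\top}*\RC)^{-1}$, the same two-factor split via submultiplicativity with the first factor identified as the transpose of the numerator, and the same rewriting of the second factor as $(\RC*\QC^{-\top})*\big((\RC*\QC^{-\top})^{\top}*(\RC*\QC^{-\top})\big)^{-1}*\mathit{\Sigma}_{\star}^{1/2}$ so that Lemma~\ref{lemma:Weyl} applies with $\RC_{\natural}=\RC*\QC^{-\top}$. The only additions are your explicit verification of the cancellation identity, which the paper states without derivation, and your tacit correction of the typo $\RC_{\star}$ to $\LC_{\star}$ in the first hypothesis, which is indeed the intended reading.
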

\begin{proof}Inserting $\RC^{\top}*\RC*(\RC^{\top}*\RC)^{-1}$, applying the inequality $\|\AC*\BC\|_{\op} \le \|\AC\|_{\op}\|\BC\|_{\op}$, and Lemma~\ref{lemma:Weyl}, we have
	\begin{align*}
		& \left\Vert \mathit{\Sigma}_{\star}^{1/2}*\widetilde{\QC}^{-1}*\QC*\mathit{\Sigma}_{\star}^{1/2}-\mathit{\Sigma}_{\star}\right\Vert _{\op} \\ 
		=  &\left\Vert \mathit{\Sigma}_{\star}^{1/2}*(\widetilde{\QC}^{-1}-\QC^{-1})*\RC^{\top}*\RC*(\RC^{\top}*\RC)^{-1}*\QC*\mathit{\Sigma}_{\star}^{1/2}\right\Vert _{\op}\\
		\le&  \left\Vert \RC*(\widetilde{\QC}^{-\top}-\QC^{-\top})*\mathit{\Sigma}_{\star}^{1/2}\right\Vert _{\op}\left\Vert \RC*(\RC^{\top}*\RC)^{-1}*\QC*\mathit{\Sigma}_{\star}^{1/2}\right\Vert _{\op}\\
		= & \left\Vert \RC*(\widetilde{\QC}^{-\top}-\QC^{-\top})*\mathit{\Sigma}_{\star}^{1/2}\right\Vert _{\op}\left\Vert \RC*\QC^{-\top}((\RC*\QC^{-\top})^{\top}*\RC*\QC^{-\top})^{-1}\mathit{\Sigma}_{\star}^{1/2}\right\Vert _{\op}\\
		\le &  \frac{\|\RC*(\widetilde{\QC}^{-\top}-\QC^{-\top})*\mathit{\Sigma}_{\star}^{1/2}\|_{\op}}{1-\|(\RC*\QC^{-\top}-\RC_{\star})*\mathit{\Sigma}_{\star}^{-1/2}\|_{\op}}.
	\end{align*}
	
	Similarly, we insert $\LC^{\top}*\LC*(\LC^{\top}*\LC)^{-1}$,  ultilize the relation $\|\AC* \BC\|_{\op} \le \|\AC\|_{\op}\|\BC\|_{\op}$  and  Lemma~\ref{lemma:Weyl}  to get
	\begin{align*}
		&\left\Vert \mathit{\Sigma}_{\star}^{1/2}*\widetilde{\QC}^{\top}*\QC^{-\top}*\mathit{\Sigma}_{\star}^{1/2}-\mathit{\Sigma}_{\star}\right\Vert _{\op}\\
		=&\left\Vert \mathit{\Sigma}_{\star}^{1/2}*(\widetilde{\QC}^{\top}-\QC^{\top})*\LC^{\top}*\LC*(\LC^{\top}*\LC)^{-1}*\QC^{-\top}*\mathit{\Sigma}_{\star}^{1/2}\right\Vert _{\op}\\
		\le  &\left\Vert \LC*(\widetilde{\QC}-\QC)*\mathit{\Sigma}_{\star}^{1/2}\right\Vert _{\op}\left\Vert \LC*(\LC^{\top}*\LC)^{-1}*\QC^{-\top}*\mathit{\Sigma}_{\star}^{1/2}\right\Vert _{\op}\\
		= &\left\Vert \LC*(\widetilde{\QC}-\QC)*\mathit{\Sigma}_{\star}^{1/2}\right\Vert _{\op}\left\Vert \LC*\QC((\LC*\QC)^{\top}*\LC*\QC)^{-1}\mathit{\Sigma}_{\star}^{1/2}\right\Vert _{\op}\\
		\le&\frac{\|\LC*(\widetilde{\QC}-\QC)*\mathit{\Sigma}_{\star}^{1/2}\|_{\op}}{1-\|(\LC*\QC-\RC_{\star})*\mathit{\Sigma}_{\star}^{-1/2}\|_{\op}}.
	\end{align*}
	The proof is finished.
\end{proof}

\begin{lemma} \label{lm:sigma_half_Q-Q_sigma_half}  %under (66), pg 42
	If 
	\begin{gather*}
		\|(\mathcal{L}_{k+1}*\mathcal{Q}_k-\mathcal{L}_\star)*\mathit{\Sigma}_\star^{1/2}\|_2 \lor \|(\mathcal{R}_{k+1}*\mathcal{Q}_k^{-\top}-\mathcal{R}_\star)*\mathit{\Sigma}_\star^{1/2}\|_2 
		\leq\varepsilon \tau^{k+1} \sigma_{\min}(\mathcal{X}_\star) ,
	\end{gather*}
	then
	\begin{gather*}
		\|\mathit{\Sigma}_\star^{1/2}*\mathcal{Q}_k^{-1}*(\mathcal{Q}_{k+1}-\mathcal{Q}_k)*\mathit{\Sigma}_\star^{1/2}\|_2 \lor  \|\mathit{\Sigma}_\star^{1/2}*\mathcal{Q}_k^\top*(\mathcal{Q}_{k+1}-\mathcal{Q}_k)^{-\top}*\mathit{\Sigma}_\star^{1/2}\|_2
		\leq \frac{2\varepsilon}{1-\varepsilon}\sigma_{\min}(\mathcal{X}_\star) .
	\end{gather*}
\end{lemma}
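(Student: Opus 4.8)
The plan is to derive both inequalities from the perturbation bound of Lemma~\ref{lemma:Q_perturbation}. Since $\mathit{\Sigma}_\star^{1/2}*\mathcal{Q}_k^{-1}*\mathcal{Q}_k*\mathit{\Sigma}_\star^{1/2}=\mathit{\Sigma}_\star$, the first target quantity equals $\mathit{\Sigma}_\star^{1/2}*\mathcal{Q}_k^{-1}*\mathcal{Q}_{k+1}*\mathit{\Sigma}_\star^{1/2}-\mathit{\Sigma}_\star$, and the second (reading its inner difference as $\mathcal{Q}_{k+1}^{-\top}-\mathcal{Q}_k^{-\top}$) equals $\mathit{\Sigma}_\star^{1/2}*\mathcal{Q}_k^{\top}*\mathcal{Q}_{k+1}^{-\top}*\mathit{\Sigma}_\star^{1/2}-\mathit{\Sigma}_\star$; these are exactly the left-hand sides of the two bounds of Lemma~\ref{lemma:Q_perturbation} under the substitution $\widetilde{\mathcal{Q}}\leftarrow\mathcal{Q}_k$, $\mathcal{Q}\leftarrow\mathcal{Q}_{k+1}$, $\mathcal{L}\leftarrow\mathcal{L}_{k+1}$, $\mathcal{R}\leftarrow\mathcal{R}_{k+1}$. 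Hence it suffices to bound the two ratios that lemma produces, namely $\|\mathcal{R}_{k+1}*(\mathcal{Q}_k^{-\top}-\mathcal{Q}_{k+1}^{-\top})*\mathit{\Sigma}_\star^{1/2}\|_2\big/\big(1-\|(\mathcal{R}_{k+1}*\mathcal{Q}_{k+1}^{-\top}-\mathcal{R}_\star)*\mathit{\Sigma}_\star^{-1/2}\|_2\big)$ and the analogous one with $(\mathcal{R}_{k+1},\mathcal{Q}_{k+1}^{-\top})$ replaced by $(\mathcal{L}_{k+1},\mathcal{Q}_{k+1})$, each by $\tfrac{2\varepsilon}{1-\varepsilon}\sigma_{\min}(\mathcal{X}_\star)$.

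Two preliminary facts are needed. (i) The hypothesis, stated in the $\mathit{\Sigma}_\star^{1/2}$-weighting, transfers to the $\mathit{\Sigma}_\star^{-1/2}$-weighting by the same $\mathit{\Sigma}_\star^{-1}$-division trick used in the proofs of Lemma~\ref{lm: Q_existence} and Lemma~\ref{lm:L_R_scale_sigma_half} (with $\|\mathit{\Sigma}_\star^{-1}\|_2=1/\sigma_{\min}(\mathcal{X}_\star)$), giving $\|(\mathcal{L}_{k+1}*\mathcal{Q}_k-\mathcal{L}_\star)*\mathit{\Sigma}_\star^{-1/2}\|_2\lor\|(\mathcal{R}_{k+1}*\mathcal{Q}_k^{-\top}-\mathcal{R}_\star)*\mathit{\Sigma}_\star^{-1/2}\|_2\le\varepsilon\tau^{k+1}\le\varepsilon<1$. (ii) Because $\mathcal{Q}_{k+1}$ minimizes the alignment objective for $(\mathcal{L}_{k+1},\mathcal{R}_{k+1})$ while $\mathcal{Q}_k\in\GL(R)$ is merely feasible, the optimal value is at most its value at $\mathcal{Q}_k$; combined with the hypothesis and the comparisons $\|\mathcal{A}\|_2\le\sqrt{I_3}\,\|\mathcal{A}\|_{\fro}$, $\|\mathcal{A}\|_{\fro}\le\sqrt{R}\,\|\mathcal{A}\|_2$ from Section~\ref{subsection: Some properties about tensor operations} (these residuals have tubal rank at most $R$), this bounds $\|(\mathcal{L}_{k+1}*\mathcal{Q}_{k+1}-\mathcal{L}_\star)*\mathit{\Sigma}_\star^{1/2}\|_2\lor\|(\mathcal{R}_{k+1}*\mathcal{Q}_{k+1}^{-\top}-\mathcal{R}_\star)*\mathit{\Sigma}_\star^{1/2}\|_2$ by $\varepsilon\tau^{k+1}\sigma_{\min}(\mathcal{X}_\star)$ as well, hence also by $\varepsilon$ in the $\mathit{\Sigma}_\star^{-1/2}$-weighting; in particular this supplies the hypothesis of Lemma~\ref{lemma:Q_perturbation}. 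With (i) and (ii), the rest is bookkeeping: the numerator of the $\mathcal{R}$-ratio is at most $2\varepsilon\tau^{k+1}\sigma_{\min}(\mathcal{X}_\star)$ after inserting $\pm\mathcal{R}_\star*\mathit{\Sigma}_\star^{1/2}$ and applying the triangle inequality (the $\mathcal{Q}_k$-residual by the hypothesis, the $\mathcal{Q}_{k+1}$-residual by (ii)), the denominator is at least $1-\varepsilon$, and dividing while using $\tau<1$ yields $\tfrac{2\varepsilon}{1-\varepsilon}\sigma_{\min}(\mathcal{X}_\star)$; the $\mathcal{L}$-ratio is handled identically, and taking the maximum of the two completes the proof.

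The step I expect to be the main obstacle is fact (ii). The optimality of $\mathcal{Q}_{k+1}$ is only with respect to the Frobenius-type distance, so converting its optimal value into an operator-norm bound on the separate $\mathcal{L}$- and $\mathcal{R}$-residuals has to pass through the Frobenius norm in both directions, incurring the dimension-dependent factors $\sqrt{I_3}$ and $\sqrt{R}$; one then has to verify that the prescribed $\varepsilon$ (with $\tau=1-0.6\eta<1$) is chosen small enough that the final constant is still exactly $2\varepsilon/(1-\varepsilon)$. Everything else is routine algebra around Lemma~\ref{lemma:Q_perturbation}.
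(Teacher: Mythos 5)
Your overall route is the same as the paper's: apply Lemma~\ref{lemma:Q_perturbation} with $\mathcal{R}=\mathcal{R}_{k+1}$, $\mathcal{L}=\mathcal{L}_{k+1}$, $\widetilde{\mathcal{Q}}=\mathcal{Q}_k$, $\mathcal{Q}=\mathcal{Q}_{k+1}$, split the numerator by inserting $\pm\mathcal{R}_\star*\mathit{\Sigma}_\star^{1/2}$, and bound the denominator by $1-\varepsilon$. Your step (i) is fine. The problem is exactly where you predicted it: step (ii). Your plan to bound the $\mathcal{Q}_{k+1}$-residual in the spectral norm is to go spectral $\to$ Frobenius at $\mathcal{Q}_k$ (factor $\sqrt{R}$, since the residuals have tubal rank at most $R$, plus a $\sqrt{2}$ from summing the two squared terms), invoke optimality of $\mathcal{Q}_{k+1}$ in the Frobenius objective, and then go Frobenius $\to$ spectral at $\mathcal{Q}_{k+1}$ (factor $\sqrt{I_3}$). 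This yields $\|(\mathcal{R}_{k+1}*\mathcal{Q}_{k+1}^{-\top}-\mathcal{R}_\star)*\mathit{\Sigma}_\star^{1/2}\|_2\le\sqrt{2RI_3}\,\varepsilon\tau^{k+1}\sigma_{\min}(\mathcal{X}_\star)$, not $\varepsilon\tau^{k+1}\sigma_{\min}(\mathcal{X}_\star)$. The factor $\sqrt{2RI_3}$ is dimension-dependent and cannot be ``absorbed'' by the fixed choice $\varepsilon=0.02$: the resulting constant would be $\frac{(1+\sqrt{2RI_3})\varepsilon}{1-\sqrt{2RI_3}\varepsilon}$, which is not $\frac{2\varepsilon}{1-\varepsilon}$ and whose denominator can even be negative for large $R I_3$. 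So the proof as proposed does not close.

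What actually rescues the argument (and what the paper is implicitly doing — see the remark immediately after the lemma) is that in the only place this lemma is invoked, one does not start from the spectral hypothesis at all: inequality \eqref{eq:dist_k+1_with_Q_k} already supplies the \emph{Frobenius}-norm bound $\|(\mathcal{L}_{k+1}*\mathcal{Q}_k-\mathcal{L}_\star)*\mathit{\Sigma}_\star^{1/2}\|_{\fro}^{2}+\|(\mathcal{R}_{k+1}*\mathcal{Q}_k^{-\top}-\mathcal{R}_\star)*\mathit{\Sigma}_\star^{1/2}\|_{\fro}^{2}\le\frac{\varepsilon^2}{I_3}\tau^{2(k+1)}\sigma_{\min}^2(\mathcal{X}_\star)$. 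Optimality of $\mathcal{Q}_{k+1}$ is applied directly at the Frobenius level, so $\distkplusone\le\frac{\varepsilon}{\sqrt{I_3}}\tau^{k+1}\sigma_{\min}(\mathcal{X}_\star)$, and then a \emph{single} conversion $\|\cdot\|_2\le\sqrt{I_3}\|\cdot\|_{\fro}$ gives the spectral bound $\varepsilon\tau^{k+1}\sigma_{\min}(\mathcal{X}_\star)$ at $\mathcal{Q}_{k+1}$ with no $\sqrt{R}$ loss — the $\sqrt{I_3}$ exactly cancels the $1/\sqrt{I_3}$ built into the Frobenius bound. In other words, the missing ingredient is to take the hypothesis in the Frobenius form $\frac{\varepsilon}{\sqrt{I_3}}\tau^{k+1}\sigma_{\min}(\mathcal{X}_\star)$ (which is what is actually available and is what validates the stated spectral hypothesis), rather than trying to manufacture the $\mathcal{Q}_{k+1}$ spectral bound from the $\mathcal{Q}_k$ spectral bound alone. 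With that substitution your remaining bookkeeping — the triangle inequality giving numerator $\le 2\varepsilon\tau^{k+1}\sigma_{\min}(\mathcal{X}_\star)$, denominator $\ge 1-\varepsilon\tau^{k+1}\ge 1-\varepsilon$ — goes through exactly as in the paper.
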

\begin{proof}
	We only examine the first term and obtain the second term by similar steps. By the assumption and the definition of  $\mathcal{Q}_{k+1}$, we have
	\begin{align*}
		\|(\mathcal{R}_{k+1}*\mathcal{Q}_k^{-\top}-\mathcal{R}_\star)*\mathit{\Sigma}_\star^{1/2}\|_2 &\leq \varepsilon \tau^{k+1} \sigma_{\min}(\mathcal{X}_\star) ,\cr
		\|(\mathcal{R}_{k+1}*\mathcal{Q}_{k+1}^{-\top}-\mathcal{R}_\star)*\mathit{\Sigma}_\star^{1/2}\|_2 &\leq \varepsilon \tau^{k+1} \sigma_{\min}(\mathcal{X}_\star) ,\cr
		\|(\mathcal{R}_{k+1}*\mathcal{Q}_{k+1}^{-\top}-\mathcal{R}_\star)*\mathit{\Sigma}_\star^{-1/2}\|_2 &\leq \varepsilon \tau^{k+1} .
	\end{align*}
	Thus, by applying Lemma~\ref{lemma:Q_perturbation} with $\mathcal{R}=\mathcal{R}_{k+1}$, $\widetilde{\mathcal{Q}}=\mathcal{Q}_k$, and $\mathcal{Q}=\mathcal{Q}_{k+1}$, we get
	\begin{align*}
		&\|\mathit{\Sigma}_\star^{1/2}*\mathcal{Q}_k^{-1}*(\mathcal{Q}_{k+1}-\mathcal{Q}_k)*\mathit{\Sigma}_\star^{1/2}\|_2 \cr
		= & \|\mathit{\Sigma}_\star^{1/2}*\mathcal{Q}_k^{-1}*\mathcal{Q}_{k+1}*\mathit{\Sigma}_\star^{1/2}-\mathit{\Sigma}_\star\|_2  \cr
		\leq& \frac{\|\mathcal{R}_{k+1}*(\mathcal{Q}_k^{-\top}-\mathcal{Q}_{k+1}^{-\top})* \mathit{\Sigma}_\star^{1/2}\|_2}{1-\|(\mathcal{R}_{k+1}*\mathcal{Q}_{k+1}^{-\top}-\mathcal{R}_\star)*\mathit{\Sigma}_\star^{-1/2} \|_2} \cr
		\leq& \frac{\|(\mathcal{R}_{k+1}*\mathcal{Q}_k^{-\top}-\mathcal{R}_\star)*\mathit{\Sigma}_\star^{1/2}\|_2 + \|(\mathcal{R}_{k+1}*\mathcal{Q}_{k+1}^{-\top}-\mathcal{R}_\star)*\mathit{\Sigma}_\star^{1/2}\|_2}{1 - \|(\mathcal{R}_{k+1}*\mathcal{Q}_{k+1}^{-\top}-\mathcal{R}_\star)*\mathit{\Sigma}_\star^{-1/2}\|_2} \cr
		\leq& \frac{2\varepsilon \tau^{k+1}}{1-\varepsilon\tau^{k+1}} \sigma_{\min}(\mathcal{X}_\star) 
		\leq  \frac{2\varepsilon }{1-\varepsilon} \sigma_{\min}(\mathcal{X}_\star),
	\end{align*}
	where the last line uses $\tau=1-0.6\eta<1$. 
	% By similar steps, we can obtain that
	% \begin{align*}
		%     \|\mathit{\Sigma}_\star^{1/2}*\mathcal{Q}_k^\top*(\mathcal{Q}_{k+1}-\mathcal{Q}_k)^{-\top}*\mathit{\Sigma}_\star^{1/2}\|_2 \leq \frac{2\varepsilon }{1-\varepsilon} \sigma_{\min}(\mathcal{X}_\star).
		% \end{align*}
	This completes the proof. 
\end{proof}

Notice that Lemma~\ref{lm:sigma_half_Q-Q_sigma_half} will be used solely in the proof of Lemma~\ref{lm:convergence_incoher}. The assumption of Lemma~\ref{lm:sigma_half_Q-Q_sigma_half} is validated in \eqref{eq:dist_k+1_with_Q_k}, as shown in the proof of Lemma~\ref{lm:convergence_dist}.

\subsection{Proof of Theorem~\ref{thm:local convergence} (local linear convergence)} \label{sec:local conv}
We begin by establishing the local linear convergence with respect to the $\ell_\infty$ norm at $k$-th iteration.
Then, we demonstrate the local convergence of the proposed algorithm by proving that the claims hold at the $(k+1)$-th iteration, assuming they hold at the $k$-th iteration.
\begin{lemma} \label{lm:X-X_K_inf_norm}
	Suppose that $\mathcal{X}_\star=\mathcal{L}_\star*\mathcal{R}_\star^\top$ is a tubal rank-$R$ tensor with $\mu$-incoherence. If 
	\begin{gather*}
		\distk \leq \frac{\varepsilon}{\sqrt{I_3}} \tau^k \sigma_{\min}(\mathcal{X}_\star), \cr
		\sqrt{I_1}\|\Delta_{\mathcal{L}}*\mathit{\Sigma}_\star^{1/2}\|_{2,\infty}  \lor \sqrt{I_2}\|\Delta_{\mathcal{R}}*\mathit{\Sigma}_\star^{1/2}\|_{2,\infty}\leq\sqrt{\mu R} \tau^k \sigma_{\min}(\mathcal{X}_\star),
	\end{gather*}
	then
	\begin{gather*}
		\|\mathcal{X}_\star-\mathcal{X}_k\|_\infty \leq \frac{3}{\sqrt{I_1I_2}} \mu R \tau^k \sigma_{\min}(\mathcal{X}_\star).
	\end{gather*}
\end{lemma}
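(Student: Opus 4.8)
The plan is to expand $\mathcal{X}_k-\mathcal{X}_\star$ to first order in $\Delta_{\mathcal{L}},\Delta_{\mathcal{R}}$ and then control each piece in $\ell_\infty$ by the $\ell_{2,\infty}$ product rules in Section~\ref{subsection: Some properties about tensor operations} together with the $\mu$-incoherence of $\mathcal{X}_\star$. First I would invoke Definition~\ref{def: shorthand notation}, namely $\mathcal{L}_\natural*\mathcal{R}_\natural^\top=\mathcal{L}_k*\mathcal{R}_k^\top=\mathcal{X}_k$, to get rid of the alignment tensor $\mathcal{Q}_k$ and write
\[
\mathcal{X}_k-\mathcal{X}_\star=\Delta_{\mathcal{L}}*\mathcal{R}_\star^\top+\mathcal{L}_\star*\Delta_{\mathcal{R}}^\top+\Delta_{\mathcal{L}}*\Delta_{\mathcal{R}}^\top ,
\]
then apply the triangle inequality for $\|\cdot\|_\infty$ and insert the identities $\mathcal{R}_\star^\top=\mathit{\Sigma}_\star^{1/2}*\mathcal{V}_\star^\top$ and $\mathcal{L}_\star=\mathcal{U}_\star*\mathit{\Sigma}_\star^{1/2}$ (as already used in the proof of Lemma~\ref{lemma:matrix2factor}) so that the scaled quantities $\Delta_{\mathcal{L}}*\mathit{\Sigma}_\star^{1/2}$ and $\Delta_{\mathcal{R}}*\mathit{\Sigma}_\star^{1/2}$ are exposed.

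For the two linear terms I would apply the product rule $\|\mathcal{A}*\mathcal{B}^{\top}\|_{\infty}\le\|\mathcal{A}\|_{2,\infty}\|\mathcal{B}\|_{2,\infty}$: for instance $\|\Delta_{\mathcal{L}}*\mathcal{R}_\star^\top\|_\infty=\|(\Delta_{\mathcal{L}}*\mathit{\Sigma}_\star^{1/2})*\mathcal{V}_\star^\top\|_\infty\le\|\Delta_{\mathcal{L}}*\mathit{\Sigma}_\star^{1/2}\|_{2,\infty}\,\|\mathcal{V}_\star\|_{2,\infty}$, and symmetrically for $\mathcal{L}_\star*\Delta_{\mathcal{R}}^\top$. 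The $\mu$-incoherence of $\mathcal{X}_\star$ yields $\|\mathcal{U}_\star\|_{2,\infty}\le\sqrt{\mu R/I_1}$ and $\|\mathcal{V}_\star\|_{2,\infty}\le\sqrt{\mu R/I_2}$, while the second hypothesis of the lemma is exactly $\|\Delta_{\mathcal{L}}*\mathit{\Sigma}_\star^{1/2}\|_{2,\infty}\le\sqrt{\mu R/I_1}\,\tau^k\sigma_{\min}(\mathcal{X}_\star)$ (and likewise for $\Delta_{\mathcal{R}}$). Multiplying, each of the first two terms is at most $\tfrac{\mu R}{\sqrt{I_1I_2}}\tau^k\sigma_{\min}(\mathcal{X}_\star)$.

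For the quadratic term I would write $\Delta_{\mathcal{L}}*\Delta_{\mathcal{R}}^\top=(\Delta_{\mathcal{L}}*\mathit{\Sigma}_\star^{1/2})*(\Delta_{\mathcal{R}}*\mathit{\Sigma}_\star^{-1/2})^\top$, using that the f-diagonal tensors $\mathit{\Sigma}_\star^{\pm1/2}$ are symmetric and $\mathit{\Sigma}_\star^{1/2}*\mathit{\Sigma}_\star^{-1/2}=\mathcal{I}$, bound it by $\|\Delta_{\mathcal{L}}*\mathit{\Sigma}_\star^{1/2}\|_{2,\infty}\,\|\Delta_{\mathcal{R}}*\mathit{\Sigma}_\star^{-1/2}\|_{2,\infty}$, and absorb the negative power via $\|\Delta_{\mathcal{R}}*\mathit{\Sigma}_\star^{-1/2}\|_{2,\infty}=\|(\Delta_{\mathcal{R}}*\mathit{\Sigma}_\star^{1/2})*\mathit{\Sigma}_\star^{-1}\|_{2,\infty}\le\|\Delta_{\mathcal{R}}*\mathit{\Sigma}_\star^{1/2}\|_{2,\infty}\,\|\mathit{\Sigma}_\star^{-1}\|_{2}=\|\Delta_{\mathcal{R}}*\mathit{\Sigma}_\star^{1/2}\|_{2,\infty}/\sigma_{\min}(\mathcal{X}_\star)$, where I use the submultiplicativity bound $\|\mathcal{A}*\mathcal{B}\|_{2,\infty}\le\|\mathcal{A}\|_{2,\infty}\|\mathcal{B}\|_{2}$ (the transpose-companion of the listed rule $\|\mathcal{A}*\mathcal{B}\|_{2,\infty}\le\|\mathcal{A}\|_{2}\|\mathcal{B}\|_{2,\infty}$, which follows the same way slice-by-slice in the Fourier domain, and $\|\mathit{\Sigma}_\star^{-1}\|_2=1/\sigma_{\min}(\mathcal{X}_\star)$). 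This bounds the quadratic term by $\tfrac{\mu R}{\sqrt{I_1I_2}}\tau^{2k}\sigma_{\min}(\mathcal{X}_\star)\le\tfrac{\mu R}{\sqrt{I_1I_2}}\tau^{k}\sigma_{\min}(\mathcal{X}_\star)$ since $\tau<1$. Summing the three contributions gives the claimed $\tfrac{3}{\sqrt{I_1I_2}}\mu R\,\tau^k\sigma_{\min}(\mathcal{X}_\star)$.

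The argument is essentially routine; the only points needing care are (i) replacing $\mathcal{L}_k*\mathcal{R}_k^\top$ by $\mathcal{L}_\natural*\mathcal{R}_\natural^\top$ at the outset so that $\mathcal{Q}_k$ disappears, (ii) the bookkeeping of the t-product insertions of $\mathit{\Sigma}_\star^{\pm1/2}$ and the associated $\ell_{2,\infty}$ product rule, and (iii) matching the exact $\mu$-incoherence normalization so the constant lands on $3$. I would also remark that the distance hypothesis $\distk\le\tfrac{\varepsilon}{\sqrt{I_3}}\tau^k\sigma_{\min}(\mathcal{X}_\star)$ is not actually used in this estimate—only the $\ell_{2,\infty}$ hypothesis enters—but it is carried along because both conditions are maintained together as loop invariants in the induction behind Theorem~\ref{thm:local convergence}.
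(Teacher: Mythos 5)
Your proof is correct and follows essentially the same route as the paper: both reduce to the $\ell_{2,\infty}$ product rule $\|\mathcal{A}*\mathcal{B}^{\top}\|_{\infty}\le\|\mathcal{A}\|_{2,\infty}\|\mathcal{B}\|_{2,\infty}$ combined with incoherence and the hypothesized $\ell_{2,\infty}$ bounds, the only cosmetic difference being that the paper uses the two-term split $\mathcal{X}_\star-\mathcal{X}_k=-(\Delta_{\mathcal{L}}*\mathcal{R}_\natural^{\top}+\mathcal{L}_\star*\Delta_{\mathcal{R}}^{\top})$ (absorbing your quadratic term into $\|\mathcal{R}_\natural*\mathit{\Sigma}_\star^{-1/2}\|_{2,\infty}\le 2\sqrt{\mu R/I_2}$, yielding $2+1=3$) while you split into three terms and get $1+1+1=3$. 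Your side remarks are also accurate: the paper likewise relies on the right-multiplication rule $\|\mathcal{A}*\mathcal{B}\|_{2,\infty}\le\|\mathcal{A}\|_{2,\infty}\|\mathcal{B}\|_{2}$ without listing it, and it too never uses the $\dist$ hypothesis in this particular estimate.
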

\begin{proof}
	Firstly, by the fact that $\norm{\AC+\BC}_{2,\infty}\le \norm{\AC}_{2,\infty} +\norm{\BC}_{2,\infty}$, $\|\mathcal{A}* \mathcal{B}\|_{2,\infty}\le \|\mathcal{A}\|_{2}\|\mathcal{B}\|_{2,\infty}$, and the assumption of this lemma, we have
	\begin{align*}
		\|\mathcal{R}_\natural*\mathit{\Sigma}_\star^{-1/2}\|_{2,\infty} 
		\leq & \|\Delta_{\mathcal{R}}*\mathit{\Sigma}_\star^{1/2}\|_{2,\infty} \|\mathit{\Sigma}_\star^{-1}\|_2 + \|\mathcal{R}_\star*\mathit{\Sigma}_\star^{-1/2} \|_{2,\infty} \cr
		\leq & \sqrt{\frac{\mu R}{I_2}}\tau^k \sigma_{\min}(\mathcal{X}_\star) \frac{1}{\sigma_{\min}(\mathcal{X}_\star)} + \|\mathcal{V}_\star\|_{2,\infty} \cr
		\leq & \tau^k \sqrt{\frac{\mu R}{I_2}} + \sqrt{\frac{\mu R}{I_2}} \cr
		\leq &2\sqrt{\frac{\mu R}{I_2}},
		% \left(\tau^k+1\right)\sqrt{\mu R} \quad \leq 2\sqrt{\mu R},
	\end{align*}
	given $\tau=1-0.6\eta<1$. Moreover, one can see
	\begin{align*}
		\|\mathcal{X}_\star-\mathcal{X}_k\|_\infty  
		=& \| \Delta_{\mathcal{L}}*\mathcal{R}_\natural^\top+\mathcal{L}_\star*\Delta_{\mathcal{R}}^\top \|_\infty \cr
		\leq& \| \Delta_{\mathcal{L}}*\mathcal{R}_\natural^\top\|_\infty+\|\mathcal{L}_\star*\Delta_{\mathcal{R}}^\top \|_\infty \cr
		\leq &\| \Delta_{\mathcal{L}}*\mathit{\Sigma}_\star^{1/2}\|_{2,\infty}\|\mathcal{R}_\natural*\mathit{\Sigma}_\star^{-1/2}\|_{2,\infty}+\|\mathcal{L}_\star*\mathit{\Sigma}_\star^{-1/2}\|_{2,\infty}\|\Delta_{\mathcal{R}}*\mathit{\Sigma}_\star^{1/2} \|_{2,\infty}\cr
		\leq & \sqrt{\frac{\mu R}{I_1}} \tau^{k} \sigma_{\min}(\mathcal{X}_\star)
		2\sqrt{\frac{\mu R}{I_2}} + \sqrt{\frac{\mu R}{I_1}}
		\sqrt{\frac{\mu R}{I_2}} \tau^k \sigma_{\min}(\mathcal{X}_\star)\cr
		= & \frac{3}{\sqrt{I_1I_2}} \mu R \tau^k \sigma_{\min}(\mathcal{X}_\star).
		% = & 3 \frac{\mu R}{n} \tau^k \sigma_{\min}(\mathcal{X}_\star).
	\end{align*}
	This finishes the proof.
	%\HQ{Need Lemma~\ref{lm:L_R_2_inf_norm} to show $\|\mathcal{R}_\natural*\mathit{\Sigma}_\star^{1/2}\|_{2,\infty}$}
\end{proof}

\begin{lemma} \label{lm:convergence_dist}
	If 
	\begin{gather*}
		\distk \leq \frac{\varepsilon }{\sqrt{I_3}} \tau^k \sigma_{\min}(\mathcal{X}_\star), \cr
		\sqrt{I_1}\|\Delta_{\mathcal{L}}*\mathit{\Sigma}_\star^{1/2}\|_{2,\infty}  \lor \sqrt{I_2}\|\Delta_{\mathcal{R}}*\mathit{\Sigma}_\star^{1/2}\|_{2,\infty}\leq\sqrt{\mu R} \tau^k \sigma_{\min}(\mathcal{X}_\star),
	\end{gather*}
	then
	\begin{gather*}
		\distkplusone \leq \frac{\varepsilon }{\sqrt{I_3}}  \tau^{k+1} \sigma_{\min}(\mathcal{X}_\star).
	\end{gather*}
\end{lemma}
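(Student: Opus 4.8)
The plan is to run the scaled-gradient-descent contraction argument, with the sparse estimate $\mathcal{S}_{k+1}$ treated as a controlled perturbation. Since $\mathcal{Q}_{k+1}$ minimizes the alignment objective, $\distkplusone$ is bounded above by the same objective evaluated at the \emph{previous} alignment $\mathcal{Q}_k$, so it suffices to control $\|(\mathcal{L}_{k+1}*\mathcal{Q}_k-\mathcal{L}_\star)*\mathit{\Sigma}_\star^{1/2}\|_{\fro}^2+\|(\mathcal{R}_{k+1}*\mathcal{Q}_k^{-\top}-\mathcal{R}_\star)*\mathit{\Sigma}_\star^{1/2}\|_{\fro}^2$. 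Substituting the factor update $\mathcal{L}_{k+1}=\mathcal{L}_k-\eta(\mathcal{L}_k*\mathcal{R}_k^\top+\mathcal{S}_{k+1}-\mathcal{Y})*\mathcal{R}_k*(\mathcal{R}_k^\top*\mathcal{R}_k)^{-1}$ (and its $\mathcal{R}$-analogue), and using $\mathcal{Y}=\mathcal{X}_\star+\mathcal{S}_\star$, $\mathcal{X}_k=\mathcal{L}_\natural*\mathcal{R}_\natural^\top$ (Definition~\ref{def: shorthand notation}), the cancellation $\mathcal{R}_k*(\mathcal{R}_k^\top*\mathcal{R}_k)^{-1}*\mathcal{Q}_k=\mathcal{R}_\natural*(\mathcal{R}_\natural^\top*\mathcal{R}_\natural)^{-1}$, and $\mathcal{L}_\natural*\mathcal{R}_\natural^\top-\mathcal{X}_\star=\Delta_{\mathcal{L}}*\mathcal{R}_\natural^\top+\mathcal{L}_\star*\Delta_{\mathcal{R}}^\top$, I obtain the split
\begin{align*}
(\mathcal{L}_{k+1}*\mathcal{Q}_k-\mathcal{L}_\star)*\mathit{\Sigma}_\star^{1/2}=\underbrace{\bigl((1-\eta)\Delta_{\mathcal{L}}-\eta\,\mathcal{L}_\star*\Delta_{\mathcal{R}}^\top*\mathcal{R}_\natural*(\mathcal{R}_\natural^\top*\mathcal{R}_\natural)^{-1}\bigr)*\mathit{\Sigma}_\star^{1/2}}_{=:\,\mathcal{E}_{\mathcal{L}}}-\eta\underbrace{\Delta_{\mathcal{S}}*\mathcal{R}_\natural*(\mathcal{R}_\natural^\top*\mathcal{R}_\natural)^{-1}*\mathit{\Sigma}_\star^{1/2}}_{=:\,\mathcal{F}_{\mathcal{L}}},
\end{align*}
and the symmetric identity $(\mathcal{R}_{k+1}*\mathcal{Q}_k^{-\top}-\mathcal{R}_\star)*\mathit{\Sigma}_\star^{1/2}=\mathcal{E}_{\mathcal{R}}-\eta\mathcal{F}_{\mathcal{R}}$. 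The triangle inequality and $a+b\le\sqrt{2(a^2+b^2)}$ then give $\distkplusone\le\sqrt{\|\mathcal{E}_{\mathcal{L}}\|_{\fro}^2+\|\mathcal{E}_{\mathcal{R}}\|_{\fro}^2}+\eta\sqrt{\|\mathcal{F}_{\mathcal{L}}\|_{\fro}^2+\|\mathcal{F}_{\mathcal{R}}\|_{\fro}^2}$, and the two hypotheses of the lemma are exactly what is needed to invoke Lemmas~\ref{lm:X-X_K_inf_norm}, \ref{lm:Delta_F_norm}, and \ref{lm:L_R_scale_sigma_half}.

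The pair $(\mathcal{E}_{\mathcal{L}},\mathcal{E}_{\mathcal{R}})$ is precisely the residual that appears in the noiseless scaled-gradient-descent analysis, so $\sqrt{\|\mathcal{E}_{\mathcal{L}}\|_{\fro}^2+\|\mathcal{E}_{\mathcal{R}}\|_{\fro}^2}\le(1-c_1\eta)\distk$ for a constant $c_1>0.6$ (the noiseless ScaledGD rate, whose $O(\varepsilon)$ degradation is negligible for $\varepsilon=0.02$). To prove this I would expand $\|\mathcal{E}_{\mathcal{L}}\|_{\fro}^2+\|\mathcal{E}_{\mathcal{R}}\|_{\fro}^2$ into the diagonal part $(1-\eta)^2\distsquarek$, the cross part, and an $O(\eta^2)$ remainder, and reorganize the cross part through the optimal-alignment criterion of Lemma~\ref{lemma:Q_criterion}, $\mathcal{L}_\natural^\top*\Delta_{\mathcal{L}}*\mathit{\Sigma}_\star=\mathit{\Sigma}_\star*\Delta_{\mathcal{R}}^\top*\mathcal{R}_\natural$: substituting $\mathcal{L}_\star=\mathcal{L}_\natural-\Delta_{\mathcal{L}}$ and $(\mathcal{R}_\natural^\top*\mathcal{R}_\natural)^{-1}=\mathit{\Sigma}_\star^{-1}+\bigl((\mathcal{R}_\natural^\top*\mathcal{R}_\natural)^{-1}-\mathit{\Sigma}_\star^{-1}\bigr)$ (and symmetrically in $\mathcal{R}$), the leading cross contribution is $-2\eta(1-\eta)\distsquarek$ and every remaining term carries either a factor $\|\Delta_{\mathcal{L}}*\mathit{\Sigma}_\star^{-1/2}\|_2\vee\|\Delta_{\mathcal{R}}*\mathit{\Sigma}_\star^{-1/2}\|_2\le\varepsilon$ (Lemma~\ref{lm:Delta_F_norm}) or a factor $\|\mathcal{R}_\natural*(\mathcal{R}_\natural^\top*\mathcal{R}_\natural)^{-1}*\mathit{\Sigma}_\star^{1/2}\|_2\le\tfrac{1}{1-\varepsilon}$ (Lemmas~\ref{lemma:Weyl}, \ref{lm:L_R_scale_sigma_half}); with $\varepsilon=0.02$ and $\eta$ in the admissible range these collect into the stated bound. (Since the $*$-product block-diagonalizes in the Fourier domain, this is the matrix ScaledGD computation carried out frontal-slice by frontal-slice.)

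For the sparse term I would use $\|\mathcal{F}_{\mathcal{L}}\|_{\fro}\le\|\mathcal{R}_\natural*(\mathcal{R}_\natural^\top*\mathcal{R}_\natural)^{-1}*\mathit{\Sigma}_\star^{1/2}\|_2\,\|\Delta_{\mathcal{S}}\|_{\fro}\le\tfrac{1}{1-\varepsilon}\|\Delta_{\mathcal{S}}\|_{\fro}$ (the transpose of the listed inequality $\|\mathcal{A}*\mathcal{B}\|_{\fro}\le\|\mathcal{A}\|_2\|\mathcal{B}\|_{\fro}$, together with Lemma~\ref{lm:L_R_scale_sigma_half}), and the same bound for $\mathcal{F}_{\mathcal{R}}$. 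Since $\Delta_{\mathcal{S}}=\mathcal{S}_{k+1}-\mathcal{S}_\star$ is supported on the $\alpha$-sparse set $\Omega_\star$ with $\|\Delta_{\mathcal{S}}\|_\infty\le2\zeta_{k+1}$ (Lemma~\ref{lm:sparity}), and the threshold $\zeta_{k+1}$ is chosen at least $\|\mathcal{X}_\star-\mathcal{X}_k\|_\infty$ (so Lemma~\ref{lm:sparity} applies) but of the same order, which by Lemma~\ref{lm:X-X_K_inf_norm} is $\le\tfrac{3}{\sqrt{I_1I_2}}\mu R\tau^k\sigma_{\min}(\mathcal{X}_\star)$, the $\alpha$-sparsity of $\Delta_{\mathcal{S}}$ (Lemmas~\ref{lm:sparity}, \ref{lm:bound of sparse tensor}) yields $\|\Delta_{\mathcal{S}}\|_{\fro}\lesssim\sqrt{\alpha I_3}\,\mu R\tau^k\sigma_{\min}(\mathcal{X}_\star)$, hence $\eta\sqrt{\|\mathcal{F}_{\mathcal{L}}\|_{\fro}^2+\|\mathcal{F}_{\mathcal{R}}\|_{\fro}^2}\lesssim\eta\sqrt{\alpha I_3}\,\mu R\tau^k\sigma_{\min}(\mathcal{X}_\star)$. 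Under the theorem's corruption-level hypothesis ($\alpha$ small, of order $\varepsilon^2/(I_3\mu R)^2$ up to constants) this is $\le c_2\eta\cdot\tfrac{\varepsilon}{\sqrt{I_3}}\tau^k\sigma_{\min}(\mathcal{X}_\star)$ with the constants arranged so that $c_2\le c_1-0.6$.

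Combining the two parts with the hypothesis $\distk\le\tfrac{\varepsilon}{\sqrt{I_3}}\tau^k\sigma_{\min}(\mathcal{X}_\star)$,
\begin{align*}
\distkplusone\le(1-c_1\eta)\distk+c_2\eta\tfrac{\varepsilon}{\sqrt{I_3}}\tau^k\sigma_{\min}(\mathcal{X}_\star)\le(1-0.6\eta)\tfrac{\varepsilon}{\sqrt{I_3}}\tau^k\sigma_{\min}(\mathcal{X}_\star)=\tfrac{\varepsilon}{\sqrt{I_3}}\tau^{k+1}\sigma_{\min}(\mathcal{X}_\star),
\end{align*}
which is the claim. Since the same bound controls each of $\|(\mathcal{L}_{k+1}*\mathcal{Q}_k-\mathcal{L}_\star)*\mathit{\Sigma}_\star^{1/2}\|_{\fro}$ and $\|(\mathcal{R}_{k+1}*\mathcal{Q}_k^{-\top}-\mathcal{R}_\star)*\mathit{\Sigma}_\star^{1/2}\|_{\fro}$ by $\tfrac{\varepsilon}{\sqrt{I_3}}\tau^{k+1}\sigma_{\min}(\mathcal{X}_\star)$, the inequality $\|\mathcal{A}\|_2\le\sqrt{I_3}\|\mathcal{A}\|_{\fro}$ immediately gives $\|(\mathcal{L}_{k+1}*\mathcal{Q}_k-\mathcal{L}_\star)*\mathit{\Sigma}_\star^{1/2}\|_2\vee\|(\mathcal{R}_{k+1}*\mathcal{Q}_k^{-\top}-\mathcal{R}_\star)*\mathit{\Sigma}_\star^{1/2}\|_2\le\varepsilon\tau^{k+1}\sigma_{\min}(\mathcal{X}_\star)$, the estimate invoked later by Lemma~\ref{lm:sigma_half_Q-Q_sigma_half}. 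I expect the main obstacle to be the clean-term estimate of the second paragraph: the delicate point is organizing the expansion so that the genuine $O(\eta)$ part is exactly $-2\eta\distsquarek$ while every term produced by the $(\mathcal{R}_\natural^\top*\mathcal{R}_\natural)^{-1}$-versus-$\mathit{\Sigma}_\star^{-1}$ mismatch and by the second-order $\Delta_{\mathcal{L}}*\Delta_{\mathcal{R}}^\top$ factors is $O(\varepsilon\eta)$ or $O(\eta^2)$, so the net multiplier stays strictly below $1-0.6\eta$; a secondary, more mechanical difficulty is keeping track of the powers of $\alpha,\mu,R,I_3$ in the sparse term so the $c_2$-margin is honored.
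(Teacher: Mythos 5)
Your overall architecture is the same as the paper's: bound $\distkplusone$ by the objective at the stale alignment $\mathcal{Q}_k$, substitute the update to get the identity \eqref{eq:LQ-L}, treat the clean ScaledGD residual via the optimal-alignment criterion of Lemma~\ref{lemma:Q_criterion} together with Lemmas~\ref{lm:Delta_F_norm} and \ref{lm:L_R_scale_sigma_half}, and treat $\Delta_{\mathcal{S}}$ as an $\alpha$-sparse perturbation controlled by Lemmas~\ref{lm:sparity}, \ref{lm:X-X_K_inf_norm} and \ref{lm:bound of sparse tensor}. The clean-term analysis in your second paragraph is exactly the paper's $\mathfrak{R}_1$ computation and closes with the same constants.

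The gap is in the sparse term. You bound $\|\mathcal{F}_{\mathcal{L}}\|_{\fro}\le\frac{1}{1-\varepsilon}\|\Delta_{\mathcal{S}}\|_{\fro}$ and then use the nonzero count to get $\|\Delta_{\mathcal{S}}\|_{\fro}\lesssim\sqrt{\alpha I_3}\,\mu R\,\tau^k\sigma_{\min}(\mathcal{X}_\star)$, which scales like $\sqrt{\alpha}$. Closing the recursion then forces $\alpha\lesssim\varepsilon^2/(I_3\mu R)^2$, which you indeed assert is "the theorem's corruption-level hypothesis" — but it is not: the paper assumes only $\alpha\le\frac{1}{10^{4}\mu R^{1.5}I_3^{1.5}}$, and $\frac{1}{10^{4}\mu R^{1.5}I_3^{1.5}}\le c\,\varepsilon^2/(I_3\mu R)^2$ fails whenever $\sqrt{\mu^2 R I_3}$ exceeds an absolute constant. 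So under the paper's actual hypothesis your sparse-term bound does not yield $c_2\le c_1-0.6$, and the induction does not close. The fix is the one the paper uses: the product $\Delta_{\mathcal{S}}*\mathcal{R}_\natural*(\mathcal{R}_\natural^\top*\mathcal{R}_\natural)^{-1}*\mathit{\Sigma}_\star^{1/2}$ has tubal rank at most $R$, so one may write $\|\cdot\|_{\fro}\le\sqrt{R}\,\|\cdot\|_{2}\le\sqrt{R}\,\|\Delta_{\mathcal{S}}\|_{2}\cdot\frac{1}{1-\varepsilon}$ and invoke the \emph{spectral}-norm sparsity bound $\|\Delta_{\mathcal{S}}\|_{2}\le\alpha I_3\sqrt{I_1I_2}\,\|\Delta_{\mathcal{S}}\|_{\infty}$ of Lemma~\ref{lm:bound of sparse tensor}, which is \emph{linear} in $\alpha$; this gives $\|\mathcal{F}_{\mathcal{L}}\|_{\fro}\lesssim\alpha I_3\mu R^{1.5}\tau^k\sigma_{\min}(\mathcal{X}_\star)$, which is compatible with $\alpha\le\frac{1}{10^{4}\mu R^{1.5}I_3^{1.5}}$ (the same device, via $\tr(\mathcal{A}*\mathcal{B})\le\|\mathcal{A}\|_2\|\mathcal{B}\|_{*}$ and $\|\mathcal{B}\|_{*}\le\sqrt{R}\|\mathcal{B}\|_{\fro}$, handles the cross terms that your triangle inequality folds in). With that substitution your argument matches the paper's; everything else in your outline, including the final remark that the intermediate estimate with $\mathcal{Q}_k$ feeds Lemma~\ref{lm:sigma_half_Q-Q_sigma_half}, is consistent with the paper's proof.
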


\begin{proof}
	Since $\mathcal{Q}_{k+1}$ is the optimal alignment tensor between $(\mathcal{L}_{k+1},\mathcal{R}_{k+1})$ and $(\mathcal{L}_\star,\mathcal{R}_\star)$, we have
	\begin{align*}
		% \distsquarekplusone := 
		&~\|(\mathcal{L}_{k+1}*\mathcal{Q}_{k+1}-\mathcal{L}_\star)*\mathit{\Sigma}_\star^{1/2}\|_{\fro}^{2} + \|(\mathcal{R}_{k+1}*\mathcal{Q}_{k+1}^{-\top}-\mathcal{R}_\star)*\mathit{\Sigma}_\star^{1/2}\|_{\fro}^{2} \cr
		\leq &~\|(\mathcal{L}_{k+1}*\mathcal{Q}_k-\mathcal{L}_\star)*\mathit{\Sigma}_\star^{1/2}\|_{\fro}^{2} + \|(\mathcal{R}_{k+1}*\mathcal{Q}_k^{-\top}-\mathcal{R}_\star)*\mathit{\Sigma}_\star^{1/2}\|_{\fro}^{2}
	\end{align*}
	We will concentrate on bounding the first term in this proof, and the second term can be bounded in a similar manner.
	
	Note that $\mathcal{Q}_{k}$ is inverible, $\mathcal{L}_\natural*\mathcal{R}_\natural^\top-\mathcal{X}_\star=\Delta_{\mathcal{L}}*\mathcal{R}_\natural^\top+\mathcal{L}_\star*\Delta_{\mathcal{R}}^\top$, $\mathcal{R}_{k}*(\mathcal{R}_{k}^\top*\mathcal{R}_{k})^{-1}*\mathcal{Q}_{k} = \mathcal{R}_\natural*(\mathcal{R}_\natural^\top*\mathcal{R}_\natural)^{-1}$. 
	We have
	\begin{equation}
		\label{eq:LQ-L}
		\begin{split}
			& \mathcal{L}_{k+1}*\mathcal{Q}_k-\mathcal{L}_\star \cr
			=~& \mathcal{L}_\natural-\eta(\mathcal{L}_\natural*\mathcal{R}_\natural^\top-\mathcal{X}_\star+\mathcal{S}_{k+1}-\mathcal{S}_\star)*\mathcal{R}_{k}*(\mathcal{R}_{k}^\top*\mathcal{R}_{k})^{-1}*\mathcal{Q}_{k}-\mathcal{L}_\star \cr
			=~& \mathcal{L}_\natural-\eta(\mathcal{L}_\natural*\mathcal{R}_\natural^\top-\mathcal{X}_\star+\mathcal{S}_{k+1}-\mathcal{S}_\star)*\mathcal{R}_\natural*(\mathcal{R}_\natural^\top*\mathcal{R}_\natural)^{-1}-\mathcal{L}_\star \cr
			=~& \Delta_{\mathcal{L}}-\eta(\mathcal{L}_\natural*\mathcal{R}_\natural^\top-\mathcal{X}_\star)*\mathcal{R}_\natural*(\mathcal{R}_\natural^\top \mathcal{R}_\natural)^{-1}-\eta\Delta_{\mathcal{S}}*\mathcal{R}_\natural*(\mathcal{R}_\natural^\top*\mathcal{R}_\natural)^{-1} \cr
			=~& (1-\eta)\Delta_{\mathcal{L}}-\eta\mathcal{L}_\star*\Delta_{\mathcal{R}}^\top*\mathcal{R}_\natural*(\mathcal{R}_\natural^\top*\mathcal{R}_\natural)^{-1}-\eta\Delta_{\mathcal{S}}*\mathcal{R}_\natural*(\mathcal{R}_\natural^\top*\mathcal{R}_\natural)^{-1}.
		\end{split}
	\end{equation}
	Thus,
	\begin{align*}
		&~\|(\mathcal{L}_{k+1}*\mathcal{Q}_k-\mathcal{L}_\star)*\mathit{\Sigma}_\star^{1/2}\|_{\fro}^{2} \cr =
		&~ \underbrace{\| (1-\eta)\Delta_{\mathcal{L}}*\mathit{\Sigma}_\star^{1/2}-\eta\mathcal{L}_\star*\Delta_{\mathcal{R}}^\top*\mathcal{R}_\natural*(\mathcal{R}_\natural^\top*\mathcal{R}_\natural)^{-1}*\mathit{\Sigma}_\star^{1/2}  \|_{\fro}^{2}}_{\mathfrak{R}_1} \
		\underbrace{- 2\eta(1-\eta) \operatorname{tr}(\Delta_{\mathcal{S}}*\mathcal{R}_\natural*(\mathcal{R}_\natural^\top*\mathcal{R}_\natural)^{-1}*\mathit{\Sigma}_\star*\Delta_{\mathcal{L}}^\top)}_{\mathfrak{R}_2} \cr
		&~ \underbrace{+2\eta^2 \operatorname{tr}(\Delta_{\mathcal{S}}*\mathcal{R}_\natural*(\mathcal{R}_\natural^\top*\mathcal{R}_\natural)^{-1}*\mathit{\Sigma}_\star*(\mathcal{R}_\natural^\top*\mathcal{R}_\natural)^{-1}*\mathcal{R}_\natural^\top*\Delta_{\mathcal{R}}*\mathcal{L}_\star^\top)}_{\mathfrak{R}_3} \ \underbrace{+\eta^2 \|\Delta_{\mathcal{S}}*\mathcal{R}_\natural*(\mathcal{R}_\natural^\top*\mathcal{R}_\natural)^{-1}*\mathit{\Sigma}_\star^{1/2} \|_{\fro}^{2}}_{\mathfrak{R}_4} 
		% :=&~ \mathfrak{R}_1 - \mathfrak{R}_2 + \mathfrak{R}_3 + \mathfrak{R}_4
	\end{align*}
	\begin{enumerate}
		\item\textbf{Bound of $\mathfrak{R}_1$:}
		We can expand $\mathfrak{R}_1$ as
		\begin{equation}
			\label{eq:MF_Lt}
			\begin{split}
				\mathfrak{R}_1 = & (1-\eta)^{2}\tr\left(\Delta_{\mathcal{L}}*{\mathit{\Sigma}}_{\star}*\Delta_{\mathcal{L}}^{\top}\right)\cr
				&-2\eta(1-\eta)\underbrace{\tr\left(\mathcal{L}_{\star}*\Delta_{\mathcal{R}}^{\top}*\mathcal{R}_\natural*(\mathcal{R}_\natural^{\top}*\mathcal{R}_\natural)^{-1}*{\mathit{\Sigma}}_{\star}*\Delta_{\mathcal{L}}^{\top}\right)}_{\mfk{M}_{1}}\nonumber \cr
				&+\eta^{2}\underbrace{\left\Vert \mathcal{L}_{\star}*\Delta_{\mathcal{R}}^{\top}*\mathcal{R}_\natural*(\mathcal{R}_\natural^{\top}*\mathcal{R}_\natural)^{-1}*{\mathit{\Sigma}}_{\star}^{1/2}\right\Vert _{\fro}^{2}}_{\mfk{M}_{2}}.
			\end{split}
		\end{equation}
		The first term $\tr(\Delta_{\mathcal{L}}*{\mathit{\Sigma}}_{\star}*\Delta_{\mathcal{L}}^{\top})$ is closely related to $\distk$, so we will focus on relating $\mfk{M}_{1}$ and $\mfk{M}_{2}$ to $\distk$.
		We begin with the term $\mfk{M}_{1}$. Since $\mathcal{Q}_{k}$ is the optimal alignment tensor between $(\mathcal{L}_{k},\mathcal{R}_{k})$ and $(\mathcal{L}_\star,\mathcal{R}_\star)$, Lemma~\ref{lemma:Q_criterion} states that ${\mathit{\Sigma}}_{\star}*\Delta_{\mathcal{L}}^{\top}*\mathcal{L}_{\natural}=\mathcal{R}_{\natural}^{\top}*\Delta_{\mathcal{R}}*{\mathit{\Sigma}}_{\star}$, and the definition $\mathcal{L}_{\star}=\mathcal{L}_{\natural}-\Delta_{\mathcal{L}}$, we can rewrite $\mfk{M}_{1}$ as
		\begin{align*}
			\mfk{M}_{1}  = ~& \tr\left(\mathcal{R}_{\natural}*(\mathcal{R}_{\natural}^{\top}*\mathcal{R}_{\natural}^{-1}*{\mathit{\Sigma}}_{\star}*\Delta_{\mathcal{L}}^{\top}*\mathcal{L}_{\star}*\Delta_{\mathcal{R}}^{\top}\right)\\
			= ~& \tr\left(\mathcal{R}_{\natural}*(\mathcal{R}_{\natural}^{\top}*\mathcal{R}_{\natural}^{-1}*{\mathit{\Sigma}}_{\star}*\Delta_{\mathcal{L}}^{\top}*\mathcal{L}_{\natural}*\Delta_{\mathcal{R}}^{\top}\right)-\tr\left(\mathcal{R}_{\natural}*(\mathcal{R}_{\natural}^{\top}*\mathcal{R}_{\natural}^{-1}*{\mathit{\Sigma}}_{\star}*\Delta_{\mathcal{L}}^{\top}*\Delta_{\mathcal{L}}*\Delta_{\mathcal{R}}^{\top}\right)\\
			= ~& \tr\left(\mathcal{R}_{\natural}*(\mathcal{R}_{\natural}^{\top}*\mathcal{R}_{\natural}^{-1}*\mathcal{R}_{\natural}^{\top}*\Delta_{\mathcal{R}}*{\mathit{\Sigma}}_{\star}*\Delta_{\mathcal{R}}^{\top}\right)-\tr\left(\mathcal{R}_{\natural}*(\mathcal{R}_{\natural}^{\top}*\mathcal{R}_{\natural}^{-1}*{\mathit{\Sigma}}_{\star}*\Delta_{\mathcal{L}}^{\top}*\Delta_{\mathcal{L}}*\Delta_{\mathcal{R}}^{\top}\right).
		\end{align*}
		As for $\mfk{M}_{2}$, we can utilize the fact $\mathcal{L}_{\star}^{\top}*\mathcal{L}_{\star}={\mathit{\Sigma}}_{\star}$ and the decomposition ${\mathit{\Sigma}}_{\star}=\mathcal{R}_{\natural}^{\top}*\mathcal{R}_{\natural}-(\mathcal{R}_{\natural}^{\top}*\mathcal{R}_{\natural}-{\mathit{\Sigma}}_{\star})$ to obtain 
		\begin{align*}
			\mfk{M}_{2}&= 
			\tr\left(\mathcal{R}_{\natural}*(\mathcal{R}_{\natural}^{\top}*\mathcal{R}_{\natural}^{-1}*{\mathit{\Sigma}}_{\star}*(\mathcal{R}_{\natural}^{\top}*\mathcal{R}_{\natural}^{-1}*\mathcal{R}_{\natural}^{\top}*\Delta_{\mathcal{R}}*{\mathit{\Sigma}}_{\star}*\Delta_{\mathcal{R}}^{\top}\right)\\
			& = \tr\left(\mathcal{R}_{\natural}*(\mathcal{R}_{\natural}^{\top}*\mathcal{R}_{\natural}^{-1}*\mathcal{R}_{\natural}^{\top}*\Delta_{\mathcal{R}}*{\mathit{\Sigma}}_{\star}*\Delta_{\mathcal{R}}^{\top}\right) \\
			& \quad-\tr\left(\mathcal{R}_{\natural}*(\mathcal{R}_{\natural}^{\top}*\mathcal{R}_{\natural}^{-1}*(\mathcal{R}_{\natural}^{\top}*\mathcal{R}_{\natural}-{\mathit{\Sigma}}_{\star})*(\mathcal{R}_{\natural}^{\top}*\mathcal{R}_{\natural}^{-1}*\mathcal{R}_{\natural}^{\top}*\Delta_{\mathcal{R}}*{\mathit{\Sigma}}_{\star}*\Delta_{\mathcal{R}}^{\top}\right).
		\end{align*}
		Substituting  $\mfk{M}_{1}$ and $\mfk{M}_{2}$ back into \eqref{eq:MF_Lt} yields 
		\begin{align*}
			\mathfrak{R}_1 = 
			&  (1-\eta)^{2}\tr\left(\Delta_{\mathcal{L}}*{\mathit{\Sigma}}_{\star}*\Delta_{\mathcal{L}}^{\top}\right)-\eta(2-3\eta)\underbrace{\tr\left(\mathcal{R}_{\natural}*(\mathcal{R}_{\natural}^{\top}*\mathcal{R}_{\natural}^{-1}*\mathcal{R}_{\natural}^{\top}*\Delta_{\mathcal{R}}*{\mathit{\Sigma}}_{\star}*\Delta_{\mathcal{R}}^{\top}\right)}_{\mfk{F}_{1}}\\
			&+2\eta(1-\eta)\underbrace{\tr\left(\mathcal{R}_{\natural}*(\mathcal{R}_{\natural}^{\top}*\mathcal{R}_{\natural}^{-1}*{\mathit{\Sigma}}_{\star}*\Delta_{\mathcal{L}}^{\top}*\Delta_{\mathcal{L}}*\Delta_{\mathcal{R}}^{\top}\right)}_{\mfk{F}_{2}}\\
			& -\eta^{2}\underbrace{\tr\left(\mathcal{R}_{\natural}*(\mathcal{R}_{\natural}^{\top}*\mathcal{R}_{\natural}^{-1}*(\mathcal{R}_{\natural}^{\top}*\mathcal{R}_{\natural}-{\mathit{\Sigma}}_{\star})*(\mathcal{R}_{\natural}^{\top}*\mathcal{R}_{\natural}^{-1}*\mathcal{R}_{\natural}^{\top}*\Delta_{\mathcal{R}}*{\mathit{\Sigma}}_{\star}*\Delta_{\mathcal{R}}^{\top}\right)}_{\mfk{F}_{3}}.
		\end{align*}
		Then we separately control three terms $\mfk{F}_{1},\mfk{F}_{2}$ and $\mfk{F}_{3}$. 
		\begin{enumerate}
			\item Notice that $\mfk{F}_{1}$ is the inner product of two positive semi-definite tensors $\mathcal{R}_{\natural}*(\mathcal{R}_{\natural}^{\top}*\mathcal{R}_{\natural}^{-1}*\mathcal{R}_{\natural}^{\top}$ and $\Delta_{\mathcal{R}}*{\mathit{\Sigma}}_{\star}*\Delta_{\mathcal{R}}^{\top}$. Consequently, we have $\mfk{F}_{1}\ge0$.
			\item For $\mfk{F}_{2}$, we need control on $\|\Delta_{\mathcal{L}}*{\mathit{\Sigma}}_{\star}^{-1/2}\|_{\op}$ and $\|\Delta_{\mathcal{R}}*{\mathit{\Sigma}}_{\star}^{-1/2}\|_{\op}$. Similar to the derivation in \eqref{eq:cond_RPCA_op}, we have
			% he first induction hypothesis 
			% \begin{align*}
				% \distk=\sqrt{\|\Delta_{\mathcal{L}}*{\mathit{\Sigma}}_{\star}^{-1/2}{\mathit{\Sigma}}_{\star}\|_{\fro}^{2}+\|\Delta_{\mathcal{R}}*{\mathit{\Sigma}}_{\star}^{-1/2}{\mathit{\Sigma}}_{\star}\|_{\fro}^{2}} & \le\epsilon\sigma_{\min}(\mathcal{X}_{\star})
				% \end{align*}
			% together with the relation $\|\mathcal{A}*\mathcal{B}\|_{\fro}\ge\|\mathcal{A}\|_{\fro}\sigma_{\min}(\mathcal{B})$ tells that 
			% \begin{align*}
				% \sqrt{\|\Delta_{\mathcal{L}}*{\mathit{\Sigma}}_{\star}^{-1/2}\|_{\fro}^{2}+\|\Delta_{\mathcal{R}}*{\mathit{\Sigma}}_{\star}^{-1/2}\|_{\fro}^{2}}\;\sigma_{\min}(\mathcal{X}_{\star})\le\epsilon\sigma_{\min}(\mathcal{X}_{\star}).
				% \end{align*}
			% In light of the relation $\|\mathcal{A}\|_{\op}\le\|\mathcal{A}\|_{\fro}$, this further implies 
			\begin{align}
				\|\Delta_{\mathcal{L}}*{\mathit{\Sigma}}_{\star}^{-1/2}\|_{\op}\vee\|\Delta_{\mathcal{R}}*{\mathit{\Sigma}}_{\star}^{-1/2}\|_{\op}\le\epsilon.\label{eq:cond_MF}
			\end{align}
			Moreover, by invoking Lemma~\ref{lm:L_R_scale_sigma_half} with the underlying condition $\tau\leq1$, we obtain             
			\begin{align*}
				\left\Vert \mathcal{R}_{\natural}*(\mathcal{R}_{\natural}^{\top}*\mathcal{R}_{\natural}^{-1}*{\mathit{\Sigma}}_{\star}^{1/2}\right\Vert _{\op} & \le \frac{1}{1-\epsilon}.
			\end{align*}
			Given these results and the fact that $\operatorname{tr}(\AC * \BC) \le \norm{\AC}_{\op} \operatorname{tr}(\BC),\|\mathcal{A}* \mathcal{B}\|_{\op}\le \|\mathcal{A}\|_{\op}\|\mathcal{B}\|_{\op}$, we can have
			\begin{align*}
				|\mfk{F}_{2}| 
				=&|\tr\left({\mathit{\Sigma}}_{\star}^{-1/2}\Delta_{\mathcal{R}}^{\top}*\mathcal{R}_{\natural}*(\mathcal{R}_{\natural}^{\top}*\mathcal{R}_{\natural}^{-1}*{\mathit{\Sigma}}_{\star}*\Delta_{\mathcal{L}}^{\top}*\Delta_{\mathcal{L}}*{\mathit{\Sigma}}_{\star}^{1/2}\right)| \\
				\le & \left\Vert {\mathit{\Sigma}}_{\star}^{-1/2}\Delta_{\mathcal{R}}^{\top}*\mathcal{R}_{\natural}*(\mathcal{R}_{\natural}^{\top}*\mathcal{R}_{\natural}^{-1}*{\mathit{\Sigma}}_{\star}^{1/2}\right\Vert _{\op}\tr\left({\mathit{\Sigma}}_{\star}^{1/2}\Delta_{\mathcal{L}}^{\top}*\Delta_{\mathcal{L}}*{\mathit{\Sigma}}_{\star}^{1/2}\right)\\
				\le & \|\Delta_{\mathcal{R}}*{\mathit{\Sigma}}_{\star}^{-1/2}\|_{\op}\left\Vert \mathcal{R}_{\natural}*(\mathcal{R}_{\natural}^{\top}*\mathcal{R}_{\natural}^{-1}*{\mathit{\Sigma}}_{\star}^{1/2}\right\Vert _{\op}\tr\left(\Delta_{\mathcal{L}}*{\mathit{\Sigma}}_{\star}*\Delta_{\mathcal{L}}^{\top}\right)\\
				\le & \frac{\epsilon}{1-\epsilon}\tr\left(\Delta_{\mathcal{L}}*{\mathit{\Sigma}}_{\star}*\Delta_{\mathcal{L}}^{\top}\right).
			\end{align*}
			
			\item Similarly, one can bound $|\mfk{F}_{3}|$ by 
			\begin{align*}
				|\mfk{F}_{3}| \le &  \left\Vert \mathcal{R}_{\natural}*(\mathcal{R}_{\natural}^{\top}*\mathcal{R}_{\natural}^{-1}*(\mathcal{R}_{\natural}^{\top}*\mathcal{R}_{\natural}-{\mathit{\Sigma}}_{\star})*(\mathcal{R}_{\natural}^{\top}*\mathcal{R}_{\natural}^{-1}*\mathcal{R}_{\natural}^{\top}\right\Vert _{\op} \tr\left(\Delta_{\mathcal{R}}*{\mathit{\Sigma}}_{\star}*\Delta_{\mathcal{R}}^{\top}\right)\\
				\le & \left\Vert \mathcal{R}_{\natural}*(\mathcal{R}_{\natural}^{\top}*\mathcal{R}_{\natural}^{-1}*{\mathit{\Sigma}}_{\star}^{1/2}\right\Vert _{\op}^{2}\left\Vert {\mathit{\Sigma}}_{\star}^{-1/2}*(\mathcal{R}_{\natural}^{\top}*\mathcal{R}_{\natural}-{\mathit{\Sigma}}_{\star})*{\mathit{\Sigma}}_{\star}^{-1/2}\right\Vert _{\op}\tr\left(\Delta_{\mathcal{R}}*{\mathit{\Sigma}}_{\star}*\Delta_{\mathcal{R}}^{\top}\right)\\
				\le &  \frac{1}{(1-\epsilon)^{2}}\left\Vert {\mathit{\Sigma}}_{\star}^{-1/2}*(\mathcal{R}_{\natural}^{\top}*\mathcal{R}_{\natural}-{\mathit{\Sigma}}_{\star})*{\mathit{\Sigma}}_{\star}^{-1/2}\right\Vert _{\op}\tr\left(\Delta_{\mathcal{R}}*{\mathit{\Sigma}}_{\star}*\Delta_{\mathcal{R}}^{\top}\right).
			\end{align*}
			According to the fact that $\mathcal{R}_{\natural} = \Delta_{\mathcal{R}} + \mathcal{R}_{\star}$, $\mathcal{R}_{\star}^{\top}*\mathcal{R}_{\star} =  {\mathit{\Sigma}}_{\star}$, $\RC_\star =\VC_\star*\mathit{\Sigma}_\star^{1/2}$, $\norm{\AC + \BC }_{\op} \leq \norm{\AC }_{\op} + \norm{\BC}_{\op}$, and if $\mathcal{V}$ is an orthogonal tensor then $\norm{\AC }_{\op} = \norm{\AC * \VC}_{\op} = \norm{\VC * \AC}_{\op}$, one have
			\begin{align*}
				&\left\Vert {\mathit{\Sigma}}_{\star}^{-1/2}*(\mathcal{R}_{\natural}^{\top}*\mathcal{R}_{\natural}-{\mathit{\Sigma}}_{\star})*{\mathit{\Sigma}}_{\star}^{-1/2}\right\Vert _{\op} \\
				=~& \left\Vert {\mathit{\Sigma}}_{\star}^{-1/2}*(\mathcal{R}_{\star}^{\top}*\Delta_{\mathcal{R}}+\Delta_{\mathcal{R}}^{\top}*\mathcal{R}_{\star}+\Delta_{\mathcal{R}}^{\top}*\Delta_{\mathcal{R}})*{\mathit{\Sigma}}_{\star}^{-1/2}\right\Vert _{\op}\\
				\le~&  2\|\Delta_{\mathcal{R}}*{\mathit{\Sigma}}_{\star}^{-1/2}\|_{\op}+\|\Delta_{\mathcal{R}}*{\mathit{\Sigma}}_{\star}^{-1/2}\|_{\op}^{2}
				\le 2\epsilon+\epsilon^{2}.
			\end{align*}
			Then we take the preceding two bounds together to obtain
			\begin{align*}
				|\mfk{F}_{3}|\le\frac{2\epsilon+\epsilon^{2}}{(1-\epsilon)^{2}}\tr\left(\Delta_{\mathcal{R}}*{\mathit{\Sigma}}_{\star}*\Delta_{\mathcal{R}}^{\top}\right).
			\end{align*}
		\end{enumerate}
		
		Combining the bounds for $\mfk{F}_{1},\mfk{F}_{2},\mfk{F}_{3}$  with the underlying condition $0<\eta\leq\frac{2}{3}, \tau\leq1$ , one has 
		\begin{align*}
			\mathfrak{R}_1 
			=~& \left\Vert (1-\eta)\Delta_{\mathcal{L}}*{\mathit{\Sigma}}_{\star}^{1/2}-\eta\mathcal{L}_{\star}*\Delta_{\mathcal{R}}^{\top}*\mathcal{R}_{\natural}*(\mathcal{R}_{\natural}^{\top}*\mathcal{R}_{\natural}^{-1}*{\mathit{\Sigma}}_{\star}^{1/2}\right\Vert _{\fro}^{2}\nonumber \\
			\le~&\left((1-\eta)^{2}+\frac{2\epsilon}{1-\epsilon}\eta(1-\eta)\right)\tr\left(\Delta_{\mathcal{L}}*{\mathit{\Sigma}}_{\star}*\Delta_{\mathcal{L}}^{\top}\right)+\frac{2\epsilon+\epsilon^{2}}{(1-\epsilon)^{2}}\eta^{2}\tr\left(\Delta_{\mathcal{R}}*{\mathit{\Sigma}}_{\star}*\Delta_{\mathcal{R}}^{\top}\right) \\
			=~&\left((1-\eta)^{2}+\frac{2\epsilon}{1-\epsilon}\eta(1-\eta)\right)\|\Delta_{\mathcal{L}}*\mathit{\Sigma}_\star^{1/2}\|_{\fro}^{2} +\frac{2\epsilon+\epsilon^{2}}{(1-\epsilon)^{2}}\eta^{2}\|\Delta_{\mathcal{R}}*\mathit{\Sigma}_\star^{1/2}\|_{\fro}^{2}.\label{eq:MF_Lt_bound}
		\end{align*}

		\item \textbf{Bound of $\mathfrak{R}_2$:} Lemma~\ref{lm:sparity} implies that $\Delta_{\mathcal{S}}=\mathcal{S}_{k+1}-\mathcal{S}_\star$ is an $\alpha$-sparse tensor. Thus, by the properties $ \tr(\mathcal{A}*\mathcal{B})  \le \|\mathcal{A}\|_{2}\|\mathcal{B}\|_{*}$; $\|\mathcal{A}\|_{*} \le \sqrt{R}\|\mathcal{A}\|_{\fro} $ for $\mathcal{A}$ with the tubal rank $R$ and Lemma~\ref{lm:bound of sparse tensor} ;
		Lemma~\ref{lm:sparity} and $\|\mathcal{A}* \mathcal{B}\|_{\fro}\le \|\mathcal{A}\|_{2}\|\mathcal{B}\|_{\fro}$; 
		Lemma~\ref{lm:X-X_K_inf_norm}, \ref{lm:L_R_scale_sigma_half} and \ref{lm:Delta_F_norm} and denote $ \zeta_{k+1} = \zeta_{k} \tau = \zeta_{1} \tau^k := \frac{3}{\sqrt{I_1I_2}} \mu R \tau^k \sigma_{\min}(\mathcal{X}_\star)$, we have
		\begin{align*}
			&| \operatorname{tr}(\Delta_{\mathcal{S}}*\mathcal{R}_\natural*(\mathcal{R}_\natural^\top*\mathcal{R}_\natural)^{-1}*\mathit{\Sigma}_\star*\Delta_{\mathcal{L}}^\top)| \cr
			\leq & \|\Delta_{\mathcal{S}}\|_2 \|\mathcal{R}_\natural*(\mathcal{R}_\natural^\top*\mathcal{R}_\natural)^{-1}*\mathit{\Sigma}_\star*\Delta_{\mathcal{L}}^\top\|_* \cr
			\leq &\alpha I_3 \sqrt{I_1I_2} \sqrt{R} \|\Delta_{\mathcal{S}}\|_\infty  \|\mathcal{R}_\natural*(\mathcal{R}_\natural^\top*\mathcal{R}_\natural)^{-1}*\mathit{\Sigma}_\star*\Delta_{\mathcal{L}}^\top\|_{\fro} \cr
			\leq & 2\alpha  I_3 \sqrt{I_1I_2R} \zeta_{k+1} \|\mathcal{R}_\natural*(\mathcal{R}_\natural^\top*\mathcal{R}_\natural)^{-1}*\mathit{\Sigma}_\star^{1/2}\|_2 \|\Delta_{\mathcal{L}}*\mathit{\Sigma}_\star^{1/2}\|_{\fro}\cr
			\leq & 6 I_3 \alpha  \mu R^{1.5} \frac{1}{1-\varepsilon}  \tau^{k} \sigma_{\min}(\mathcal{X}_\star)\|\Delta_{\mathcal{L}}*\mathit{\Sigma}_\star^{1/2}\|_{\fro} \cr
			\leq & 6 \alpha  \mu R^{1.5} \sqrt{I_{3}} \frac{\varepsilon}{(1-\varepsilon)}  \tau^{2k} \sigma_{\min}^2(\mathcal{X}_\star)
			% 6\alpha  \mu R^{1.5} \tau^{2k} \frac{\varepsilon}{1-\varepsilon}  \sigma_{\min}^2(\mathcal{X}_\star) .
		\end{align*}
		Hence,
		\begin{equation*}
			|\mathfrak{R}_2|\leq 12 I_{3}\eta(1-\eta)\alpha  \mu R^{1.5}  \frac{1}{1-\varepsilon}  \tau^{k} \sigma_{\min}(\mathcal{X}_\star) \|\Delta_{\mathcal{L}}*\mathit{\Sigma}_\star^{1/2}\|_{\fro}.
		\end{equation*}
		% \begin{equation*}
			%     |\mathfrak{R}_2|\leq 12\eta(1-\eta)\alpha  \mu R^{1.5} \tau^{2k} \frac{\varepsilon}{1-\varepsilon}  \sigma_{\min}^2(\mathcal{X}_\star).
			% \end{equation*}
		%\HQ{WIll need a Lemma~\ref{lm:Delta_F_norm} to show $\|\Delta_{\mathcal{L}}*\mathit{\Sigma}_\star^{1/2}\|_{\fro}\leq \varepsilon\tau^k\sigma_{\min}(\mathcal{X}_\star)$.}

		\item \textbf{Bound of $\mathfrak{R}_3$:} Similar to $\mathfrak{R}_2$, we have
		\begin{align*}
			&~ | \operatorname{tr}(\Delta_{\mathcal{S}}*\mathcal{R}_\natural*(\mathcal{R}_\natural^\top*\mathcal{R}_\natural)^{-1}*\mathit{\Sigma}_\star*(\mathcal{R}_\natural^\top*\mathcal{R}_\natural)^{-1}*\mathcal{R}_\natural^\top*\Delta_{\mathcal{R}}*\mathcal{L}_\star^\top)| \cr
			\leq&~ \|\Delta_{\mathcal{S}}\|_2 \|\mathcal{R}_\natural*(\mathcal{R}_\natural^\top*\mathcal{R}_\natural)^{-1}*\mathit{\Sigma}_\star*(\mathcal{R}_\natural^\top*\mathcal{R}_\natural)^{-1}*\mathcal{R}_\natural^\top*\Delta_{\mathcal{R}}*\mathcal{L}_\star^\top\|_* \cr
			\leq&~ \alpha I_{3} \sqrt{I_1I_2} \sqrt{R} \|\Delta_{\mathcal{S}}\|_\infty \|\mathcal{R}_\natural*(\mathcal{R}_\natural^\top*\mathcal{R}_\natural)^{-1}*\mathit{\Sigma}_\star*(\mathcal{R}_\natural^\top*\mathcal{R}_\natural)^{-1}*\mathcal{R}_\natural^\top*\Delta_{\mathcal{R}}*\mathcal{L}_\star^\top\|_{\fro} \cr
			\leq&~ \alpha I_{3} \sqrt{I_1I_2R} \|\Delta_{\mathcal{S}}\|_\infty \|\mathcal{R}_\natural*(\mathcal{R}_\natural^\top*\mathcal{R}_\natural)^{-1}*\mathit{\Sigma}_\star^{1/2}\|_2^2\|\Delta_{\mathcal{R}}*\mathcal{L}_\star^\top\|_{\fro} \cr
			\leq&~ 2\alpha I_{3} \sqrt{I_1I_2R}  \zeta_{k+1}\|\mathcal{R}_\natural*(\mathcal{R}_\natural^\top*\mathcal{R}_\natural)^{-1}*\mathit{\Sigma}_\star^{1/2}\|_2^2\|\Delta_{\mathcal{R}}*\mathit{\Sigma}_\star^{1/2}\|_{\fro} \|\mathcal{U}_\star\|_2 \cr
			\leq&~ 6 I_{3} \alpha  \mu R^{1.5} \frac{1}{ (1-\varepsilon)^2}  \tau^{k} \sigma_{\min}(\mathcal{X}_\star)\|\Delta_{\mathcal{R}}*\mathit{\Sigma}_\star^{1/2}\|_{\fro} \cr
			\leq&~ 6  \alpha  \mu R^{1.5} \sqrt{I_{3}} \frac{\varepsilon}{ (1-\varepsilon)^2}  \tau^{2k} \sigma_{\min}^2(\mathcal{X}_\star).
			% 6\alpha  \mu R^{1.5} \tau^{2k} \frac{\varepsilon}{(1-\varepsilon)^2}  \sigma_{\min}^2(\mathcal{X}_\star) .
		\end{align*}
		% where we used the fact $\|\Delta_{\mathcal{R}}*\mathcal{L}_\star^\top\|_{\fro}=\|\Delta_{\mathcal{R}}*\mathit{\Sigma}_\star^{1/2}*\mathcal{U}_\star^\top\|_{\fro}\leq\|\Delta_{\mathcal{R}}*\mathit{\Sigma}_\star^{1/2}\|_{\fro}$
		Hence, 
		\begin{equation*}
			|\mathfrak{R}_3| \leq 12 I_{3} \eta^2\alpha  \mu R^{1.5}  \frac{1}{(1-\varepsilon)^2}  \tau^{k} \sigma_{\min}(\mathcal{X}_\star) \|\Delta_{\mathcal{R}}*\mathit{\Sigma}_\star^{1/2}\|_{\fro}.
			% \leq 12 \eta^2\alpha  \mu R^{1.5}  \frac{\varepsilon}{\sqrt{I_{3}}(1-\varepsilon)^2}  \tau^{2k} \sigma_{\min}^2(\mathcal{X}_\star).
		\end{equation*}
		
		\item \textbf{Bound of $\mathfrak{R}_4$:}
		\begin{align*} &~\|\Delta_{\mathcal{S}}*\mathcal{R}_\natural*(\mathcal{R}_\natural^\top*\mathcal{R}_\natural)^{-1}*\mathit{\Sigma}_\star^{1/2} \|_{\fro}^{2}\cr
			\leq&~  R  \|\Delta_{\mathcal{S}}*\mathcal{R}_\natural*(\mathcal{R}_\natural^\top*\mathcal{R}_\natural)^{-1}*\mathit{\Sigma}_\star^{1/2} \|_2^2 \cr 
			\leq&~  R\|\Delta_{\mathcal{S}}\|_2^2 \|\mathcal{R}_\natural*(\mathcal{R}_\natural^\top*\mathcal{R}_\natural)^{-1}*\mathit{\Sigma}_\star^{1/2} \|_2^2 \cr 
			\leq&~  4 {I_{3}}^2 \alpha^2 I_1 I_2 R  \zeta_{k+1}^2 \|\mathcal{R}_\natural*(\mathcal{R}_\natural^\top*\mathcal{R}_\natural)^{-1}*\mathit{\Sigma}_\star^{1/2} \|_2^2 \cr 
			\leq&~  36 {I_{3}}^2 \alpha^2 \mu^2 R^3 \tau^{2k} \frac{1}{(1-\varepsilon)^2}  \sigma_{\min}^2(\mathcal{X}_\star).
		\end{align*}
		Hence,
		\begin{equation*}
			\mathfrak{R}_4\leq 36 {I_{3}}^2 \eta^2  \alpha^2 \mu^2 R^3 \tau^{2k} \frac{1}{(1-\varepsilon)^2}  \sigma_{\min}^2(\mathcal{X}_\star)
		\end{equation*}
		
		Combining all four bounds, we have
		\begin{align*}
			&~\|(\mathcal{L}_{k+1}*\mathcal{Q}_k-\mathcal{L}_\star)*\mathit{\Sigma}_\star^{1/2}\|_{\fro}^{2} \cr 
			\leq &~ \left((1-\eta)^{2}+\frac{2\epsilon}{1-\epsilon}\eta(1-\eta)\right)\|\Delta_{\mathcal{L}}*\mathit{\Sigma}_\star^{1/2}\|_{\fro}^{2} +\frac{2\epsilon+\epsilon^{2}}{(1-\epsilon)^{2}}\eta^{2}\|\Delta_{\mathcal{R}}*\mathit{\Sigma}_\star^{1/2}\|_{\fro}^{2}\cr
			&~ + 12I_{3}\eta(1-\eta)\alpha  \mu R^{1.5}  \frac{1}{1-\varepsilon}  \tau^{k} \sigma_{\min}(\mathcal{X}_\star) \|\Delta_{\mathcal{L}}*\mathit{\Sigma}_\star^{1/2}\|_{\fro} \cr
			&~ + 12 I_{3} \eta^2\alpha  \mu R^{1.5}  \frac{1}{(1-\varepsilon)^2}  \tau^{k} \sigma_{\min}(\mathcal{X}_\star) \|\Delta_{\mathcal{R}}*\mathit{\Sigma}_\star^{1/2}\|_{\fro} \cr
			&~ + 36{I_{3}}^2 \eta^2 \alpha^2 \mu^2 R^3 \tau^{2k} \frac{1}{(1-\varepsilon)^2}  \sigma_{\min}^2(\mathcal{X}_\star),
		\end{align*}
		% \begin{align*}
			%     &~\|(\mathcal{L}_{k+1}*\mathcal{Q}_k-\mathcal{L}_\star)*\mathit{\Sigma}_\star^{1/2}\|_{\fro}^{2} \cr 
			%     \leq&~  (1-\eta)^2\|\Delta_{\mathcal{L}}*\mathit{\Sigma}_\star^{1/2}\|_{\fro}^{2} \cr
			%     &~ + \left((1-\eta) \frac{2\varepsilon^3}{1-\varepsilon}  +\eta\frac{2\varepsilon^3+\varepsilon^4}{(1-\varepsilon)^2}\right)\eta\tau^{2k}\sigma_{\min}^2(\mathcal{X}_\star) \cr
			%     &~ + 12\eta(1-\eta)\alpha  \mu R^{1.5} \tau^{2k} \frac{\varepsilon}{1-\varepsilon}  \sigma_{\min}^2(\mathcal{X}_\star) \cr
			%     &~ + 12\eta^2\alpha  \mu R^{1.5} \tau^{2k} \frac{\varepsilon}{(1-\varepsilon)^2}  \sigma_{\min}^2(\mathcal{X}_\star) \cr
			%     &~ + 36\alpha^2 \mu^2 r^3 \tau^{2k} \frac{1}{(1-\varepsilon)^2}  \sigma_{\min}^2(\mathcal{X}_\star),
			% \end{align*}
		and a similar bound can be derived for $\|(\mathcal{R}_{k+1}*\mathcal{Q}_k^{-\top}-\mathcal{R}_\star)*\mathit{\Sigma}_\star^{1/2}\|_{\fro}^{2}$. Add together, we have
		\begin{align} \label{eq:dist_k+1_with_Q_k}
			&~\distsquarekplusone \cr
			\leq&~ \|(\mathcal{L}_{k+1}*\mathcal{Q}_k-\mathcal{L}_\star)*\mathit{\Sigma}_\star^{1/2}\|_{\fro}^{2} + \|(\mathcal{R}_{k+1}*\mathcal{Q}_k^{-\top}-\mathcal{R}_\star)*\mathit{\Sigma}_\star^{1/2}\|_{\fro}^{2} \cr
			\leq&~  \left((1-\eta)^{2}+\frac{2\epsilon}{1-\epsilon}\eta(1-\eta)+\frac{2\epsilon+\epsilon^{2}}{(1-\epsilon)^{2}}\eta^{2}\right) \distsquarek \cr
			&~ + 12I_{3}\left( \frac{\eta(1-\eta)}{1-\varepsilon}  +\frac{\eta^2}{(1-\varepsilon)^2}\right)\alpha  \mu R^{1.5} \tau^{k}\sigma_{\min}(\mathcal{X}_\star) \left(\|\Delta_{\mathcal{L}}*\mathit{\Sigma}_\star^{1/2}\|_{\fro}+\|\Delta_{\mathcal{R}}*\mathit{\Sigma}_\star^{1/2}\|_{\fro}\right) \cr 
			&~ + 72{I_{3}}^2 \eta^2\alpha^2 \mu^2 R^3 \tau^{2k} \frac{1}{(1-\varepsilon)^2}  \sigma_{\min}^2(\mathcal{X}_\star) \cr
			\leq&~  \Bigg((1-\eta)^{2}+\frac{2\epsilon}{1-\epsilon}\eta(1-\eta)+\frac{2\epsilon+\epsilon^{2}}{(1-\epsilon)^{2}}\eta^{2}+ 12\sqrt{2} I_3^{1.5} \alpha \mu R^{1.5}\left( \frac{\eta(1-\eta)}{\varepsilon(1-\varepsilon)}  +\frac{\eta^2}{\varepsilon(1-\varepsilon)^2}\right) \cr
			&~ + 72{I_{3}}^3 \eta^2 \alpha^2 \mu^2 R^3 \frac{1}{\varepsilon^2(1-\varepsilon)^2} \Bigg)  \frac{1}{I_3}\varepsilon^2\tau^{2k} \sigma_{\min}^2(\mathcal{X}_\star)\cr
			\leq&~  \Bigg((1-\eta)^{2}+\frac{2\epsilon}{1-\epsilon}\eta(1-\eta)+\frac{2\epsilon+\epsilon^{2}}{(1-\epsilon)^{2}}\eta^{2} + 12\sqrt{2}/10^{4} \left( \frac{\eta(1-\eta)}{\varepsilon(1-\varepsilon)}  +\frac{\eta^2}{\varepsilon(1-\varepsilon)^2}\right)  + \frac{72/10^{8} \eta^2  }{\varepsilon^2(1-\varepsilon)^2} \Bigg)  \frac{1}{I_3}\varepsilon^2\tau^{2k} \sigma_{\min}^2(\mathcal{X}_\star)\cr
			\leq&~ (1-0.8\eta)^2 \frac{1}{I_3}\varepsilon^2\tau^{2k} \sigma_{\min}^2(\mathcal{X}_\star) ,
		\end{align}
		where we use the fact $\|\Delta_{\mathcal{L}}*\mathit{\Sigma}_\star^{1/2}\|_{\fro}^{2}+\|\Delta_{\mathcal{R}}*\mathit{\Sigma}_\star^{1/2}\|_{\fro}^{2} =: \distsquarek \leq \frac{\varepsilon^2}{I_3}\tau^{2k}\sigma_{\min}^2(\mathcal{X}_\star)$ and apply the Cauchy-Schwarz inequality $a+b\leq \sqrt{2(a^2+b^2)}$ in the 3rd step,  use  $\alpha\leq\frac{1}{10^4\mu R^{1.5} {I_3}^{1.5}}$ in the 4th step, and the last step use $\varepsilon=0.02$, $0<\eta\leq \frac{2}{3}$. The proof is finished by substituting $\tau=1-0.8\eta \leq 1-0.6\eta $.

	\end{enumerate}
\end{proof}

\begin{lemma} \label{lm:convergence_incoher}
	If 
	\begin{equation}
		\begin{gathered}
			\distk \leq \frac{\varepsilon }{\sqrt{I_3}}\tau^k \sigma_{\min}(\mathcal{X}_\star), \\
			\sqrt{I_1} \|\Delta_{\mathcal{L}}*\mathit{\Sigma}_\star^{1/2}\|_{2,\infty}  \lor \sqrt{I_2} \|\Delta_{\mathcal{R}}*\mathit{\Sigma}_\star^{1/2}\|_{2,\infty} \leq \sqrt{\mu R} \tau^k \sigma_{\min}(\mathcal{X}_\star),
		\end{gathered}
	\end{equation}
	then
	\begin{equation}
		\begin{gathered}
			\sqrt{I_1}\|(\mathcal{L}_{k+1}*\mathcal{Q}_{k+1}-\mathcal{L}_\star)*\mathit{\Sigma}_\star^{1/2}\|_{2,\infty}  \lor \sqrt{I_2}\|(\mathcal{R}_{k+1}*\mathcal{Q}_{k+1}^{-\top}-\mathcal{R}_\star)*\mathit{\Sigma}_\star^{1/2}\|_{2,\infty}
			\leq\sqrt{\mu R} \tau^{k+1} \sigma_{\min}(\mathcal{X}_\star).
		\end{gathered}
	\end{equation}
\end{lemma}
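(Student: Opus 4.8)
The plan is to transplant the computation in the proof of Lemma~\ref{lm:convergence_dist} to the $\ell_{2,\infty}$ norm, with the extra twist that, unlike the Frobenius case, the optimality of $\mathcal{Q}_{k+1}$ provides no free bound, so I must pass from $\mathcal{Q}_{k+1}$ back to $\mathcal{Q}_k$ by hand. For the left factor I write
\[
(\mathcal{L}_{k+1}*\mathcal{Q}_{k+1}-\mathcal{L}_\star)*\mathit{\Sigma}_\star^{1/2}
=(\mathcal{L}_{k+1}*\mathcal{Q}_k-\mathcal{L}_\star)*\mathit{\Sigma}_\star^{1/2}
+\mathcal{L}_{k+1}*\mathcal{Q}_k*\mathit{\Sigma}_\star^{-1/2}*\bigl(\mathit{\Sigma}_\star^{1/2}*\mathcal{Q}_k^{-1}*(\mathcal{Q}_{k+1}-\mathcal{Q}_k)*\mathit{\Sigma}_\star^{1/2}\bigr),
\]
so by the triangle inequality and $\|\mathcal{A}*\mathcal{B}\|_{2,\infty}\le\|\mathcal{A}\|_{2,\infty}\|\mathcal{B}\|_2$ it suffices to bound (i) $\|(\mathcal{L}_{k+1}*\mathcal{Q}_k-\mathcal{L}_\star)*\mathit{\Sigma}_\star^{1/2}\|_{2,\infty}$, (ii) $\|\mathcal{L}_{k+1}*\mathcal{Q}_k*\mathit{\Sigma}_\star^{-1/2}\|_{2,\infty}$, and (iii) $\|\mathit{\Sigma}_\star^{1/2}*\mathcal{Q}_k^{-1}*(\mathcal{Q}_{k+1}-\mathcal{Q}_k)*\mathit{\Sigma}_\star^{1/2}\|_2$ (and the symmetric quantities for $\mathcal{R}$).

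For (i) I reuse the identity \eqref{eq:LQ-L}, $\mathcal{L}_{k+1}*\mathcal{Q}_k-\mathcal{L}_\star=(1-\eta)\Delta_{\mathcal{L}}-\eta\,\mathcal{L}_\star*\Delta_{\mathcal{R}}^\top*\mathcal{R}_\natural*(\mathcal{R}_\natural^\top*\mathcal{R}_\natural)^{-1}-\eta\,\Delta_{\mathcal{S}}*\mathcal{R}_\natural*(\mathcal{R}_\natural^\top*\mathcal{R}_\natural)^{-1}$, and estimate the three summands in $\|(\cdot)*\mathit{\Sigma}_\star^{1/2}\|_{2,\infty}$: the first is $(1-\eta)\|\Delta_{\mathcal{L}}*\mathit{\Sigma}_\star^{1/2}\|_{2,\infty}$, controlled by the hypothesis; the second, after writing $\mathcal{L}_\star=\mathcal{U}_\star*\mathit{\Sigma}_\star^{1/2}$ and peeling $\|\mathcal{U}_\star\|_{2,\infty}\le\sqrt{\mu R/I_1}$, is dominated by $\|\Delta_{\mathcal{R}}*\mathit{\Sigma}_\star^{1/2}\|_2\,\|\mathcal{R}_\natural*(\mathcal{R}_\natural^\top*\mathcal{R}_\natural)^{-1}*\mathit{\Sigma}_\star^{1/2}\|_2$, which is $O(\varepsilon)\,\tau^k\sigma_{\min}(\mathcal{X}_\star)/(1-\varepsilon)$ by Lemmas~\ref{lm:Delta_F_norm} and~\ref{lm:L_R_scale_sigma_half}; the third, sparse summand is handled with $\|\mathcal{A}*\mathcal{B}\|_{2,\infty}\le\|\mathcal{A}\|_{1,\infty}\|\mathcal{B}\|_{2,\infty}$, Lemma~\ref{lm:sparity} ($\Delta_{\mathcal{S}}$ is $\alpha$-sparse with $\|\Delta_{\mathcal{S}}\|_\infty\le2\zeta_{k+1}$), Lemma~\ref{lm:bound of sparse tensor} ($\|\Delta_{\mathcal{S}}\|_{1,\infty}\le\alpha I_2 I_3\|\Delta_{\mathcal{S}}\|_\infty$) and the estimate $\|\mathcal{R}_\natural*(\mathcal{R}_\natural^\top*\mathcal{R}_\natural)^{-1}*\mathit{\Sigma}_\star^{1/2}\|_{2,\infty}\le\|\mathcal{R}_\natural*\mathit{\Sigma}_\star^{-1/2}\|_{2,\infty}\,\|\mathit{\Sigma}_\star^{1/2}*(\mathcal{R}_\natural^\top*\mathcal{R}_\natural)^{-1}*\mathit{\Sigma}_\star^{1/2}\|_2\le2\sqrt{\mu R/I_2}\,\tfrac{1}{(1-\varepsilon)^2}$ (the first factor being exactly the bound established inside the proof of Lemma~\ref{lm:X-X_K_inf_norm}), and is negligible under $\alpha\le\frac{1}{10^4\mu R^{1.5}I_3^{1.5}}$. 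This gives $\sqrt{I_1}\cdot(\mathrm{i})\le\bigl((1-\eta)+O(\eta\varepsilon)\bigr)\sqrt{\mu R}\,\tau^k\sigma_{\min}(\mathcal{X}_\star)$; and for (ii), adding and subtracting $\mathcal{L}_\star$ together with $\|\mathit{\Sigma}_\star^{-1}\|_2=1/\sigma_{\min}(\mathcal{X}_\star)$ yields $\|\mathcal{L}_{k+1}*\mathcal{Q}_k*\mathit{\Sigma}_\star^{-1/2}\|_{2,\infty}\le(\mathrm{i})\cdot\|\mathit{\Sigma}_\star^{-1}\|_2+\|\mathcal{U}_\star\|_{2,\infty}\le2\sqrt{\mu R/I_1}$.

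For (iii) I invoke Lemma~\ref{lm:sigma_half_Q-Q_sigma_half}, whose hypothesis — $\|(\mathcal{L}_{k+1}*\mathcal{Q}_k-\mathcal{L}_\star)*\mathit{\Sigma}_\star^{1/2}\|_2\lor\|(\mathcal{R}_{k+1}*\mathcal{Q}_k^{-\top}-\mathcal{R}_\star)*\mathit{\Sigma}_\star^{1/2}\|_2\le\varepsilon\tau^{k+1}\sigma_{\min}(\mathcal{X}_\star)$ — is precisely what \eqref{eq:dist_k+1_with_Q_k} delivers, through $\|\mathcal{A}\|_2\le\sqrt{I_3}\|\mathcal{A}\|_{\fro}$ and $\tau=1-0.8\eta$, as noted in the remark following that lemma; this is the unique place where Lemma~\ref{lm:sigma_half_Q-Q_sigma_half} is needed, and it gives $(\mathrm{iii})\le\frac{2\varepsilon}{1-\varepsilon}\tau^{k+1}\sigma_{\min}(\mathcal{X}_\star)$ (retaining the $\tau^{k+1}$ factor that its proof in fact produces). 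Multiplying (ii) and (iii), the realignment contributes at most $\frac{4\varepsilon}{1-\varepsilon}\sqrt{\mu R/I_1}\,\tau^{k+1}\sigma_{\min}(\mathcal{X}_\star)$, so altogether
\[
\sqrt{I_1}\,\|(\mathcal{L}_{k+1}*\mathcal{Q}_{k+1}-\mathcal{L}_\star)*\mathit{\Sigma}_\star^{1/2}\|_{2,\infty}\le\Bigl((1-\eta)+O(\eta\varepsilon)+O(\eta\,\alpha I_3\mu R)+\tfrac{4\varepsilon}{1-\varepsilon}\tau\Bigr)\sqrt{\mu R}\,\tau^k\sigma_{\min}(\mathcal{X}_\star),
\]
and the final and most delicate step is to check, with $\varepsilon=0.02$, $\alpha\le\frac{1}{10^4\mu R^{1.5}I_3^{1.5}}$, $\tau=1-0.8\eta$ and $0<\eta\le\tfrac23$, that the bracket is $\le\tau$, so the right-hand side is $\le\sqrt{\mu R}\,\tau^{k+1}\sigma_{\min}(\mathcal{X}_\star)$; the $\mathcal{R}$-estimate then follows by the mirror-image argument using the second inequality of Lemma~\ref{lm:sigma_half_Q-Q_sigma_half}. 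The main obstacle is exactly term (iii): one must argue that replacing the alignment tensor $\mathcal{Q}_k$ by its update $\mathcal{Q}_{k+1}$ perturbs the incoherence measure only by a controlled, $\tau$-decaying amount — this is the entire reason Lemma~\ref{lm:sigma_half_Q-Q_sigma_half} is introduced and why its proof must be carried out keeping the $\tau^{k+1}$ factor — and, secondarily, every $\ell_{2,\infty}$ splitting has to be routed through the inequalities of Section~\ref{subsection: Some properties about tensor operations} in the right order, in particular handling the sparse factor via the $\ell_{1,\infty}$ norm rather than the spectral norm so as not to lose a spurious $\sqrt{I_1 I_2}$.
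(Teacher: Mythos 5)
Your decomposition and sequencing match the paper's proof essentially step for step: the same identity \eqref{eq:LQ-L} drives the three-term bound on $\|(\mathcal{L}_{k+1}*\mathcal{Q}_k-\mathcal{L}_\star)*\mathit{\Sigma}_\star^{1/2}\|_{2,\infty}$, the same insertion of $\mathcal{Q}_k*\mathit{\Sigma}_\star^{-1/2}*\mathit{\Sigma}_\star^{1/2}*\mathcal{Q}_k^{-1}$ handles the realignment from $\mathcal{Q}_k$ to $\mathcal{Q}_{k+1}$, and Lemma~\ref{lm:sigma_half_Q-Q_sigma_half} is invoked exactly where you invoke it, with its hypothesis supplied by \eqref{eq:dist_k+1_with_Q_k} as you say. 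The only cosmetic deviations are that the paper bounds the sparse term via $\|\Delta_{\mathcal{S}}\|_{2,\infty}$ paired with the spectral norm of $\mathcal{R}_\natural*(\mathcal{R}_\natural^\top*\mathcal{R}_\natural)^{-1}*\mathit{\Sigma}_\star^{1/2}$ rather than your $\ell_{1,\infty}$ route (both are absorbed by the $\alpha$ condition), and that the paper discards the extra $\tau^{k+1}$ factor in the output of Lemma~\ref{lm:sigma_half_Q-Q_sigma_half} that you choose to retain.

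There is, however, one step that fails as you state it: the final numerical verification under the constraint $0<\eta\le\tfrac{2}{3}$. Your realignment contribution $\tfrac{4\varepsilon}{1-\varepsilon}\tau$ is (essentially) independent of $\eta$, roughly $0.082$ for $\varepsilon=0.02$, while the slack available between $1-\eta+\eta\tfrac{\varepsilon}{1-\varepsilon}+\cdots$ and the target $\tau=1-0.6\eta$ is only about $0.4\eta$. Writing it out, you need roughly $1.0816-1.03\,\eta\le 1-0.6\eta$, i.e.\ $\eta\gtrsim 0.19$; for small $\eta$ the bracket exceeds $\tau$ and the claimed contraction of the incoherence quantity is false. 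This is precisely why the paper's last step invokes $\tfrac{1}{4}\le\eta\le\tfrac{2}{3}$ rather than merely $\eta\le\tfrac{2}{3}$. The fix is simply to add the step-size lower bound $\eta\ge\tfrac{1}{4}$ (inherited from the hypotheses of Theorem~\ref{thm:local convergence}) to the constraint set under which you perform the check; with it, your argument closes, and retaining the $\tau^{k+1}$ factor in Lemma~\ref{lm:sigma_half_Q-Q_sigma_half} only loosens the required lower bound slightly, it does not remove the need for one.
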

\begin{proof}
	Using \eqref{eq:LQ-L} again, we have
	\begin{align*}
		&~\|(\mathcal{L}_{k+1}*\mathcal{Q}_k-\mathcal{L}_\star)*\mathit{\Sigma}_\star^{1/2}\|_{2,\infty}\\
		\leq&~ (1-\eta)\|\Delta_{\mathcal{L}}*\mathit{\Sigma}_\star^{1/2}\|_{2,\infty} +\eta\|\mathcal{L}_\star*\Delta_{\mathcal{R}}^\top*\mathcal{R}_\natural*(\mathcal{R}_\natural^\top*\mathcal{R}_\natural)^{-1}*\mathit{\Sigma}_\star^{1/2}\|_{2,\infty} +\eta\|\Delta_{\mathcal{S}}*\mathcal{R}_\natural*(\mathcal{R}_\natural^\top*\mathcal{R}_\natural)^{-1}*\mathit{\Sigma}_\star^{1/2}\|_{2,\infty} \cr
		:=&~ \mathfrak{T}_1 + \mathfrak{T}_2 +\mathfrak{T}_3.
	\end{align*}
	
	\begin{enumerate}
		\item \textbf{Bound of $\mathfrak{T}_1$:} $\mathfrak{T}_1\leq(1-\eta)\sqrt{\frac{\mu R}{I_1}}\tau^k\sigma_{\min}(\mathcal{X}_\star)$ directly follows from the assumption of this lemma.

		\item \textbf{Bound of $\mathfrak{T}_2$:} Assumption~\ref{as:incoherence} implies $\|\mathcal{L}_\star*\mathit{\Sigma}_\star^{-1/2}\|_{2,\infty}\leq\sqrt{\frac{\mu R}{I_1}}$, Lemma~\ref{lm:Delta_F_norm} implies $\|\Delta_{\mathcal{R}}*\mathit{\Sigma}_\star^{1/2}\|_2\leq \varepsilon \tau^k  \sigma_{\min}(\mathcal{X}_\star)$, 
		and  Lemma~\ref{lm:L_R_scale_sigma_half} implies $\|\mathcal{R}_\natural*(\mathcal{R}_\natural^\top*\mathcal{R}_\natural)^{-1}*\mathit{\Sigma}_\star^{1/2}\|_2\leq\frac{1}{1-\varepsilon}$. %(64)
		Together, we have
		\begin{align*}
			\mathfrak{T}_2
			&\leq \eta\|\mathcal{L}_\star*\mathit{\Sigma}_\star^{-1/2}\|_{2,\infty} \|\Delta_{\mathcal{R}}*\mathit{\Sigma}_\star^{1/2}\|_2\|\mathcal{R}_\natural*(\mathcal{R}_\natural^\top*\mathcal{R}_\natural)^{-1}*\mathit{\Sigma}_\star^{1/2}\|_2  \leq \eta \frac{\varepsilon}{1-\varepsilon}\sqrt{\frac{\mu R}{I_1}}\tau^k\sigma_{\min}(\mathcal{X}_\star) .
		\end{align*}

		\item \textbf{Bound of $\mathfrak{T}_3$:} By Lemma~\ref{lm:sparity}, $\supp(\Delta_{\mathcal{S}})\subseteq\supp(\mathcal{S}_\star)$, which implies that $\Delta_{\mathcal{S}}$ is an $\alpha$-sparse tensor. 
		%Hence, $\|\Delta_{\mathcal{S}}\|_{2,\infty}\leq\sqrt{\alpha I_3 \sqrt{I_1I_2}}\|\Delta_{\mathcal{S}}\|_\infty$. 
		Thus, by Lemma~\ref{lm:bound of sparse tensor}, Lemma~\ref{lm:sparity} and \ref{lm:X-X_K_inf_norm}, we get
		\begin{align*}
			\mathfrak{T}_3 
			&\leq \eta\|\Delta_{\mathcal{S}}\|_{2,\infty} \|\mathcal{R}_\natural*(\mathcal{R}_\natural^\top*\mathcal{R}_\natural)^{-1}*\mathit{\Sigma}_\star^{1/2}\|_2  \leq \eta\frac{\sqrt{\alpha I_2I_3}}{1-\varepsilon}\|\Delta_{\mathcal{S}}\|_\infty  \leq 2\eta\frac{\sqrt{\alpha I_2I_3}}{1-\varepsilon}\zeta_{k+1}  \leq 6\eta\frac{\sqrt{\alpha I_3}}{1-\varepsilon} \frac{1}{\sqrt{I_1}}\mu R \tau^k \sigma_{\min}(\mathcal{X}_\star).
		\end{align*}
		Putting  three terms together, we obtain
		\begin{align} \label{eq:LQ-L_sigma_half_2_inf}
			&~\sqrt{I_1}\|(\mathcal{L}_{k+1}*\mathcal{Q}_k-\mathcal{L}_\star)*\mathit{\Sigma}_\star^{1/2}\|_{2,\infty}\cr
			%\leq&~ (1-\eta)\|\Delta_{\mathcal{L}}*\mathit{\Sigma}_\star^{1/2}\|_{2,\infty}
			%+\eta\|\mathcal{L}_\star*\Delta_{\mathcal{R}}^\top*\mathcal{R}_\natural*(\mathcal{R}_\natural^\top*\mathcal{R}_\natural)^{-1}*\mathit{\Sigma}_\star^{1/2}\|_{2,\infty}
			%+\eta\|\Delta_{\mathcal{S}}*\mathcal{R}_\natural*(\mathcal{R}_\natural^\top*\mathcal{R}_\natural)^{-1}*\mathit{\Sigma}_\star^{1/2}\|_{2,\infty} \cr
			\leq&~\sqrt{I_1}( \mathfrak{T}_1 + \mathfrak{T}_2 +\mathfrak{T}_3) \cr
			\leq&~ \left(  
			1-\eta
			+ \eta \frac{\varepsilon}{1-\varepsilon}
			+ 6\eta\frac{\sqrt{\alpha \mu RI_3}}{1-\varepsilon} \right)
			\sqrt{\mu R} \tau^k \sigma_{\min}(\mathcal{X}_\star) \cr
			\leq&~ \left(  
			1-\eta\left(1
			- \frac{\varepsilon}{1-\varepsilon}
			- 6\frac{\sqrt{\alpha \mu R I_3}}{1-\varepsilon}\right) \right)
			\sqrt{\mu R} \tau^k \sigma_{\min}(\mathcal{X}_\star).
		\end{align}
		Moreover, we also have 
		\begin{align} \label{eq:LQ-L_sigma_-half_2_inf}
			&~ \sqrt{I_1}\|(\mathcal{L}_{k+1}*\mathcal{Q}_k-\mathcal{L}_\star)*\mathit{\Sigma}_\star^{-1/2}\|_{2,\infty}  
			\leq
			\left(  
			1-\eta\left(1
			- \frac{\varepsilon}{1-\varepsilon}
			- 6\frac{\sqrt{\alpha \mu R I_3}}{1-\varepsilon}\right) \right)
			\sqrt{\mu R} \tau^k.
		\end{align}
		
		\item \textbf{Bound with $\mathcal{Q}_{k+1}$:} 
		% Note that $\mathcal{Q}$'s are the best align matrices under Frobenius norm but this is not necessary true under $\ell_{2,\infty}$ norm. So we must show the bound of $\|(\mathcal{L}_{k+1}*\mathcal{Q}_{k+1}-\mathcal{L}_\star)*\mathit{\Sigma}_\star^{1/2}\|_{2,\infty}$ directly. 
		Note that $\mathcal{Q}_{k+1}$ does exist, according to Lemma~\ref{lm:convergence_dist} and \ref{lm: Q_existence}. 
		Applying \eqref{eq:LQ-L_sigma_half_2_inf}, \eqref{eq:LQ-L_sigma_-half_2_inf} and Lemma~\ref{lm:sigma_half_Q-Q_sigma_half}, we have 
		\begin{align*}
			&~ \|(\mathcal{L}_{k+1}*\mathcal{Q}_{k+1}-\mathcal{L}_\star)*\mathit{\Sigma}_\star^{1/2}\|_{2,\infty} \cr 
			\leq&~ \|(\mathcal{L}_{k+1}*\mathcal{Q}_k-\mathcal{L}_\star)*\mathit{\Sigma}_\star^{1/2}\|_{2,\infty} + \|\mathcal{L}_{k+1}(\mathcal{Q}_{k+1}-\mathcal{Q}_k)*\mathit{\Sigma}_\star^{1/2}\|_{2,\infty} \cr
			=&~ \|(\mathcal{L}_{k+1}*\mathcal{Q}_k-\mathcal{L}_\star)*\mathit{\Sigma}_\star^{1/2}\|_{2,\infty}  + \|\mathcal{L}_{k+1}*\mathcal{Q}_k\mathit{\Sigma}_\star^{-1/2}\mathit{\Sigma}_\star^{1/2}*\mathcal{Q}_k^{-1}*(\mathcal{Q}_{k+1}-\mathcal{Q}_k)*\mathit{\Sigma}_\star^{1/2}\|_{2,\infty}  \cr
			\leq&~ \|(\mathcal{L}_{k+1}*\mathcal{Q}_k-\mathcal{L}_\star)*\mathit{\Sigma}_\star^{1/2}\|_{2,\infty} + \|\mathcal{L}_{k+1}*\mathcal{Q}_k\mathit{\Sigma}_\star^{-1/2}\|_{2,\infty} \|\mathit{\Sigma}_\star^{1/2}*\mathcal{Q}_k^{-1}*(\mathcal{Q}_{k+1}-\mathcal{Q}_k)*\mathit{\Sigma}_\star^{1/2}\|_2 \cr
			\leq&~ \|(\mathcal{L}_{k+1}*\mathcal{Q}_k-\mathcal{L}_\star)*\mathit{\Sigma}_\star^{1/2}\|_{2,\infty}  \cr
			&~  +\left(\|(\mathcal{L}_{k+1}*\mathcal{Q}_k-\mathcal{L}_\star)*\mathit{\Sigma}_\star^{-1/2}\|_{2,\infty} +\|\mathcal{L}_\star*\mathit{\Sigma}_\star^{-1/2} \|_{2,\infty}\right)\|\mathit{\Sigma}_\star^{1/2}*\mathcal{Q}_k^{-1}*(\mathcal{Q}_{k+1}-\mathcal{Q}_k)*\mathit{\Sigma}_\star^{1/2}\|_2 \cr
			\leq&~\Bigg( 1-\eta\left(1
			- \frac{\varepsilon}{1-\varepsilon}
			- 6\frac{\sqrt{\alpha \mu R I_3}}{1-\varepsilon}\right)   + \frac{2\varepsilon}{1-\varepsilon} \left(  2-\eta\left(1
			- \frac{\varepsilon}{1-\varepsilon}
			- 6\frac{\sqrt{\alpha \mu R I_3}}{1-\varepsilon}\right) \right)\Bigg) \sqrt{\frac{\mu R}{I_1}} \tau^k \sigma_{\min}(\mathcal{X}_\star)\cr
			\leq&~\Bigg( 1-\eta\left(1
			- \frac{\varepsilon}{1-\varepsilon}
			- \frac{6\times10^{-2}}{1-\varepsilon}\right)  + \frac{2\varepsilon}{1-\varepsilon} \left(  2-\eta\left(1
			- \frac{\varepsilon}{1-\varepsilon}
			- \frac{6\times10^{-2}}{1-\varepsilon}\right) \right)\Bigg) \sqrt{\frac{\mu R}{I_1}} \tau^k \sigma_{\min}(\mathcal{X}_\star)\cr
			\leq&~ (1-0.6\eta)\sqrt{\frac{\mu R}{I_1}} \tau^k \sigma_{\min}(\mathcal{X}_\star) ,
		\end{align*}
		In the last step, we use $\varepsilon=0.02$, $\alpha\leq\frac{1}{10^4\mu R^{1.5}{I_3}^{1.5}}\leq\frac{1}{10^4\mu RI_3}$, and $\frac{1}{4}\leq\eta\leq \frac{2}{3}$. A similar result can be derived for $\|(\mathcal{R}_{k+1}*\mathcal{Q}_{k+1}^{-\top}-\mathcal{R}_\star)*\mathit{\Sigma}_\star^{1/2}\|_{2,\infty}$. 
		The proof is concluded by substituting $\tau=1-0.6\eta$. 
	\end{enumerate}
\end{proof}

Now we have all the necessary ingredients, we proceed to prove Theorem~\ref{thm:local convergence} (local linear convergence).

\begin{proof}[\textbf{Proof of Theorem~\ref{thm:local convergence}}]
	This proof is done by induction.
	
	\paragraph{Base case} Since $\tau^0=1$, the assumed initial conditions satisfy the base case at $k=0$.
	
	\paragraph{Induction step} At the $k$-th iteration, we assume the following holds: 
	\begin{gather*}
		\distk \leq \frac{\varepsilon}{\sqrt{I_3}}  \tau^k \sigma_{\min}(\mathcal{X}_\star), \cr
		\sqrt{I_1}\|(\mathcal{L}_k*\mathcal{Q}_k-\mathcal{L}_\star)*\mathit{\Sigma}_\star^{1/2}\|_{2,\infty}  \lor \sqrt{I_2}\|(\mathcal{R}_k*\mathcal{Q}_k^{-\top}-\mathcal{R}_\star)*\mathit{\Sigma}_\star^{1/2}\|_{2,\infty} 
		\leq\sqrt{\mu R} \tau^k \sigma_{\min}(\mathcal{X}_\star)
	\end{gather*}
	By applying  Lemma~\ref{lm:X-X_K_inf_norm}, Lemma~\ref{lm:convergence_dist} and \ref{lm:convergence_incoher}, we obtain
	\begin{gather*}
		\|\mathcal{X}_\star-\mathcal{X}_k\|_\infty \leq \frac{3}{\sqrt{I_1I_2}} \mu R \tau^k \sigma_{\min}(\mathcal{X}_\star), \cr
		\distkplusone \leq \frac{\varepsilon}{\sqrt{I_3}} \tau^{k+1} \sigma_{\min}(\mathcal{X}_\star), \cr
		\sqrt{I_1}\|(\mathcal{L}_{k+1}*\mathcal{Q}_{k+1}-\mathcal{L}_\star)*\mathit{\Sigma}_\star^{1/2}\|_{2,\infty}  \lor \sqrt{I_2} \|(\mathcal{R}_{k+1}*\mathcal{Q}_{k+1}^{-\top}-\mathcal{R}_\star)*\mathit{\Sigma}_\star^{1/2}\|_{2,\infty}  
		\leq\sqrt{\mu R} \tau^{k+1} \sigma_{\min}(\mathcal{X}_\star)
	\end{gather*}
	This completes the induction step and finishes the proof.
\end{proof}

\subsection{Proof of Theorem~\ref{thm:initial} (guaranteed initialization) } \label{sec:guaranteed initialization}
Now we show the demonstrate of the initialization step in Algorithm~\ref{Alg: LGRTPCA1} satisfy the initial conditions required by Theorem~\ref{thm:local convergence}.
\begin{proof}[\textbf{Proof of Theorem~\ref{thm:initial}}]
	Firstly,  using the facts that  $\|\mathcal{A}* \mathcal{B}^{\To}\|_{\infty}\le \|\mathcal{A}\|_{2,\infty}\|\mathcal{B}\|_{2,\infty}$ and $\|\mathcal{A}* \mathcal{B}\|_{2,\infty}\le \|\mathcal{A}\|_{2}\|\mathcal{B}\|_{2,\infty}$  and by Assumption~\ref{as:incoherence}, we obtain
	\begin{align*}
		\|\mathcal{X}_\star\|_\infty\leq\|\mathcal{U}_\star\|_{2,\infty}\|\mathit{\Sigma}_\star\|_2\|\mathcal{V}_\star\|_{2,\infty}\leq\frac{\mu R}{\sqrt{I_1I_2}}\sigma_1(\mathcal{X}_\star).
	\end{align*}
	By invoking Lemma~\ref{lm:sparity} with $\mathcal{X}_{-1}=\bm{0}$, and $\left\|\mathcal{X}_{\star}\right\|_{\infty} \leq \zeta_0 \leq 2\left\|\mathcal{X}_{\star}\right\|_{\infty}$, we have 
	\begin{align} \label{eq:init_S_inf}
		\|\mathcal{S}_\star-\mathcal{S}_0\|_\infty\leq 3\frac{\mu R}{\sqrt{I_1I_2}}\sigma_1(\mathcal{X}_\star) \ \textnormal{and}\ \supp(\mathcal{S}_0)\subseteq\supp(\mathcal{S}_\star).
	\end{align}
	This implies $\mathcal{S}_\star-\mathcal{S}_0$ is an $\alpha$-sparse tensor. By applying Lemma~\ref{lm:bound of sparse tensor}, we have
	\begin{align*}
		&\|\mathcal{S}_\star-\mathcal{S}_0\|_2\leq \alpha I_3 \sqrt{I_1I_2}\|\mathcal{S}_\star-\mathcal{S}_0\|_\infty 
		\leq  3 \alpha \mu R I_3  \sigma_1(\mathcal{X}_\star)=3 \alpha \mu R I_3 \kappa \sigma_{\min}(\mathcal{X}_\star).
	\end{align*}
	According to the initialization step in Algorithm~\ref{Alg: LGRTPCA1}, $\mathcal{X}_0=\mathcal{L}_0\mathcal{R}_0^\top$ is the top-$R$  tensor approximation (see Definition~\ref{def: top-R}) of $\mathcal{Y}-\mathcal{S}_0$.
	Then we have
	\begin{align}\label{lm: X_norm2}
		\|\mathcal{X}_\star-\mathcal{X}_0\|_2
		&\leq \|\mathcal{X}_\star-(\mathcal{Y}-\mathcal{S}_0)\|_2 + \|(\mathcal{Y}-\mathcal{S}_0) -\mathcal{X}_0 \|_2 \cr
		&\leq 2\|\mathcal{X}_\star-(\mathcal{Y}-\mathcal{S}_0)\|_2 = 2\|\mathcal{S}_\star-\mathcal{S}_0\|_2 \leq 6 \alpha \mu R I_3\kappa \sigma_{\min}(\mathcal{X}_\star),
	\end{align}
	where the equality follows from the definition $\mathcal{Y}=\mathcal{X}_\star+\mathcal{S}_\star$. 
	
	In addition, by [Lemma~24,\cite{tong2021accelerating}],  we have
	\begin{align*}
		\dist(\widehat{\mathbf{L}}_0^{(i_3)},\widehat{\mathbf{R}}_0^{(i_3)};\widehat{\mathbf{L}}_\star^{(i_3)},\widehat{\mathbf{R}}_\star^{(i_3)}) &\leq \sqrt{\sqrt{2}+1}\|\widehat{\mathbf{X}}_\star^{(i_3)}-\widehat{\mathbf{X}}_0^{(i_3)}\|_{\fro},
	\end{align*}
	where  $i_3 = 1, \cdots, I_3$. When transformed into the time domain, we have
	\begin{align*}
		\distzero &\leq \sqrt{\sqrt{2}+1}\|\mathcal{X}_\star-\mathcal{X}_0\|_{\fro},
	\end{align*}
	Then we  obtain
	\begin{align*}
		\distzero &\leq \sqrt{\sqrt{2}+1}\|\mathcal{X}_\star-\mathcal{X}_0\|_{\fro} \leq \sqrt{(\sqrt{2}+1)2R}\|\mathcal{X}_\star-\mathcal{X}_0\|_2 \leq 14  \alpha \mu R^{1.5} I_3 \kappa \sigma_{\min}(\mathcal{X}_\star),
	\end{align*}
	where we use the fact that $\mathcal{X}_\star-\mathcal{X}_0$ has at most tubal rank-$2R$, along with  \eqref{lm: X_norm2}. Given that $\varepsilon=14 c_0$ and $\alpha\leq\frac{c_0}{\mu R^{1.5} I_3^{1.5} \kappa}$, we can prove the first claim
	\begin{equation} \label{eq:init_dist}
		\distzero \leq \frac{14c_0}{\sqrt{I_3}}\sigma_{\min}(\mathcal{X}_\star).
	\end{equation}
	
	Let $\varepsilon:=14c_0$. Now, we proceed to prove the second claim:
	\begin{align*}
		% \sqrt{I_1}\|(\mathcal{L}_0\mathcal{Q}_0-\mathcal{L}_\star)*\mathit{\Sigma}_\star^{1/2}\|_{2,\infty}  \lor \sqrt{I_2} \|(\mathcal{R}_0\mathcal{Q}_0^{-\top}-\mathcal{R}_\star)*\mathit{\Sigma}_\star^{1/2}\|_{2,\infty}\cr
		%     \leq  \sqrt{\mu R} \sigma_{\min}(\mathcal{X}_\star)\cr
		\sqrt{I_1}\|\Delta_{\mathcal{L}}*\mathit{\Sigma}_\star^{1/2}\|_{2,\infty} \lor \sqrt{I_2}\|\Delta_{\mathcal{R}}*\mathit{\Sigma}_\star^{1/2}\|_{2,\infty} \leq \sqrt{\mu R} \sigma_{\min}(\mathcal{X}_\star),
	\end{align*}
	where $\Delta_{\mathcal{L}}:=\mathcal{L}_0*\mathcal{Q}_0-\mathcal{L}_\star$ and $\Delta_{\mathcal{R}}:=\mathcal{R}_0*\mathcal{Q}_0^{-\top}-\mathcal{R}_\star$. For ease of notation, we also denote $\mathcal{L}_\natural=\mathcal{L}_0*\mathcal{Q}_0$, $\mathcal{R}_\natural=\mathcal{R}_0*\mathcal{Q}_0^{-\top}$, and $\Delta_{\mathcal{S}}=\mathcal{S}_0-\mathcal{S}_\star$ in the rest of this proof. 
	
	We will first focus on bounding $\|\Delta_{\mathcal{L}}\Sigma_\star^{1/2}\|_{2,\infty}$, and $\|\Delta_{\mathcal{R}}\Sigma_\star^{1/2}\|_{2,\infty}$ can be handled in a similar manner.
	According to the initialization step in Algorithm~\ref{Alg: LGRTPCA1}, \textit{i.e.}, $\mathcal{U}_0*\mathit{\Sigma}_0*\mathcal{V}_0^{\top}=\mathcal{D}_R(\mathcal{Y}-\mathcal{S}_0)=\mathcal{D}_R(\mathcal{X}_\star-\Delta_{\mathcal{S}})$, we have 
	\begin{align*}
		\mathcal{L}_0 = \mathcal{U}_0*\mathit{\Sigma}_0^{1/2} 
		&= (\mathcal{X}_\star-\Delta_{\mathcal{S}})*\mathcal{V}_0*\mathit{\Sigma}_0^{-1/2} = (\mathcal{X}_\star-\Delta_{\mathcal{S}})*\mathcal{R}_0*\mathit{\Sigma}_0^{-1} = (\mathcal{X}_\star-\Delta_{\mathcal{S}})*\mathcal{R}_0*(\mathcal{R}_0^\top*\mathcal{R}_0)^{-1}.
	\end{align*}
	Multiplying both sides by $\mathcal{Q}_0*\Sigma_\star^{1/2}$, we have
	\begin{align*}
		\mathcal{L}_\natural*\mathit{\Sigma}_\star^{1/2} = \mathcal{L}_0*\mathcal{Q}_0*\mathit{\Sigma}_\star^{1/2} 
		&= (\mathcal{X}_\star-\Delta_{\mathcal{S}})*\mathcal{R}_0*(\mathcal{R}_0^\top*\mathcal{R}_0)^{-1}* \mathcal{Q}_0*\mathit{\Sigma}_\star^{1/2} = (\mathcal{X}_\star-\Delta_{\mathcal{S}})*\mathcal{R}_\natural*(\mathcal{R}_\natural^\top*\mathcal{R}_\natural)^{-1} \mathit{\Sigma}_\star^{1/2}.
	\end{align*}
	Subtracting $\mathcal{X}_\star*\mathcal{R}_\natural*(\mathcal{R}_\natural^\top*\mathcal{R}_\natural)^{-1}*\mathit{\Sigma}_\star^{1/2}$  from both sides and using the fact that $\mathcal{L}_\star*\mathit{\Sigma}_\star^{1/2} =\mathcal{L}_\star*\mathcal{R}_\natural^\top*\mathcal{R}_\natural*(\mathcal{R}_\natural^\top*\mathcal{R}_\natural)^{-1}*\mathit{\Sigma}_{\star}^{1/2}$, we have
	\begin{align*}
		% &\mathcal{L}_\natural*\mathit{\Sigma}_\star^{1/2} - \mathcal{L}_\star\mathcal{R}_\star^\top*\mathcal{R}_\natural*(\mathcal{R}_\natural^\top*\mathcal{R}_\natural)^{-1}*\mathit{\Sigma}_\star^{1/2}\cr
		% = &(\mathcal{X}_\star-\Delta_{\mathcal{S}})\mathcal{R}_\natural*(\mathcal{R}_\natural^\top*\mathcal{R}_\natural)^{-1} \mathit{\Sigma}_\star^{1/2} -\mathcal{X}_\star\mathcal{R}_\natural*(\mathcal{R}_\natural^\top*\mathcal{R}_\natural)^{-1}*\mathit{\Sigma}_\star^{1/2} \cr
		&\Delta_{\mathcal{L}}*\mathit{\Sigma}_\star^{1/2}+\mathcal{L}_\star*\Delta_{\mathcal{R}}^\top*  \mathcal{R}_\natural*(\mathcal{R}_\natural^\top*\mathcal{R}_\natural)^{-1}*\mathit{\Sigma}_\star^{1/2}
		= -\Delta_{\mathcal{S}}*\mathcal{R}_\natural*(\mathcal{R}_\natural^\top*\mathcal{R}_\natural)^{-1}*\mathit{\Sigma}_\star^{1/2} ,
	\end{align*}
	% where the left operand of last step uses the fact  $\mathcal{L}_\star*\mathit{\Sigma}_\star^{1/2} =\mathcal{L}_\star\mathcal{R}_\natural^\top*\mathcal{R}_\natural*(\mathcal{R}_\natural^\top*\mathcal{R}_\natural)^{-1}*\mathit{\Sigma}_{\star}^{1/2}$.
	Thus, 
	\begin{align*}
		\| \Delta_{\mathcal{L}}*\mathit{\Sigma}_\star^{1/2} \|_{2,\infty}
		\leq& \|\mathcal{L}_\star*\Delta_{\mathcal{R}}^\top  \mathcal{R}_\natural*(\mathcal{R}_\natural^\top*\mathcal{R}_\natural)^{-1} *\mathit{\Sigma}_\star^{1/2}\|_{2,\infty}
		+ \|\Delta_{\mathcal{S}}*\mathcal{R}_\natural*(\mathcal{R}_\natural^\top*\mathcal{R}_\natural)^{-1} *\mathit{\Sigma}_\star^{1/2}\|_{2,\infty} 
		:= \mathfrak{J}_1 + \mathfrak{J}_2
	\end{align*}
	
	\paragraph{\textbf{Bound of $\mathfrak{J}_1$}} By Assumption~\ref{as:incoherence},  Lemma~\ref{lm:Delta_F_norm} and  Lemma~\ref{lm:L_R_scale_sigma_half}, we get
	\begin{align*}
		\mathfrak{J}_1 &\leq \|\mathcal{L}_\star*\mathit{\Sigma}_\star^{-1/2}\|_{2,\infty} \|\Delta_{\mathcal{R}}*\mathit{\Sigma}_\star^{1/2}\|_2  \|\mathcal{R}_\natural*(\mathcal{R}_\natural^\top*\mathcal{R}_\natural)^{-1} \mathit{\Sigma}_\star^{1/2}\|_2 \leq \sqrt{\frac{\mu R}{I_1}}\frac{\varepsilon}{1-\varepsilon}\sigma_{\min}(\mathcal{X}_\star)
	\end{align*}
	% where implies $\|\Delta_{\mathcal{R}}*\mathit{\Sigma}_\star^{1/2}\|_2 \leq \varepsilon \sigma_{\min}(\mathcal{X}_\star)$, and Lemma~\ref{lm:L_R_scale_sigma_half} implies $\|\mathcal{R}_\natural*(\mathcal{R}_\natural^\top*\mathcal{R}_\natural)^{-1} \mathit{\Sigma}_\star^{1/2}\|_2 \leq\frac{1}{1-\varepsilon}$, given \eqref{eq:init_dist} holds.
	
	\paragraph{\textbf{Bound of $\mathfrak{J}_2$}} \eqref{eq:init_S_inf} implies $\Delta_{\mathcal{S}}$ is an $\alpha$-sparse tensor. Moreover, by \eqref{eq:init_dist}, Lemma~\ref{lm:bound of sparse tensor} and \ref{lm:L_R_scale_sigma_half}, we have
	
	% \begin{align*}
		%     \mathfrak{J}_2 &\leq \|\Delta_{\mathcal{S}}\|_{2,\infty}\|\mathcal{R}_\natural*(\mathcal{R}_\natural^\top*\mathcal{R}_\natural)^{-1} \mathit{\Sigma}_\star^{1/2}\|_2 \cr
		%     &\leq \sqrt{\alpha n}\|\Delta_{\mathcal{S}}\|_\infty \|\mathcal{R}_\natural*(\mathcal{R}_\natural^\top*\mathcal{R}_\natural)^{-1} \mathit{\Sigma}_\star^{1/2}\|_2 \cr
		%     &\leq \sqrt{\alpha n} \frac{2\mu R}{n(1-\varepsilon)} \sigma_1(\mathcal{X}_\star)
		% \end{align*}
	\begin{align*}
		\mathfrak{J}_2 &\leq \|\Delta_{\mathcal{S}}\|_{1,\infty}\|\mathcal{R}_\natural*\mathit{\Sigma}_\star^{-1/2}\|_{2,\infty}\|\mathit{\Sigma}_\star^{1/2}(\mathcal{R}_\natural^\top*\mathcal{R}_\natural)^{-1} \mathit{\Sigma}_\star^{1/2}\|_2 \cr
		&\leq \alpha I_2 I_3\|\Delta_{\mathcal{S}}\|_\infty\|\mathcal{R}_\natural*\mathit{\Sigma}_\star^{-1/2}\|_{2,\infty}\|\mathcal{R}_\natural*(\mathcal{R}_\natural^\top*\mathcal{R}_\natural)^{-1} \mathit{\Sigma}_\star^{1/2}\|_2^2 \cr
		&\leq \alpha I_2 I_3 \frac{3 \mu R}{ \sqrt{I_1I_2}}\sigma_1(\mathcal{X}_\star)\frac{1}{(1-\varepsilon)^2}\|\mathcal{R}_\natural*\mathit{\Sigma}_\star^{-1/2}\|_{2,\infty} \cr
		&\leq \frac{3 \alpha \mu R I_3\kappa}{(1-\varepsilon)^2} \sqrt{\frac{I_2}{I_1}}\left(\sqrt{\frac{\mu R}{I_2}}+\|\Delta_{\mathcal{R}}*\mathit{\Sigma}_\star^{-1/2}\|_{2,\infty}\right) \sigma_{\min}(\mathcal{X}_\star)
	\end{align*}
	where we use the fact that $\|\mathcal{A}*\mathcal{B}\|_{2,\infty}\leq\|\mathcal{A}\|_{1,\infty}\|\mathcal{B}\|_{2,\infty}$  in the first step. 
	Note that $\|\Delta_{\mathcal{R}}*\mathit{\Sigma}_\star^{-1/2}\|_{2,\infty}\leq \frac{\|\Delta_{\mathcal{R}}*\mathit{\Sigma}_\star^{1/2}\|_{2,\infty}}{\sigma_{\min}(\mathcal{X}_\star)}$, we have 
	\begin{align*}
		\| \Delta_{\mathcal{L}}*\mathit{\Sigma}_\star^{1/2} \|_{2,\infty}
		\leq &\left(\frac{\varepsilon}{1-\varepsilon}+\frac{3 \alpha \mu R I_3\kappa}{(1-\varepsilon)^2}\right) \sqrt{\frac{\mu R}{I_1}} \sigma_{\min}(\mathcal{X}_\star) + \frac{3 \alpha \mu R  I_3\kappa}{(1-\varepsilon)^2} \sqrt{\frac{I_2}{I_1}}\|\Delta_{\mathcal{R}}*\mathit{\Sigma}_\star^{1/2}\|_{2,\infty} .
	\end{align*}
	Similarly, one can see
	\begin{align*}
		\| \Delta_{\mathcal{R}}*\mathit{\Sigma}_\star^{1/2} \|_{2,\infty}
		\leq& \left(\frac{\varepsilon}{1-\varepsilon}+\frac{3 \alpha \mu R I_3\kappa}{(1-\varepsilon)^2}\right) \sqrt{\frac{\mu R }{I_2}} \sigma_{\min}(\mathcal{X}_\star) + \frac{3 \alpha \mu R I_3\kappa}{(1-\varepsilon)^2}\sqrt{\frac{I_1}{I_2}}\|\Delta_{\mathcal{L}}*\mathit{\Sigma}_\star^{1/2}\|_{2,\infty}  .
	\end{align*}
	Therefore, substituting $\varepsilon=14 c_0$, we have
	\begin{align*}
		&\quad~\sqrt{I_1}\|\Delta_{\mathcal{L}}*\mathit{\Sigma}_\star^{1/2} \|_{2,\infty} \lor \sqrt{I_2}\| \Delta_{\mathcal{R}}*\mathit{\Sigma}_\star^{1/2} \|_{2,\infty}  \cr
		&\leq \frac{(1-\varepsilon)^2}{(1-\varepsilon)^2-3\alpha \mu R I_3 \kappa}\left(\frac{\varepsilon}{1-\varepsilon}+\frac{3 \alpha \mu R I_3 \kappa}{(1-\varepsilon)^2}\right) \sqrt{\mu R } \sigma_{\min}(\mathcal{X}_\star) \cr
		&\leq \frac{(1-14c_0)^2}{(1-14c_0)^2-3c_0}\left(\frac{14c_0}{1-14c_0}+\frac{3c_0}{(1-14c_0)^2}\right) \sqrt{\mu R} \sigma_{\min}(\mathcal{X}_\star)  \cr
		&\leq \sqrt{\mu R} \sigma_{\min}(\mathcal{X}_\star),
	\end{align*}
	as long as $c_0\leq \frac{1}{38}$. 
	%given $\varepsilon=0.02$ and $\alpha\leq\frac{1}{500\mu R^{1.5} \kappa}$.
	
	This finishes the proof.
\end{proof}

\end{document}